\documentclass{article}

\usepackage{microtype}
\usepackage{graphicx}
\usepackage{subcaption}
\usepackage{booktabs} 
\usepackage{multirow}
\usepackage{dirtytalk}
\usepackage{enumitem}

\usepackage{hyperref}

\usepackage[accepted]{icml2026}

\usepackage{amsmath}
\usepackage{amssymb}
\usepackage{mathtools}
\usepackage{amsthm}

\DeclareFontEncoding{LS1}{}{}
\DeclareFontSubstitution{LS1}{stix}{m}{n}
\DeclareSymbolFont{stixletters}{LS1}{stix}{m}{it}
\DeclareMathAccent{\backvec}{\mathord}{stixletters}{"91}

\usepackage[capitalize,noabbrev]{cleveref}

\theoremstyle{plain}
\newtheorem{theorem}{Theorem}[section]
\newtheorem{proposition}[theorem]{Proposition}

\theoremstyle{definition}
\newtheorem{definition}[theorem]{Definition}

\theoremstyle{remark}

\usepackage[textsize=tiny]{todonotes}

\icmltitlerunning{Stable and Near-Reversible Diffusion ODE Solvers for Image Editing}

\begin{document}

\twocolumn[
  \icmltitle{Stable and Near-Reversible Diffusion ODE Solvers for Image Editing}



  \icmlsetsymbol{equal}{*}

  \begin{icmlauthorlist}
    \icmlauthor{Barbora Barancikova}{com}
    \icmlauthor{Daniil Shmelev}{math}
    \icmlauthor{Cristopher Salvi}{math}
  \end{icmlauthorlist}

  \icmlaffiliation{com}{Department of Computing, Imperial College London, London, United Kingdom}
  \icmlaffiliation{math}{Department of Mathematics, Imperial College London, London, United Kingdom}

  \icmlcorrespondingauthor{Barbora Barancikova}{b.barancikova23@imperial.ac.uk}

  \icmlkeywords{Machine Learning, ICML}

  \vskip 0.3in
]



\printAffiliationsAndNotice{}  

\begin{abstract}
The inversion of diffusion models plays a central role in image editing. Algebraically reversible ODE solvers provide an appealing approach to diffusion inversion for text-guided image editing, by eliminating the inversion error inherent in DDIM-based editing pipelines. However, empirical results indicate that reversibility alone is insufficient. As edits require larger semantic or visual changes, reversible diffusion solvers often exhibit instabilities and suffer sharp drops in output quality. In this paper, we show that the trade-off between exact reversibility and numerical stability manifests empirically as a trade-off between background preservation and prompt alignment in image editing. We then investigate the use of near-reversible Runge--Kutta methods as a more stable alternative to exactly reversible diffusion schemes. When combined with a vector-field smoothing strategy, the resulting approach improves edit fidelity, remains stable under large edits, and largely retains the background-preservation benefits of reversible solvers.
\end{abstract}

\section{Introduction}
Diffusion probabilistic models \citep{sohl2015deep, ho2020denoising, song2019generative, song2020score}
have become a standard tool for text-guided image editing \citep{Rombach_2022_CVPR, nichol2021glide}.
A common approach is \textit{inversion-based editing}---given a real image, the probability-flow ODE \citep{song2020score} is integrated backwards in time to a latent noise state conditional on a source prompt, after which the forward ODE is integrated under a new target prompt to obtain an edited image \citep{song2020denoising, mokady2023null, ju2023direct}.

\begin{figure}[H]
    \centering
    \begin{subfigure}[t]{\linewidth}
        \centering
        \includegraphics[width=\linewidth]{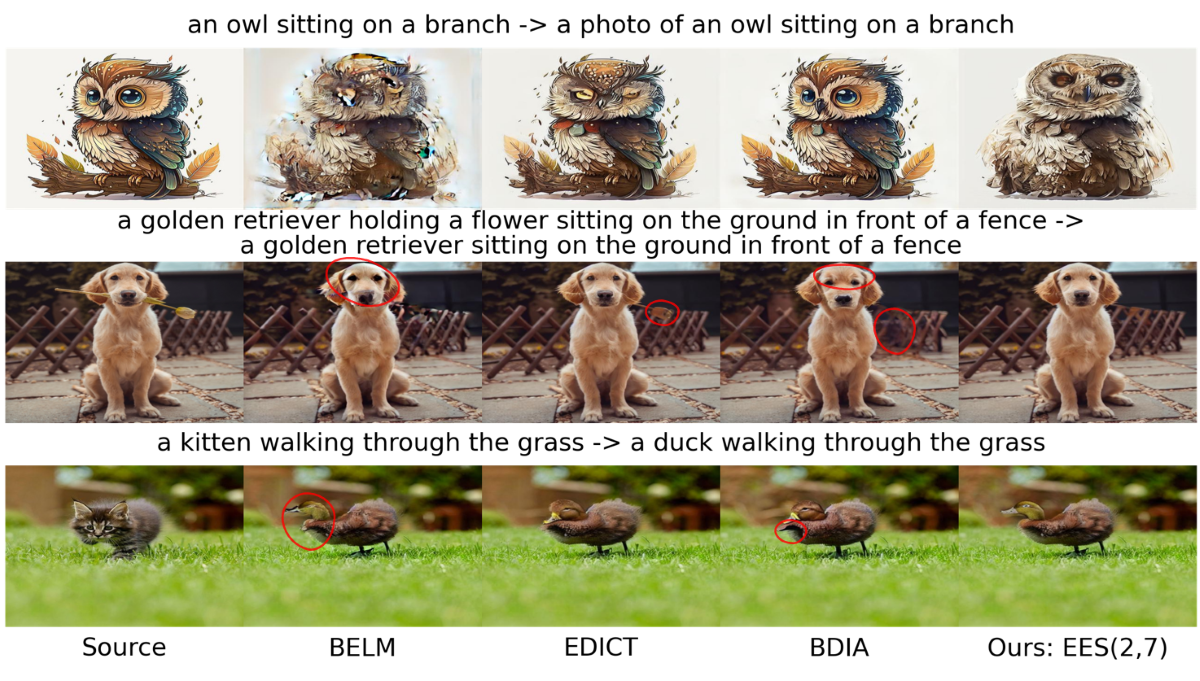}
        \caption{\textbf{Small Edits}}
        \label{fig:example_figure_small}
    \end{subfigure}
    \begin{subfigure}[t]{\linewidth}
        \centering
        \includegraphics[width=\linewidth]{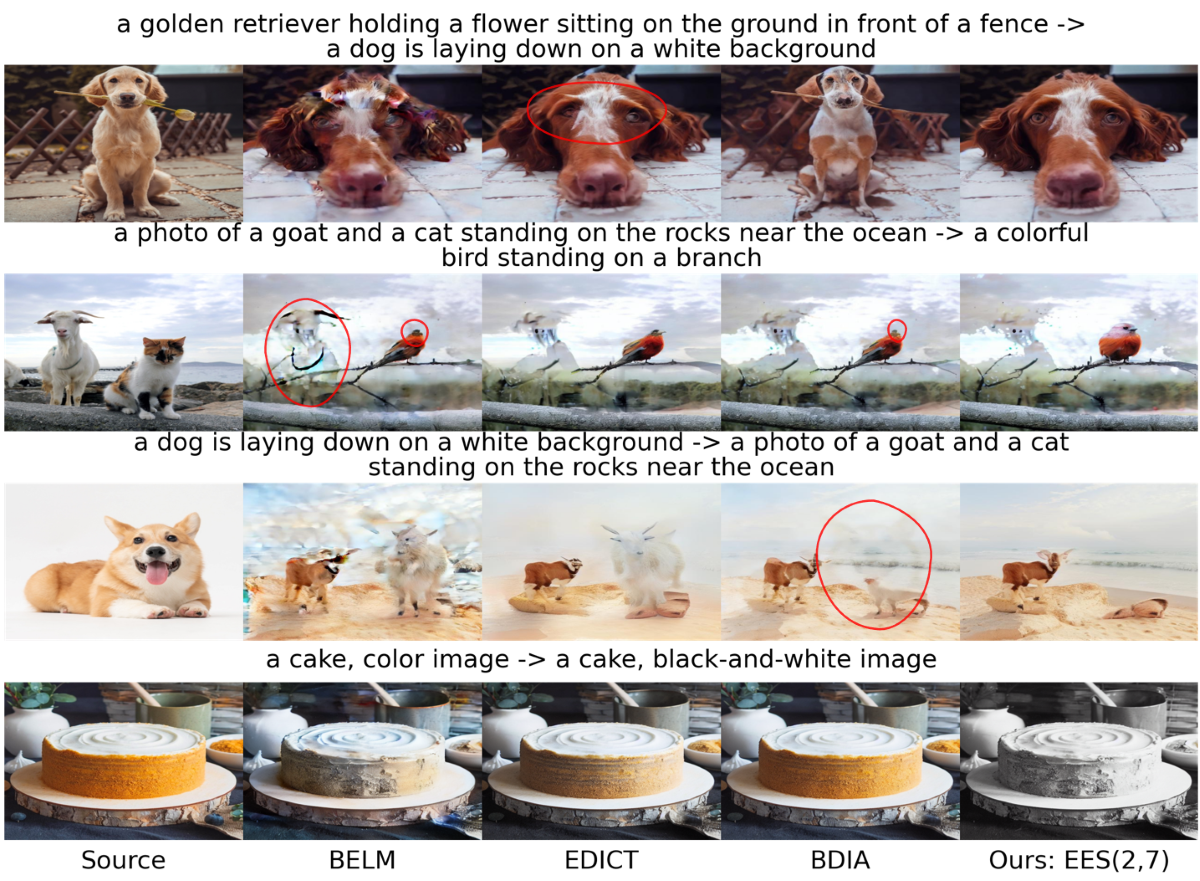}
        \caption{\textbf{Large Edits}}
        \label{fig:example_figure_large}
    \end{subfigure}

    \caption{
    Qualitative comparison of (near) reversible solvers on image editing tasks.
    (\textbf{\subref{fig:example_figure_small}}) Standard PIE-Bench editing tasks (\cref{sec:experiments_editing}). Here, we use Smooth Diffusion for BELM, EDICT, and EES.
    (\textbf{\subref{fig:example_figure_large}}) PIE-Bench images with large prompt deviations (\cref{sec:large_edits_and_stability}) and greyscale conversion (\cref{sec:greyscale}). Here, we use Smooth Diffusion for all methods. See extended grids in \cref{fig:edit_grid_owl_extended,fig:edit_grid_retriever_extended,fig:large_edits_retriever_extended,fig:black_and_white_extended}.
    }
    \label{fig:example_figure}
\end{figure}

In this setting, the choice of numerical solver is vital, since errors resulting from inexact inversion compound through the editing process, especially under classifier-free guidance \citep{ho2021classifierfree}. For instance, classical DDIM inversion incurs a systematic trajectory mismatch, since the reverse update is not the algebraic inverse of the forward update. A long line of work aims to reduce the resulting diffusion inversion error \citep{han2023improving, miyake2025negative, guo2024smooth}. Recently, \textit{algebraically reversible} (or simply \textit{reversible}) ODE solvers such as EDICT \citep{wallace2023edict}, BDIA \citep{zhang2024exact}, BELM \citep{wang2024belm}, and Rex \citep{blasingame2025rex} have been proposed as an alternative to DDIM inversion, yielding improved reconstructions and background preservation in editing pipelines. However, our empirical evidence suggests that reversibility alone is not enough. On standard editing tasks, reversible solvers exhibit a clear trade-off between background preservation and prompt alignment. This becomes particularly apparent when the editing trajectory departs substantially from the inversion trajectory, in which case many reversible solvers suffer sharp drops in edit quality.

To benchmark solver performance for image editing, we distinguish two evaluation settings. The first is the widely used \textit{small edits} setting of the PIE-Bench image editing benchmark introduced by \citet{ju2023direct}, where edits are usually local to a subset of the prompt and image (see \cref{sec:experiments_editing}). In addition, we introduce a second \textit{large edits} regime, where the edited prompt induces a large semantic or visual change, forcing the sampling
trajectory away from the original inversion path (see \cref{sec:large_edits_and_stability}). Figure~\ref{fig:example_figure} previews the key findings.

In \cref{sec:stability}, we argue that the failure modes of existing reversible schemes, illustrated in Figure~\ref{fig:example_figure}, can be attributed to numerical stability. To address these issues, we propose using the recently introduced class of \textit{near-reversible Explicit and Effectively Symmetric (EES) Runge--Kutta methods} \citep{shmelev2025explicit} as diffusion solvers. EES schemes relax exact reversibility, instead enforcing only \emph{approximate} symmetry up to an acceptably low error tolerance. This relaxation allows them to retain an explicit Runge--Kutta form and benefit from much more favourable stability properties. Since the inversion error depends on the regularity of the underlying trajectory, we additionally combine EES schemes with vector-field smoothing strategies previously proposed for DDIM inversion. Together, this yields a family of solvers that is competitive with exactly reversible solvers on standard editing benchmarks and substantially more stable under challenging edits.

In addition to proposing EES solvers, our paper clarifies the broader design
space of reversible diffusion solvers for editing. We investigate guidance and
smoothing heuristics such as Smooth Diffusion \citep{guo2024smooth}, Negative
Prompt Inversion \citep{miyake2025negative}, and Proximal Guidance
\citep{han2023improving}, which have previously been treated mainly as
orthogonal DDIM-based techniques. We show that these methods can also yield
substantial gains for reversible solvers, and characterise when they help.
We also benchmark Reversible Heun and McCallum--Foster methods, originally
introduced outside the diffusion models literature, and run extensive ablations
over ODE parametrisations, step schedules, and solver formulations.

\textbf{We summarise our main contributions as follows:}
\vspace{-0.4em}
\begin{enumerate}[leftmargin=*, itemsep=0.2em, topsep=0.2em, parsep=0pt, partopsep=0pt]
    \item We provide a side-by-side evaluation of (near) reversible diffusion solvers for image editing, revealing a trade-off curve between background preservation and prompt alignment.

    \item We propose using near-reversible EES Runge--Kutta methods for inversion-based diffusion editing. For \textbf{small edits}, when combined with vector-field smoothing, these methods retain much of the background-preservation benefit of exactly reversible solvers while improving edit fidelity.

    \item For \textbf{large edits}, we show that EES is the most robust (near) reversible family in our study, maintaining output quality under large trajectory deviations where most exactly reversible solvers degrade substantially. 
\end{enumerate}
\vspace{-0.4em}

\section{Preliminaries}
\subsection{Reversible Solvers}
\label{sec:reversible_solvers}

In the setting of an autonomous ODE $dy_t = f(y_t)dt$, a one-step method
\begin{equation}\label{eq:ode}
    y_{n+1} = y_n + h\Phi_h(y_n)
\end{equation}
is said to be \emph{symmetric} if $\Phi_{-h} = \Phi_h^{-1}$. That is, a step of the method $\Phi$ applied to $y_{n+1}$ with a negative step size exactly recovers $y_n$. Classical symmetric numerical integration methods, such as those of Runge--Kutta type, have been studied extensively for their favourable long-term behaviour, especially in the context of Hamiltonian systems \cite{hairer2006geometric, feng2010symplectic, Chartier2015}. More recently, symmetric methods have proven useful in the training of neural differential equations \cite{chen2018neural}, where the ability to recover the full solution trajectory from the terminal point allows for memory-efficient backpropagation.\par

A major drawback of classical symmetric methods lies in their low efficiency. It is well-known that any symmetric Runge--Kutta scheme is necessarily implicit, and more generally any symmetric parasitism-free general linear method cannot be explicit \cite{butcher2016symmetric}. This inefficiency makes classical schemes unsuitable for applications in machine learning. To solve this problem, several new \emph{algebraically reversible} schemes, typically of low-order, were introduced. The asynchronous leapfrog integrator (ALF) \cite{zhuang2021mali} for Neural ODEs overcomes the problem of implicit schemes by tracking an auxiliary state as part of the integration. A similar approach is taken by the Reversible Heun method \cite{kidger2021efficient}, where the technique of tracking an auxiliary state is applied to form a reversible version of Heun's method for SDEs.\par

Whilst both methods prove to be efficient, they are inherently unstable. This instability is a significant bottleneck in certain practical applications \cite{shmelev2025explicitneural, zhang2021path}. 
In \cite{mccallum2024efficient}, a new method was proposed for transforming any existing one-step integration method into one which is reversible. Whilst these methods offer increased stability over the ALF and Reversible Heun solvers, their stability domains are still typically much smaller than those of classical methods. This method was later adapted to the case of diffusion models, where the approach forms the basis for Rex solvers \cite{blasingame2025rex}.

A potential solution to the issues of stability is offered by Explicit and Effectively Symmetric (EES) Runge--Kutta schemes \citep{shmelev2025explicit}. Whilst symmetric Runge--Kutta methods are necessarily implicit, the authors show that it is possible to derive explicit schemes which are \emph{almost} symmetric, up to a given tolerance level. The resulting schemes are virtually indistinguishable from perfectly symmetric schemes for sufficiently smooth problems and maintain stability domains which are comparable to those of classical schemes such as RK4 and RK5. It was shown in \citet{shmelev2025explicitneural} that EES schemes produce superior results to Reversible Heun and McCallum-Foster methods when applied to stiff or highly volatile Neural SDEs.

\subsection{Diffusion ODE and Inversion-Driven Image Editing}
\paragraph{Diffusion models.}
Diffusion probabilistic models (DPMs) \citep{sohl2015deep, ho2020denoising, song2020score} define a forward noising process
that gradually perturbs data \(x_0 \sim q_0\) into a tractable Gaussian at time \(T\).
In the variance-preserving (VP) setting, the forward marginals satisfy
\begin{equation}
q_{t|0}(x_t \mid x_0)=\mathcal{N}(x_t;\,\alpha_t x_0,\,\sigma_t^2 I),
\qquad \alpha_t,\sigma_t>0,
\label{eq:vp_marginal}
\end{equation}
or equivalently \(x_t=\alpha_t x_0+\sigma_t \varepsilon\) with \(\varepsilon\sim\mathcal{N}(0,I)\).
We fix \(\alpha_t^2 + \sigma_t^2 = 1\). DPMs aim to learn a denoiser \(\epsilon_\theta(x_t,t,c)\) that predicts the noise \(\varepsilon\) from \(x_t\), optionally dependent on some condition \(c\).
At test time, generation can be performed by solving a reverse-time stochastic differential equation (SDE) \citep{song2020score} that replaces the intractable score term \(\nabla_x \log q_t(x)\)
with the model prediction.

\paragraph{Probability-flow ODE.}
A convenient deterministic alternative to the reverse-time SDE is the probability-flow ODE \citep{song2020score},
which has the same marginals \(q_t\) as the reverse-time SDE but removes the stochastic term.
For the VP parametrisation \eqref{eq:vp_marginal}, using the standard score approximation
\(\nabla_x \log q_t(x_t) \approx -\hat\epsilon_\theta(x_t,t,c)/\sigma_t\),
the probability-flow ODE can be written in the form
\begin{equation}
\frac{d x_t}{d t}
= \frac{d\log\alpha_t}{dt}\left(x_t - \frac{\hat\epsilon_\theta(x_t,t,c)}{\sigma_t}\right),
\quad t\in[0,T].
\label{eq:pf_ode_logalpha}
\end{equation}
Sampling corresponds to drawing \(x_T \sim \mathcal{N}(0,\sigma_T^2 I)\) and integrating \eqref{eq:pf_ode_logalpha} from \(t=T\) to \(t=0\).

\paragraph{Classifier-free guidance.}
When sampling a diffusion model conditional on \(c\) (often a text prompt), classifier-free guidance \citep{ho2021classifierfree} has proven particularly effective for improving conditional sample fidelity. A guided noise prediction is formed by combining conditional and unconditional outputs
(\(c_{\texttt{null}}:=\texttt{<empty string>}\)):
\begin{equation}
\hat\epsilon_\theta(x_t,t,c)
= g\epsilon_\theta(x_t,t,c)+(1-g)\epsilon_\theta(x_t,t,c_{\texttt{null}}),
\label{eq:cfg}
\end{equation}
where \(g\ge 0\) is the \emph{guidance weight}.

\paragraph{Inversion-driven image editing.}
A common approach to text-guided image editing with diffusion models is to first invert a real image into the model’s latent noise space and then resample under a new, ``edited''  prompt \citep{song2020denoising, hertz2022prompt, mokady2023null}. Given a real input image \(x_0\) with source prompt \(c_{\texttt{src}}\) and a target prompt \(c_{\texttt{trg}}\), we perform editing via ODE inversion by integrating \eqref{eq:pf_ode_logalpha} in the backward direction from \(t=0\) to \(t=T\) with \(c_{\texttt{src}}\) to obtain \(x_T\), and then integrating \eqref{eq:pf_ode_logalpha} forward from \(t=T\) to \(t=0\) starting from the same \(x_T\) but using \(c_{\texttt{trg}}\), yielding the edited image \(\tilde x_0\).

\subsection{Inversion Samplers}
To edit images as described above, we need to be able to invert a diffusion solver, i.e., run it in the backward direction. In this section, we introduce several inversion strategies, ranging from DDIM inversion to reversible solvers.

\paragraph{DDIM inversion.}
Define \(\bar x(t):=x(t)/\alpha_t\) and \(\bar\sigma(t):=\sigma_t/\alpha_t\).
As summarised in \citet{wang2024belm}, the probability-flow ODE can be written in the form:
\begin{equation}
\frac{d\bar x}{d\bar\sigma}(t)
=
\bar\epsilon_\theta(\bar x(t),\bar\sigma(t),c),
\quad
\bar\epsilon_\theta(\bar x,\bar\sigma,c)
:=
\epsilon_\theta(\alpha_t \bar x, t, c).
\label{eq:diffusion_ivp_barsigma}
\end{equation}

Let \(t_N=T>\cdots>t_0=0\) and \(\bar\sigma_N>\cdots>\bar\sigma_0\), and define steps
\(h_i:=\bar\sigma_i-\bar\sigma_{i-1}>0\).
Then the DDIM sampler corresponds to an explicit Euler step:
\begin{equation}
\bar x_{i-1}
=
\bar x_i - h_i\,\bar\epsilon_\theta(\bar x_i,\bar\sigma_i,c).
\label{eq:ddim_euler}
\end{equation}
In the original \(x\)-variable this is equivalently
\begin{equation}
x_{i-1}
=
\frac{\alpha_{i-1}}{\alpha_i}x_i
+
\Bigl(\sigma_{i-1}-\frac{\alpha_{i-1}}{\alpha_i}\sigma_i\Bigr)\,
\epsilon_\theta(x_i,t_i,c).
\label{eq:ddim}
\end{equation}
\emph{DDIM inversion} involves running \eqref{eq:ddim} in the reverse direction from \(t_0\) to \(t_N\) using the same scheme \eqref{eq:ddim_euler}.

\paragraph{DDIM is not algebraically reversible.}
It is clear that, since the Euler update rule does not satisfy the symmetry property in \cref{sec:reversible_solvers}, the DDIM inversion approach is not exact---the forward and backward maps evaluate the network $\epsilon_\theta$ at different points \(x_i\) and \(x_{i-1}\). In inversion-driven editing, this mismatch accumulates along the trajectory, leading to low background preservation and a decrease in overall edit quality. Multiple works also observe that the degradation becomes substantially worse under classifier-free guidance, which amplifies per-step errors \citep{wallace2023edict, hertz2022prompt}. 

\paragraph{Mitigating DDIM inversion error.}
Many works reduce DDIM inversion drift via inference-time or model-level interventions. Smooth Diffusion \citep{guo2024smooth} fine-tunes the model to encourage smoother latent geometry, improving inversion and downstream edits. Other methods adjust conditioning during inversion, e.g. null-text inversion (NTI) \citep{mokady2023null}, negative-prompt inversion (NPI) \citep{miyake2025negative} and proximal guidance \citep{han2023improving}. We summarise the latter two in \cref{appendix:smoothing_methods}. Some methods also store and reuse information from the source trajectory to guide editing, e.g.\ Prompt-to-Prompt \citep{hertz2022prompt}, MasaCtrl \citep{cao2023masactrl}, pix2pic-zero \citep{parmar2023zero}, plug-and-play \citep{tumanyan2023plug}, and Direct Inversion \citep{ju2023direct}. These techniques do not make DDIM algebraically reversible, but often reduce the practical impact of inversion errors.

\paragraph{Reversible diffusion samplers.}
Motivated by the inexact inversion of DDIM, several works design samplers whose forward and backward updates are algebraic inverses.

\textbf{EDICT} \citep{wallace2023edict} introduces an auxiliary diffusion state \(y_i\) coupled with \(x_i\) via a mixing coefficient \(p\in(0,1)\). Writing
\(a_i := \alpha_{i-1}/\alpha_i\) and \(b_i := \sigma_{i-1} - \frac{\alpha_{i-1}}{\alpha_i}\sigma_i\),
one EDICT step takes the form
\begin{equation}
\begin{aligned}
x_i^{\mathrm{inter}} &= a_i x_i + b_i\,\epsilon_\theta(y_i, t_i, c), \\
y_i^{\mathrm{inter}} &= a_i y_i + b_i\,\epsilon_\theta(x_i^{\mathrm{inter}}, t_i, c), \\
x_{i-1} &= p\,x_i^{\mathrm{inter}} + (1-p)\,y_i^{\mathrm{inter}}, \\
y_{i-1} &= p\,y_i^{\mathrm{inter}} + (1-p)\,x_{i-1}.
\end{aligned}
\label{eq:edict_core}
\end{equation}

\textbf{BDIA} \citep{zhang2024exact} instead derives an exact-inversion two-step recursion by combining a forward DDIM increment with a backward DDIM increment. Let
\(\Delta(i\!\to\!j\,|\,x_i)\) denote the DDIM increment from index \(i\) to \(j\) computed from \(x_i\) (i.e., \(x_j = x_i + \Delta(i\!\to\!j\,|\,x_i)\)).
Then BDIA computes \(x_{i-1}\) from \(x_{i+1}\) and \(x_i\) as
\begin{align*}
x_{i-1}
=\,
&x_{i+1}
-
(1-\gamma)(x_{i+1}-x_i) \\
&- \gamma\,\Delta(i\!\to\!i+1\,|\,x_i)
+
\Delta(i\!\to\!i-1\,|\,x_i),
\end{align*}
for a mixing parameter \(\gamma\in[0,1]\).

\textbf{BELM} \citep{wang2024belm} unifies these designs by showing that EDICT and BDIA fall into a class of \emph{bidirectional explicit linear multi-step} methods for \eqref{eq:diffusion_ivp_barsigma}. In particular, a two-step BELM sampler can be written as
\begin{equation}
\bar x_{i-1}
=
a_{i,2}\,\bar x_{i+1}
+
a_{i,1}\,\bar x_i
+
b_{i,1}\,h_i\,\bar\epsilon_\theta(\bar x_i,\bar\sigma_i,c).
\label{eq:belm_2step}
\end{equation}
The exact inversion property follows from the fact that the same linear relation can be solved in the opposite direction to recover \(\bar x_{i+1}\).
The authors derive an optimal choice of coefficients \(a_{i,1}=\frac{h^2_{i+1}-h^2_i}{h^2_{i+1}}\), \(a_{i,2}=\frac{h^2_i}{h^2_{i+1}}\), and \(b_{i,1}=-\frac{h_i+h_{i+1}}{h_{i+1}}\), forming the O-BELM variant of the scheme \citep[Eq.~(18)]{wang2024belm}.

\textbf{Rex} \citep{blasingame2025rex} combines the Lawson method with the
McCallum--Foster reversible solver for diffusion ODE and SDE sampling. Here, we
focus on the ODE variant in the noise-prediction parameterisation. Rex uses the
reparameterised time variable \(\tau=\sigma/\alpha\). We denote the corresponding
positive- and negative-step Princeps increments, induced by a chosen explicit
Runge--Kutta method, by \(\Psi_{h_i}\) and \(\Psi_{-h_i}\), where
\(h_i=\tau_{i-1}-\tau_i\). Mapping back to the original diffusion state gives
\begin{equation}
\begin{aligned}
x_{i-1}
&=
\frac{\alpha_{i-1}}{\alpha_i}
\bigl(\zeta x_i + (1-\zeta)\hat x_i\bigr)
+
\alpha_{i-1}\Psi_{h_i}(\tau_i,\hat x_i),
\\
\hat x_{i-1}
&=
\frac{\alpha_{i-1}}{\alpha_i}\hat x_i
-
\alpha_{i-1}\Psi_{-h_i}(\tau_{i-1},x_{i-1}).
\end{aligned}
\label{eq:rex_core}
\end{equation}
Here \(\zeta\) is a coupling parameter.

\subsection{Alternative ODE Parametrisations}
\label{sec:alternative_ode_parametrisations}
\paragraph{Half-logSNR parametrisation (\(\epsilon\)-prediction).}
Let \(\lambda := \log(\alpha_t/\sigma_t)\), which is strictly decreasing and invertible. For VP schedules one can show that \(\frac{d\log \alpha_\lambda}{d\lambda}=\sigma_\lambda^2\), and the probability-flow ODE can be written as
\begin{equation}
\frac{dx_\lambda}{d\lambda}
=
\sigma_\lambda^2\,x_\lambda-\sigma_\lambda\,\epsilon_\theta(x_\lambda,\lambda).
\label{eq:ode_lambda_eps_main}
\end{equation}
This is the \(\lambda\)-domain ODE used by DPM \citep{lu2022dpm}.

\paragraph{Half-logSNR parametrisation (\(x_0\)-prediction).}
DPM-Solver++ \citep{lu2025dpm} solves the equivalent ODE expressed in terms of the data predictor
\begin{equation}
    x_\theta(x_t,t):=\frac{x_t-\sigma_t\epsilon_\theta(x_t,t)}{\alpha_t}.
    \label{eq:x0_predictor}
\end{equation}
Substituting \(\sigma_\lambda\epsilon_\theta = x_\lambda-\alpha_\lambda x_\theta\) into \eqref{eq:ode_lambda_eps_main} yields
\begin{equation}
\frac{dx_\lambda}{d\lambda}
=
-\alpha_\lambda^2\,x_\lambda+\alpha_\lambda\,x_\theta(x_\lambda,\lambda).
\label{eq:ode_lambda_x0_main}
\end{equation}

\section{EES Solvers for the Diffusion ODE}
\label{sec:ees_solvers_for_diffusion}
Building on the reversible and near-reversible solver families reviewed in \cref{sec:reversible_solvers}, we now specialise EES schemes \citep{shmelev2025explicit} to the diffusion probability-flow ODE. We refer to \cref{appendix:ees_schemes} for the scheme definitions and Butcher tableaux. Here we state the forward and backward update formulas.

We work in the half-logSNR time variable \(\lambda_t := \log(\alpha_t/\sigma_t)\) and use the \(x_0\)-predictor from \eqref{eq:x0_predictor}. As discussed in \cref{sec:alternative_ode_parametrisations}, the probability-flow ODE can be written in \(\lambda\)-time as \eqref{eq:ode_lambda_x0_main}. Let \(\lambda_0>\lambda_1>\cdots>\lambda_N\) be a uniform grid with step \(h:=\lambda_{i-1}-\lambda_i>0\).
Following \citet{lu2025dpm}, we introduce the rescaled state \(y_\lambda := x_\lambda/\sigma_\lambda\), for which the dynamics become
\begin{equation}
\frac{d y_\lambda}{d\lambda}
= e^{\lambda}\,x_\theta(\sigma_\lambda y_\lambda,\lambda),
\label{eq:y_ode}
\end{equation}
using \(\alpha_\lambda/\sigma_\lambda = e^{\lambda}\) (see \cref{appendix:ode_parametrisation}).
Let \(\Phi_h(y,\lambda)\) denote the EES Runge--Kutta increment for \eqref{eq:y_ode} with step size \(h\), so that one step takes the form \(y \mapsto y + h\,\Phi_h(y,\lambda)\).

A forward step from \(\lambda_i\) to \(\lambda_{i-1}=\lambda_i+h\) is
\begin{equation}
\begin{aligned}
y_i &= \frac{x_i}{\sigma_{\lambda_i}}, \qquad
y_{i-1} = y_i + h\,\Phi_h(y_i,\lambda_i), \\
x_{i-1} &= \sigma_{\lambda_{i-1}}\,y_{i-1}.
\end{aligned}
\label{eq:ees_sampler_update_fwd}
\end{equation}
The corresponding backward step from \(\lambda_{i-1}\) to \(\lambda_i=\lambda_{i-1}-h\) is
\begin{equation}
\begin{aligned}
\tilde y_{i-1} &= \frac{x_{i-1}}{\sigma_{\lambda_{i-1}}}, \qquad
\tilde y_i = \tilde y_{i-1} - h\,\Phi_{-h}(\tilde y_{i-1},\lambda_{i-1}), \\
\tilde x_i &= \sigma_{\lambda_i}\,\tilde y_i.
\end{aligned}
\label{eq:ees_sampler_update_bwd}
\end{equation}
Because the endpoint rescaling is exact and EES methods are near symmetric, one forward step followed by the corresponding backward step satisfies \(\tilde x_i = x_i + \mathcal{O}(h^{m+1})\), where \(m\) depends on the chosen EES scheme. Finally, while the one-step EES construction itself is not tied to a particular ODE parametrisation, the overall editing performance can be sensitive to this choice. We therefore repeat the experiments from \cref{sec:experiments_editing} under alternative formulations in \cref{appendix:ode_formulation_ablations}.

\subsection{Dependence of EES on Trajectory Regularity}
\label{sec:dependence_of_EES_on_trajectory_regularity}

We briefly note the dependence of the error of a near-reversible method on the solution trajectory and its derivatives. Given a numerical method of order $n$, it is clear from the definition of order and the Taylor series expansion of the solution that the leading term of the local error takes the form $C y_t^{(n+1)} h^{n+1}$ for some constant $C$ independent of $y$ and $h$. In a similar fashion, the local \say{reversibility} error of a reversible solver (that is, the error in recovering the initial condition when the solver is run forwards and then backwards) has a leading term of the form $Cy_t^{(m+1)}h^{m+1}$. As such, regularity of the solution trajectory $y_t$ and its derivatives plays a significant role in the effectiveness of near-reversible methods such as EES schemes. To mitigate this effect, we apply a number of vector-field smoothing and guidance tricks (see \cref{sec:vector_field_smoothness}).

\subsection{Additional Reversible Solvers}
\label{sec:extra_solver_rev_heun}
For completeness, we also include Reversible Heun \citep{kidger2021efficient}
and McCallum--Foster \citep{mccallum2024efficient} in our experiments as
additional reversible solvers for the diffusion ODE. We use the same
probability-flow ODE parametrisation as in
\cref{sec:ees_solvers_for_diffusion}, but, based on the ablation in
\cref{appendix:ode_formulation_ablations}, we integrate it directly as a
black-box vector field rather than using the rescaled form \eqref{eq:y_ode}. We
discretise on a uniform \(t\)-grid \(t_0,t_1,\ldots,t_N\) and use the
corresponding \(\lambda_{t_i}\) values. See \cref{appendix:reversible_heun} and
\cref{appendix:mccallum_foster} for the exact forms. In
\cref{sec:experiments}, we find that although these methods originate from the
neural differential equations literature, they are competitive with the other
diffusion samplers in our setting.

\section{Stability}
\label{sec:stability}

In this section, we show that EES methods offer significantly stronger stability guarantees than multi-step methods such as BELM, due to their near-symmetric properties and Runge--Kutta type.

\subsection{Multi-Step Methods}

It is well known that multi-step methods typically possess lower stability than one-step methods. A concrete example of this reduced stability is given by the second Dahlquist barrier \cite{dahlquist1963special}, which famously states that no multi-step method of order greater than 2 can be A-stable.\par

The stability of multi-step methods can be difficult to analyse, especially in the case of time-varying coefficients such as the ones used by the BELM sampler \eqref{eq:belm_2step}. A common proxy for the stability of a linear method is \emph{zero-stability}, which requires that small perturbations to the initial condition lead to small changes in the numerical solution. An equivalent definition which is aligned closer with the notion of linear-stability of one-step methods is given by the following.

\begin{definition}[Zero-Stability]
    Given a linear multistep method, let $P$ denote the characteristic polynomial which arises from the application of the method to the equation $y'=0$. The method is said to be \emph{zero-stable} if all roots $\zeta$ of $P$ satisfy the root condition $|\zeta| \leq 1$.
\end{definition}

As discussed in \citet{wang2024belm}, both DDIM and BELM are zero-stable, whilst the zero-stability of the EDICT and BDIA solvers is unclear. Whilst zero-stability is a useful metric for multi-step solvers, the condition is significantly weaker than that of the linear stability of one-step methods. As we will see, the stability guarantees of EES schemes far exceed those of zero-stability.

\subsection{One-Step Methods}

We begin by defining the more general notion of \emph{linear stability}.

\begin{definition}
    Consider a one-step ODE method $y_{n+1} = y_n + h\Phi_h(y_n)$ applied to the linear test ODE $y' = \lambda y$, $y_0 = 1$ for $\lambda \in \mathbb{C}$. Let $z = \lambda h$. The linear stability domain of the method $\Phi$ is defined by
    \(
        \mathcal{D} = \{z \in \mathbb{C} : \lim_{n \to \infty}y_n = 0\}.
    \)
\end{definition}

We note that linear stability is a more expressive notion of stability than zero-stability, as the latter corresponds to the case $z = 0$. 


\begin{theorem}{\cite{kidger2021efficient}}\label{thm:rev_heun_stability}
    When applied to the linear test ODE $y'=\lambda y$, the Reversible Heun
    method produces a bounded numerical sequence if and only if
    $\lambda h \in [-i,i]$.
\end{theorem}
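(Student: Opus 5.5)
We apply the Reversible Heun method to the linear test ODE $y'=\lambda y$ and reduce it to a two-dimensional linear recurrence. Recall the scheme of \citet{kidger2021efficient}: it carries the state $y_n$ and an auxiliary state $\hat y_n$, and updates
\begin{align*}
\hat y_{n+1} &= 2y_n - \hat y_n + h f(\hat y_n),\\
y_{n+1} &= y_n + \tfrac{h}{2}\bigl(f(\hat y_n)+f(\hat y_{n+1})\bigr).
\end{align*}

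With $f(y)=\lambda y$ and $z=\lambda h$, the first line reads $\hat y_{n+1} = 2y_n + (z-1)\hat y_n$. Substituting this into the second line gives
\[
y_{n+1} = (1+z^2)\,y_n + \tfrac{z}{2}(1+z)\,\hat y_n .
\]
So $(y_{n+1},\hat y_{n+1})^\top = A(z)\,(y_n,\hat y_n)^\top$ for an explicit $2\times2$ matrix $A(z)$, and boundedness of the numerical sequence for every initial condition (in particular $y_0=\hat y_0=1$) is governed by the spectrum of $A(z)$.

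Next we compute the invariants of $A(z)$. The trace is $\operatorname{tr}A = 1+z^2+z-1 = z^2+z$. The determinant is $(1+z^2)(z-1) - z(1+z) = z^3 - 2z - 1 - \dots$; this should be simplified carefully. Since the scheme is algebraically reversible, one expects $\det A(z)=-1$, i.e.\ $\det A = -1$ identically after simplification. I would verify this, or possibly $\det A = \pm1$ up to a factor with the stated sign conventions.

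Given $\det A=-1$, the eigenvalues $\mu_1,\mu_2$ satisfy $\mu_1\mu_2=-1$. Both lie in the closed unit disk only if $|\mu_1|=|\mu_2|=1$. The characteristic polynomial is $\mu^2 - (z^2+z)\mu - 1$. The substitution $\mu = i\nu$ turns this into the standard reversible analysis: the roots are unimodular iff $\mu - 1/\mu = z^2+z$ (or the relevant trace expression) lies in a segment of the imaginary axis. Writing $\mu=e^{i\theta}$ gives $\mu-\mu^{-1} = 2i\sin\theta\in[-2i,2i]$. We then solve for which complex $z$ the trace lies in $[-2i,2i]$, and we expect this to reduce exactly to $z\in[-i,i]$. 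For the boundedness claim, we must additionally exclude the boundary case of a repeated eigenvalue with a nontrivial Jordan block, which produces linear growth. We need to check whether the endpoints $z=\pm i$ give a diagonalizable double root or not, and adjust the closed/open interval accordingly. For the converse direction, if any eigenvalue has modulus $>1$, then generic initial data (including $y_0=\hat y_0=1$, after checking it is not orthogonal to the growing eigenvector) produce an unbounded sequence.

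The main obstacle is the algebra. We must get the precise form of $A(z)$ and confirm the determinant identity. We then have to solve the complex condition ``trace $\in[-2i,2i]$'' (equivalently, both roots unimodular) and show that its solution set is exactly the segment $[-i,i]$ rather than some larger curve. I would attempt this by setting $z=a+bi$, imposing that the trace-type quantity is purely imaginary with modulus at most $2$, and showing this forces $a=0$ and $|b|\le1$. If the determinant turns out not to be constant, I would fall back on the Schur–Cohn criterion for the quadratic.
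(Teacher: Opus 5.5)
Your overall route is the natural one: reduce to a $2\times2$ recurrence with determinant $-1$, then ask when both eigenvalues are unimodular. (The paper itself gives no proof and only cites \citet{kidger2021efficient}.) However, the central computation is wrong, and as written the plan fails.

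Substituting $\hat y_{n+1}=2y_n+(z-1)\hat y_n$ into the $y$-update gives $\hat y_n+\hat y_{n+1}=2y_n+z\hat y_n$. Hence $y_{n+1}=(1+z)y_n+\tfrac{z^2}{2}\hat y_n$, not $(1+z^2)y_n+\tfrac{z}{2}(1+z)\hat y_n$. With your matrix the determinant is $z^3-2z^2-1$, not $-1$, so the step $\mu_1\mu_2=-1$ is unavailable. Even granting it, $\{z: z^2+z\in[-2i,2i]\}$ is a curve containing e.g.\ $z=-1$, so you could not land on the segment. The correct matrix is
\[
A(z)=\begin{pmatrix}1+z & z^2/2\\ 2 & z-1\end{pmatrix},\qquad \operatorname{tr}A(z)=2z,\qquad \det A(z)=(z+1)(z-1)-z^2=-1 .
\]
Its characteristic polynomial is $\mu^2-2z\mu-1$, and your unimodularity argument then closes in one line: both roots lie on the unit circle iff $2z\in[-2i,2i]$, i.e.\ $z\in[-i,i]$.

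For the converse, the relevant check is not orthogonality, since $A(z)$ is not normal. What you need is that $(1,1)^\top$ is not an eigenvector of $A(z)$, which holds for $z\neq0$ because $A(z)(1,1)^\top=(1+z+z^2/2,\,1+z)^\top$. You also need the eigenvectors $(\mu+1-z,\,2)^\top$ to have non-zero first component, which holds for $z\neq0$ because $\mu=z-1$ is a root only if $z^2=0$. Then for $z\notin[-i,i]$ the eigenvalues are distinct (double roots occur only at $z=\pm i$), one has modulus $>1$, and $|y_n|\to\infty$.

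Your caution about the endpoints is warranted, and carrying it out shows the closed segment is not exactly right. At $z=\pm i$ the root $\mu=\pm i$ is double, and $A(\pm i)=\pm iI+N$ with $N=\begin{pmatrix}1&-1/2\\2&-1\end{pmatrix}$, where $N\neq0$ and $N^2=0$. Hence $A(\pm i)^n=(\pm i)^nI+n(\pm i)^{n-1}N$. Since $N(1,1)^\top=(1/2,1)^\top\neq0$, the Reversible Heun iterates started from $\hat y_0=y_0=1$ grow linearly. For $z=i$, $y_n=i^n+\tfrac{n}{2}i^{n-1}$, so $|y_n|\sim n/2$. The corrected argument therefore proves boundedness exactly for $z\in(-i,i)$, and the endpoints $\pm i$ must be excluded. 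This does not affect how the paper uses the result, since both the open and the closed segment have empty interior in $\mathbb{C}$.
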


As remarked in \citet{kidger2021efficient}, this bounded-stability interval is also the absolute stability region for the reversible asynchronous leapfrog integrator \citep{zhuang2021mali}. Since $[-i,i]$ has empty interior in
$\mathbb{C}$, these methods are \textit{nowhere linearly stable}. \citet{blasingame2025rex} further show that this is also the case for BDIA and fixed-step O-BELM. In contrast, the McCallum--Foster method has a non-trivial linear
stability region, albeit smaller than that of the base solver \citep[Theorem~2.3]{mccallum2024efficient}. We restate this result in \cref{appendix:mccallum_linear_stability}. Rex inherits its non-zero stability guarantee from McCallum--Foster \citep[Section~3.3 and Appendix~A.2]{blasingame2025rex}.

By forgoing exact reversibility, $\mathrm{EES}$ methods are able to offer significantly larger stability domains, even when compared to classical Runge--Kutta methods. The stability domains of $\mathrm{EES}(2,5)$ and $\mathrm{EES}(2,7)$ are shown in Figure \ref{fig:ees_stability}, and their exact forms are given in Appendix \ref{appendix:ees_stability}. $\mathrm{EES}(2,5)$ possesses a stability domain comparable to that of RK4 whilst requiring one fewer stage. $\mathrm{EES}(2,7)$ is considerably more stable than RK5 whilst requiring 2 fewer stages.

\section{Experiments}
\label{sec:experiments}

In \cref{sec:vector_field_smoothness}, we first isolate the role of trajectory regularity. The reconstruction experiments show that the EES reconstruction error grows with vector-field irregularity (i.e. high guidance scale), and that smoothing the vector field substantially reduces this error. This is consistent with the dependence of reversibility error on trajectory regularity discussed in \cref{sec:dependence_of_EES_on_trajectory_regularity}. In the editing experiments, smoothing also improves background preservation for EES and yields gains for several other solvers.
	
We then evaluate the editing performance of each solver on the \textit{small edits} PIE-Bench dataset in \cref{sec:experiments_editing}. \cref{fig:metrics_trade_off} and \cref{tab:solver_metrics} show the trade-off between background preservation and alignment with the intended edit. The optimal solver choice therefore depends on the user's priorities, and the qualitative examples in \cref{fig:example_figure} make this trade-off visible.
	
Finally, in \cref{sec:large_edits_and_stability}, we test the solvers' robustness under \textit{large edits}. \cref{tab:clip_whole_sd_vs_smooth} shows that solvers with smaller stability regions visibly degrade in edit quality when the sampling trajectory deviates substantially from the inverted path, whereas EES remains the most reliable of all (near) reversible solvers. Additional results and ablations can be found in \cref{appendix:additional_results}.

\textbf{Image editing setup.}
We use Stable Diffusion v1.5 \citep{Rombach_2022_CVPR} as the base model in all experiments. To compare methods under matched compute, we fix a budget of 48 model evaluations per sample and adjust the number of inference steps
accordingly: 48 steps for BDIA, BELM, and DDIM; 24 steps for EDICT and Reversible Heun; 16 steps for EES(2,5); and 12 steps for EES(2,7). Rex and McCallum--Foster require twice as many model evaluations as their base
solvers, making higher-order or adaptive-step variants too expensive for our 48-evaluation setup. Hence, we choose to benchmark against Rex/McCallum--Foster (Euler) and Rex/McCallum--Foster (Midpoint), using 24 and 12 steps, respectively. We use the standard inversion strength \(s=0.8\), as defined by EDICT, for the editing
experiments in \cref{sec:experiments_smooth_diffusion} and \cref{sec:experiments_editing}, and \(s=1\) in the stability study in \cref{sec:large_edits_and_stability}, where stronger edits are required. Following previous work, we fix the editing guidance scale at \(g=3.0\). We keep the recommended hyperparameters for EDICT (\(p=0.93\)), BDIA
\((\gamma=0.96)\), and Rex/McCallum--Foster (\(\zeta=0.999\)).

\textbf{Editing benchmark and metrics.}
We evaluate on PIE-Bench \citep{ju2023direct}, which contains 700 images spanning 10 edit types and provides a human-annotated editing mask for each example, together with source and target prompts. We report four metrics grouped into two aspects: (i) background preservation, measured outside the editing mask using LPIPS \citep{zhang2018unreasonable}; (ii) prompt–image consistency, measured by CLIP Similarity \citep{radford2021learning} on the masked region, PickScore \citep{kirstain2023pick}, and ImageReward \citep{xu2023imagereward}. In extended results in \cref{appendix:additional_results}, we also report PSNR as another background metric, and a CLIP Similarity on the whole image. Full implementation details are deferred to \cref{appendix:experiments}.

\subsection{The Impact of Vector-Field Smoothness}
\label{sec:vector_field_smoothness}
We first study image reconstruction to isolate inversion error in EES schemes, and show that, as indicated in \cref{sec:dependence_of_EES_on_trajectory_regularity}, swapping to a model version fine-tuned to produce smoother trajectories (Smooth Diffusion; \citealt{guo2024smooth}) substantially improves EES reversibility, which translates to better background preservation in image editing.

\subsubsection{Image reconstruction}
\label{sec:experiments_reconstruction}

In image reconstruction, given an image–caption pair, we invert the image to obtain a latent \(x_T\) and then reconstruct by running the solver back to \(x_0\) with the same caption. We report pixel-space mean squared error (MSE) computed on pixels normalised to  \([-1,1]\), averaged over 100 images from the MS COCO 2014 \citep{lin2014microsoft} Validation set. To match the image-editing setup, we fix a budget of 48 model evaluations per sample for all solvers.

Supporting \citet[Section D]{wallace2023edict}, which observes that classifier-free guidance can induce vector-field roughness, \cref{fig:reconstruction_vs_guidance} shows that the EES schemes incur increasing reconstruction error as guidance grows, whereas the exactly reversible solvers remain largely unaffected. Repeating the experiment with the Smooth Diffusion \citep{guo2024smooth} fine-tuned Stable Diffusion 1.5 checkpoint substantially reduces this guidance sensitivity, suggesting that sufficiently smooth trajectories lead to reliable EES inversion.

\begin{figure}[t]
    \centering
    \includegraphics[width=\columnwidth]{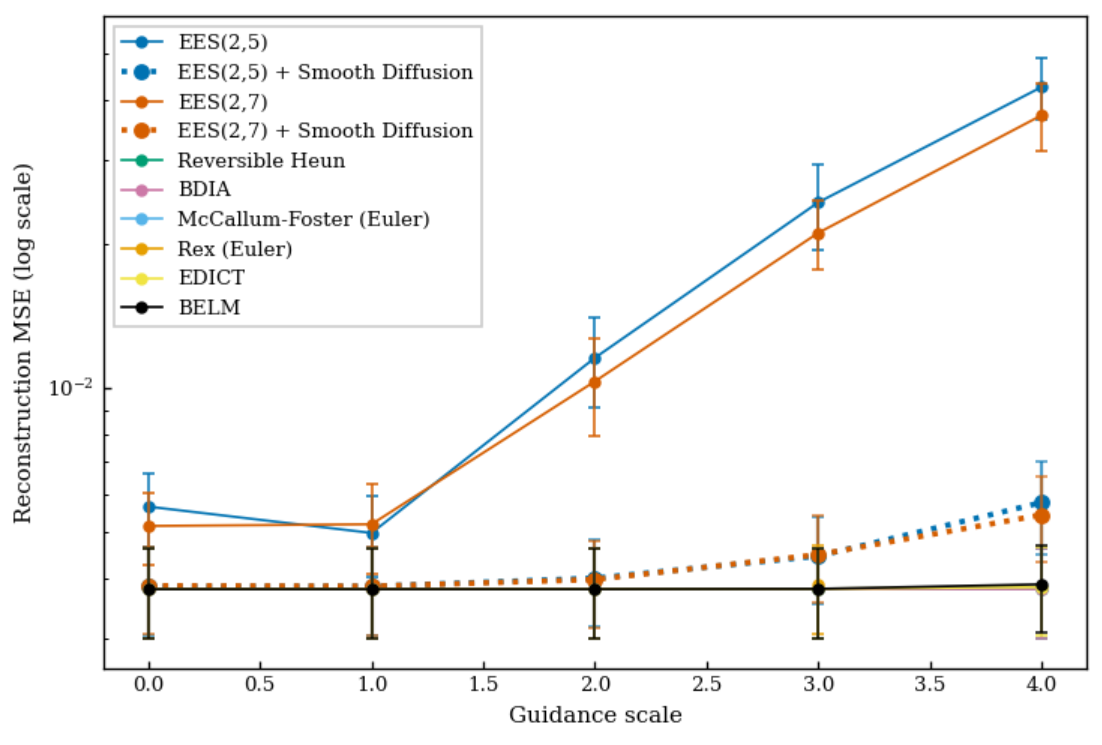}
    \caption{Reconstruction MSE of (near) reversible solvers over 100 MS COCO 2014 images. EES schemes degrade with increasing guidance, but Smooth Diffusion greatly reduces this effect. The error bars represent 95\% confidence intervals.}
    \label{fig:reconstruction_vs_guidance}
\end{figure}

\subsubsection{Image editing}
\label{sec:experiments_smooth_diffusion}

We now show that the importance of trajectory smoothness for EES inversion observed in \cref{sec:experiments_reconstruction} translates to image editing quality. In \cref{fig:grid_retriever}, we compare EES under Stable Diffusion 1.5 and its Smooth Diffusion \citep{guo2024smooth} fine-tuned checkpoint. Using Smooth Diffusion largely preserves the background while still applying the intended edit. For quantitative evidence, we report full PIE-Bench results for both checkpoints across all solvers in \cref{tab:solver_metrics_PIEBench_all}, and we also evaluate alternative smoothing heuristics that do not require changing the model, namely NPI and Proximal Guidance. Part 1 of \cref{fig:metrics_tricks} visualises how Smooth Diffusion shifts each solver’s editing metrics. Since Smooth Diffusion does not benefit every solver equally, in subsequent sections we report, for each solver, the best-performing variant.

\begin{figure}[t]
    \centering
    \includegraphics[width=\columnwidth]{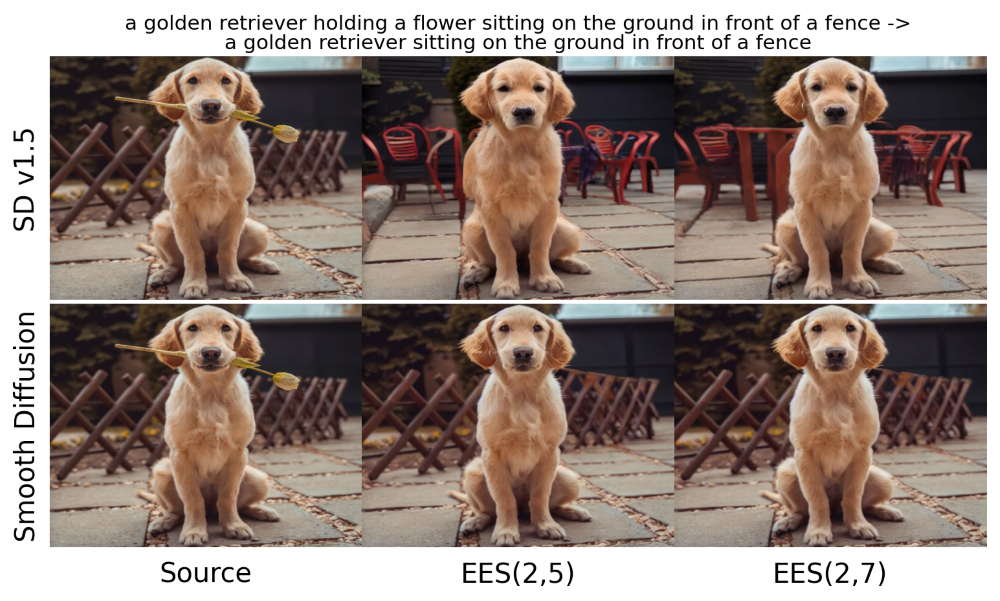}
    \caption{Qualitative editing comparison for EES under Stable Diffusion 1.5 (top) and Smooth Diffusion (bottom). Smooth Diffusion improves background preservation (e.g., the fence) while maintaining the intended edit. See \cref{fig:edit_grid_retriever_extended} for an extended grid including all solvers.}
    \label{fig:grid_retriever}
\end{figure}

\subsection{Small Edits: Performance on PIE-Bench}
\label{sec:experiments_editing}

We compare all solvers on PIE-Bench using the metrics in \cref{tab:solver_metrics}. \cref{fig:metrics_trade_off} summarises the main trade-off: improving prompt adherence tends to worsen background preservation. Exactly reversible solvers (e.g., EDICT) sit at the background-preserving end, while DDIM achieves strong prompt alignment but substantially degrades backgrounds. EES methods move the reversible frontier towards stronger edits, improving masked-region CLIP similarity while keeping background degradation substantially below DDIM.

We illustrate these trends qualitatively in \cref{fig:edit_grid_storm_trooper}, where EES preserves the background better than DDIM, and in row 1, \cref{fig:example_figure_small}, where EES achieves visibly stronger edit fidelity than the other reversible solvers, with extended versions of these figures in Figures \ref{fig:edit_grid_storm_trooper_extended} and \ref{fig:edit_grid_owl_extended}.

\begin{table}[t]
\caption{\textit{Small edits} (PIE-Bench) image editing evaluation results. We see the trade-off between background preservation and edit quality visualised in \cref{fig:metrics_trade_off}. Here, \textsc{MCF} denotes McCallum-Foster and ``+ Smooth'' denotes ``+ Smooth Diffusion''. (\textbf{best}, \underline{second-best}, $^{\star}$ third-best among (near) reversible solvers.)}
\label{tab:solver_metrics}
\vskip 0.10in
\centering
\scriptsize
\renewcommand{\arraystretch}{1.12}
\setlength{\tabcolsep}{3.2pt}
\resizebox{\columnwidth}{!}{%
\begin{tabular}{l|c|ccc}
\hline\hline
\multicolumn{1}{c|}{\textbf{Solver}} &
\multicolumn{1}{c|}{\textbf{Background Quality}} &
\multicolumn{3}{c}{\textbf{Edit Quality}} \\
\multicolumn{1}{c|}{} &
\textbf{LPIPS}$\times 10^3\,\downarrow$ &
\textbf{CLIP (Edited)}$\,\uparrow$ &
\textbf{PickScore}$\,\uparrow$ &
\textbf{ImageReward}$\,\uparrow$ \\
\hline

\multicolumn{5}{l}{\textbf{(Near) reversible solvers}} \\
\hline
EDICT + Smooth
& 45.21$^{\star}$ & 21.26 & \textbf{21.34} & 0.08 \\

Reversible Heun
& 52.45 & 21.41 & 21.18 & 0.05 \\

BDIA
& 50.52 & 21.41 & 21.21 & 0.04 \\

BELM + Smooth
& 75.24 & 21.28 & 20.97 & 0.02 \\

MCF (Euler)
& \textbf{36.80} & 20.50 & 21.17 & -0.17 \\

MCF (Midpoint) + Smooth
& \underline{41.37} & 20.68 & 21.20 & -0.09 \\

Rex (Midpoint) + Smooth
& 67.88 & \underline{21.62} & 21.15 & \underline{0.11} \\

Rex (Euler) + Smooth
& 60.31 & 20.64 & 20.87 & -0.18 \\

EES(2,5) + Smooth
& 59.30 & 21.54$^{\star}$ & \underline{21.30} & 0.11$^{\star}$ \\

EES(2,7) + Smooth
& 62.52 & \textbf{21.64} & 21.30$^{\star}$ & \textbf{0.12} \\

\hline\hline
\end{tabular}%
}
\vskip -0.10in
\end{table}

\begin{figure}[t]
    \centering
    \includegraphics[width=\columnwidth]{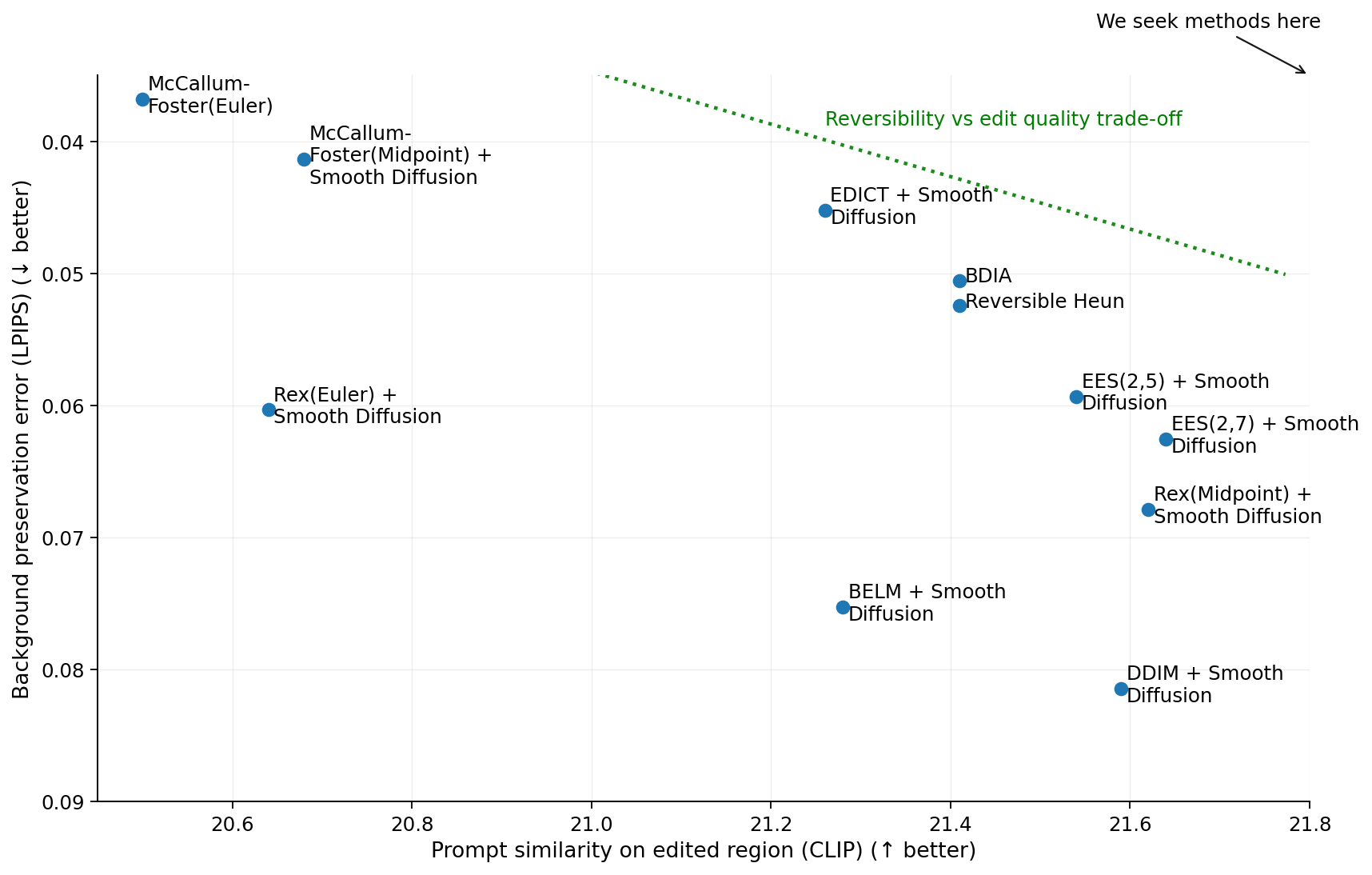}
    \caption{Prompt alignment (CLIP similarity of the edited region) versus background preservation (LPIPS outside the edit mask) on PIE-Bench. Reversible and near-reversible solvers lie on the trade-off curve between background similarity and edit quality.}
    \label{fig:metrics_trade_off}
\end{figure}

\begin{figure}[t]
    \centering
    \includegraphics[width=\columnwidth]{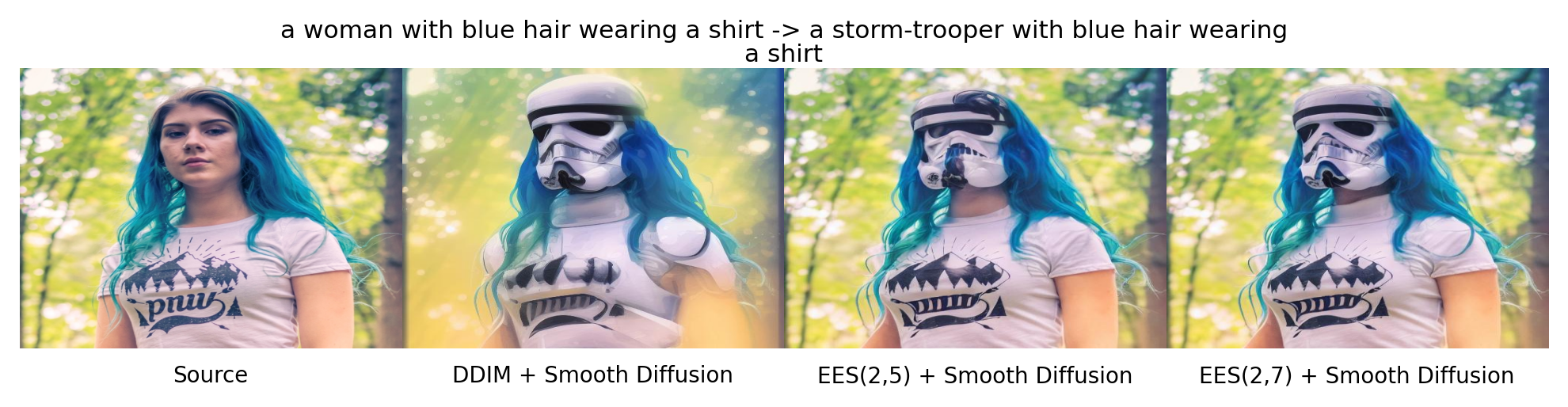}
    \caption{Example comparing DDIM and EES under the same edit. DDIM is worse at preserving background. See \cref{fig:edit_grid_storm_trooper_extended} for an extended version with more solvers.}
    \label{fig:edit_grid_storm_trooper}
\end{figure}

\subsection{Large Edits and Stability}
\label{sec:large_edits_and_stability}

Here we highlight the main advantage of EES schemes over other reversible solvers: stability. We evaluate all solvers in regimes that induce large deviations from the original trajectory, and find that most exactly reversible solvers proposed in recent literature degrade substantially in edit quality, whereas EES remains stable and achieves the strongest performance across all metrics.

We construct a large-prompt-deviations experiment from the \textit{random images} category of PIE-Bench (140 images). For each image, we shift the target prompt by one position in the dataset, so the edit prompt is typically unrelated to the source image and may require a large semantic change. Since the original edit masks are no longer meaningful, we evaluate prompt alignment using CLIP similarity on the full image as our main metric, alongside PickScore and ImageReward. Here, we report all results using the Smooth Diffusion checkpoint. 

\Cref{tab:clip_whole_sd_vs_smooth} shows that EES attains the strongest performance across metrics among the (near) reversible solvers under these large prompt deviations. Many exactly reversible solvers produce implausible outputs in this setting, failing to satisfy the new prompt condition when the required trajectory change is large. Rex (Midpoint) also does notably well, which is consistent with its non-trivial stability region. \Cref{fig:example_figure_large} provides representative examples, with additional solvers in \cref{fig:large_edits_retriever_extended}. Results for the base model and for the other smoothing methods are provided in \cref{tab:clip_whole_by_setting}.

\begin{table}[t]
\caption{Image editing metrics with Smooth Diffusion on \textit{large edits}. We no longer have a well-defined background, so we report edit quality metrics with the CLIP score on the whole image. We see that EES remains most robust under challenging edits. See \cref{tab:clip_whole_by_setting} for extended results with all smoothing methods. \;(\textbf{best}, \underline{second-best}, $^{\star}$ third-best among (near) reversible solvers)}
\label{tab:clip_whole_sd_vs_smooth}
\vskip 0.10in
\centering
\scriptsize
\renewcommand{\arraystretch}{1.15}
\setlength{\tabcolsep}{4.0pt}
\resizebox{\columnwidth}{!}{%
\begin{tabular}{l|ccc}
\hline\hline
\multicolumn{1}{c|}{\textbf{Solver}} &
\textbf{CLIP} $\uparrow$ &
\textbf{PickScore} $\uparrow$ &
\textbf{ImageReward} $\uparrow$ \\
\hline

\multicolumn{4}{l}{\textbf{Reversible solvers}} \\
\hline
EDICT                    & 21.88 & 19.58 & -0.97 \\
Reversible Heun          & 20.12 & 19.20 & -1.29 \\
BDIA                     & 20.77 & 19.27 & -1.22 \\
BELM                     & 22.19 & 19.36 & -0.94 \\
McCallum-Foster (Euler)    & 11.49 & 17.49 & -2.16 \\
McCallum-Foster (Midpoint) & 16.05 & 18.25 & -1.97 \\
Rex (Midpoint)             & 22.77$^{\star}$ & 19.68$^{\star}$ & -0.85$^{\star}$ \\
Rex (Euler)                & 15.17 & 17.50 & -2.14 \\
EES(2,5)                 & \underline{23.63} & \textbf{20.09} & \textbf{-0.57} \\
EES(2,7)                 & \textbf{23.66} & \textbf{20.09} & \underline{-0.58} \\
\hline\hline
\end{tabular}%
}
\vskip -0.10in
\end{table}

\section{Conclusion and Limitations}
We study reversible diffusion ODE solvers for inversion-based editing through the lens of editing stability, and show that exact reversibility alone is not enough: when an edit induces a large deviation from the inversion trajectory, existing reversible schemes can become numerically unstable and suffer sharp drops in quality. We address this with near-reversible EES solvers, which trade exact reversibility for substantially improved numerical stability. Empirically, EES is competitive with exactly reversible solvers on standard PIE-Bench edits and is the best-performing (near) reversible family in our study under large edits.

While solver design is a crucial building block, making (near) reversible solvers production-quality image editors will require integrating them into more powerful pipelines. We aim to move reversible solvers closer to this goal by connecting them with existing image editing work: incorporating smoothing tricks, exploring ODE parametrisations, and benchmarking on PIE-Bench. Many directions remain open for future work, such as combining EES with attention-preservation methods like Prompt-to-Prompt \citep{hertz2022prompt} or MasaCtrl \citep{cao2023masactrl}  and extending evaluation beyond inversion-based methods (e.g., \citet{zhang2023adding, lugmayr2022repaint, tumanyan2023plug}).

\newpage
\section*{Impact Statement}
This paper presents work whose goal is to advance the field of machine learning. There are many potential societal consequences of our work, none of which we feel must be specifically highlighted here.
\section*{Acknowledgments}
Barbora Barancikova is supported by UK Research and Innovation [UKRI Centre for Doctoral
Training in AI for Healthcare grant number EP/S023283/1].

\bibliography{references}
\bibliographystyle{icml2026}

\newpage
\appendix
\onecolumn

\section{EES Schemes}
\label{appendix:ees_schemes}

In this section, we provide a brief account of Explicit and Effectively Symmetric (EES) Runge--Kutta schemes \cite{shmelev2025explicit} and provide the general forms for $\mathrm{EES}(2,5)$ and $\mathrm{EES}(2,7)$ methods.

\subsection{Runge--Kutta Schemes}

Consider solving an ODE $y' = f(t,y)$ numerically using a one-step method 
\begin{equation}
\label{1-stepmethod}
    y_{n+1} = y_n + h \Psi_h(t_n, y_n).
\end{equation}
The method $\Psi_h$ is an $s$-stage Runge--Kutta (RK) method if $\Psi$ can be written in the form
\begin{align*}
    y_{n+1} &= y_n + h \sum_{i=1}^s b_i k_i,\\
    k_i &= f\left(t_n + c_i h, \, y_n + h\sum_{j=1}^{s} a_{ij} k_j\right), \quad i=1,\ldots,s,
\end{align*}
with RK matrix $A = (a_{ij})_{1 \leq i,j \leq s}$, weights $b = (b_i)_{1 \leq i \leq s}$ and nodes $c = (c_i)_{1 \leq i \leq s}$. These coefficients are typically collected in the \emph{Butcher tableau} of the scheme
\[
\begin{array}{c|ccccc}
c_1 &a_{11} & a_{12} &\cdots& a_{1s}&\\
c_2 & a_{21} & a_{22}&\cdots&a_{2s}&\\
\vdots & \vdots & \vdots & \ddots & \vdots\\
c_s & a_{s1} & a_{s2} & \cdots & a_{s, s} &\\
\hline
& b_1 & b_2 & \cdots & b_s
\end{array}
\]

In the case of an explicit scheme, the coefficients must satisfy $a_{ij} = 0$ for all $i \leq j$. In this case, the Butcher tableau takes the form

\[
\begin{array}{c|cccccc}
0 &&&&&&\\
c_2 & a_{21} &&&&&\\
c_3 & a_{31} & a_{32} &&&&\\
c_4 & a_{41} & a_{42} & a_{43} &&&\\
\vdots & \vdots & \vdots & \vdots & \ddots &\\
c_s & a_{s1} & a_{s2} & a_{s3} & \cdots & a_{s, s-1} &\\
\hline
& b_1 & b_2 & b_3 & \cdots & b_{s-1} & b_s
\end{array}
\]

\subsection{Explicit and Effectively Symmetric (EES) Schemes}

\begin{definition}
    A one-step Runge--Kutta method with update rule $y_1 = y_0 + h \Phi_h(y_0)$ is said to be of order $n$ if
    \begin{equation}
        |y_1 - y_t| = \mathcal{O}(h^{n+1})
    \end{equation}
    and of \emph{anti-symmetric} order $m$ if
    \begin{equation}
        |\backvec{y}_1 - y_0| = \mathcal{O}(h^{m+1}),
    \end{equation}
    where $\backvec{y}_1 = y_1 - h \Phi_{-h}(y_1)$. An explicit Runge--Kutta method is said to belong to the class of $\mathrm{EES}(n,m)$ schemes if it is of order $n$ and anti-symmetric order $m$.
\end{definition}

\begin{proposition}
\label{prop:EES_2_5}
    \cite{shmelev2025explicit} 3-stage Runge--Kutta schemes belonging to $\mathrm{EES}(2,5)$ take the form:
    \begin{equation}
    {\renewcommand{\arraystretch}{2.2}
    \begin{array}{c|ccc}
    0 &&&\\
    \displaystyle\frac{1+2x}{4(1-x)} & \displaystyle\frac{1+2x}{4(1-x)} &&\\[7pt]
    \displaystyle\frac{3}{4(1-x)} & \displaystyle\frac{(4x-1)^2}{4(x-1)(1-4x^2)} & \displaystyle\frac{1-x}{(1-4x^2)} &\\[7pt]
    \hline
    & x & \displaystyle\frac{1}{2} & \displaystyle\frac{1}{2} - x
    \end{array}
    }
    \end{equation}
    for some $x \in \mathbb{R}$, $x \neq 1, \pm \frac{1}{2}$.
\end{proposition}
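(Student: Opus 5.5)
Our plan is to characterise the 3-stage explicit Runge--Kutta methods of order $2$ and anti-symmetric order $5$ through B-series. We parametrise a general explicit 3-stage tableau by $c_2=a_{21}$, $a_{31},a_{32}$ with $c_3=a_{31}+a_{32}$, and weights $b_1,b_2,b_3$. We write the forward step $y_1=y_0+h\Phi_h(y_0)$ as a B-series $B(\phi_h,y_0)$, where the elementary weights $\phi(\tau)$ are polynomial in the tableau coefficients. The backward step $y_0\mapsto y_0-h\Phi_{-h}(y_0)$ is the same RK method with $h\to -h$, so its B-series coefficients are $(-1)^{|\tau|}\phi(\tau)$. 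The condition $\backvec{y}_1-y_0=\mathcal{O}(h^{6})$ is then equivalent to requiring that the B-series composition of $\phi^{-}$ after $\phi$ equals the identity character up to trees with at most $5$ vertices. Equivalently, $\phi$ must coincide with its adjoint $\phi^*=(\phi^{-})^{-1}$ up to order $5$.

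The key structural simplification is a parity observation. For an order-$2$ method, the defect of the composition from the identity vanishes automatically at orders $1$ and $2$. The standard result on adjoints is that if the method is symmetric up to order $k$ with $k$ even, then the order-$(k+1)$ defect is forced to vanish as well. So we only need to impose the conditions at orders $3$ and $5$, and those at $4$ come for free given $3$. We plan to verify this with the relation $\phi^*=\phi+\mathcal{O}(h^{k+1})$, which implies that the leading error term $e(\tau)$ satisfies $e=(-1)^{|\tau|+1}e$ at the leading level.

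Concretely, we proceed as follows.
\begin{enumerate}
\item Impose consistency $b_1+b_2+b_3=1$ and order $2$, $\sum b_ic_i=\tfrac12$.
\item Impose the order-$3$ symmetry conditions on the two trees $[\bullet,\bullet]$ and $[[\bullet]]$. Given order $2$, these reduce to the adjoint matching $\phi(\tau)-\phi^*(\tau)=0$. Combined with order $2$, they force the order-$3$ error coefficients to be "self-adjoint". For an explicit method this fixes $\sum b_ic_i^2$ and $b_3a_{32}c_2$ in terms of lower quantities; we expect to obtain $\sum b_ic_i^2=\tfrac13$-type relations shifted appropriately.
\item Impose the order-$5$ conditions on the nine trees with five vertices. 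These are polynomial equations in the six unknowns, and many of them will be dependent modulo the lower conditions.
\end{enumerate}

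Finally, we take $b_1=x$ as the free parameter. We solve for $b_2=\tfrac12$ and $b_3=\tfrac12-x$, then for $c_2$, $c_3$, $a_{32}$ as rational functions of $x$, and check that the tableau matches the stated form. The excluded values $x=1,\pm\tfrac12$ are precisely where denominators vanish.

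The main obstacle is step 3. The order-$5$ adjoint conditions involve the B-series composition (Butcher product) up to five vertices, which is heavy bookkeeping. On top of that, one must show that the resulting overdetermined polynomial system has exactly this one-parameter family as its solution set, with no extra branches. We would first reduce using symmetric-defect variables (the odd-order error coefficients), then eliminate via a resultant or Gröbner-style computation, verifying directly that the stated family satisfies all conditions.
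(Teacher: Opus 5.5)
The paper gives no proof of this proposition; it quotes it from \cite{shmelev2025explicit}, so your argument has to stand on its own. The B-series setting is the right one, but your parity reduction is inverted, and this breaks steps 2 and 3.

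Write $S_h(y) := y + h\Phi_h(y)$ and $\Psi_h := S_{-h}\circ S_h$. Since $\Psi_{-h} = S_h\circ\Psi_h\circ S_h^{-1}$ and $S_h = \mathrm{id}+\mathcal{O}(h)$, a leading defect $\Psi_h = \mathrm{id} + h^jE_j + \mathcal{O}(h^{j+1})$ must satisfy $(-1)^jE_j = E_j$. So the leading reversibility defect always sits on trees with an \emph{even} number of vertices; for explicit Euler it is $-h^2f'f$. It is the odd orders that come for free, not the even ones.

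Your own lemma says exactly this: with $k=2$ it makes order $3$ automatic. That contradicts your conclusion that orders $3$ and $5$ must be imposed while order $4$ is free, and the relation $e=(-1)^{|\tau|+1}e$ has the wrong sign for the same reason. Concretely, once $\sum b_i=1$ and $\sum b_ic_i=\tfrac12$, the two $3$-vertex conditions hold identically. Step 2 therefore fixes neither $\sum b_ic_i^2$ nor $b_3a_{32}c_2$; indeed $\sum b_ic_i^2=\frac{5-2x}{16(1-x)}$ varies along the family. The conditions that actually carve out the family are the four $4$-vertex ones, which your plan never imposes.

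Write $\beta:=\sum b_ic_i^2$ and $\delta:=\sum b_ia_{ij}c_j$. For the trees $[\bullet,\bullet,\bullet]$, $[\bullet,[\bullet]]$, $[[\bullet,\bullet]]$, $[[[\bullet]]]$, and given order $2$, the $4$-vertex conditions read
\begin{align*}
2\textstyle\sum b_ic_i^3 &= 3\beta-\tfrac12, &
2\textstyle\sum b_ic_ia_{ij}c_j &= \beta+\delta-\tfrac14,\\
2\textstyle\sum b_ia_{ij}c_j^2 &= \beta+2\delta-\tfrac12, &
2\textstyle\sum b_ia_{ij}a_{jk}c_k &= 2\delta-\tfrac14.
\end{align*}
Solving them takes a few lines:
\begin{enumerate}
\item For three explicit stages the tall-tree weight vanishes, so $\delta=b_3a_{32}c_2=\tfrac18$, consistent with $R(z)=1+z+\tfrac12z^2+\tfrac18z^3$.
\item The middle two conditions become $c_2=4\beta-1$ and $c_3=c_2+\tfrac12$.
\item Together with order $2$ these give $c_2\bigl((b_2+b_3)c_3-\tfrac34\bigr)=0$, and $c_2\neq0$ because $\delta\neq0$.
\item The bushy-tree condition is then satisfied automatically.
\end{enumerate}
Putting $b_1=x$ gives exactly the stated tableau. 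Every solution has $x\notin\{1,\pm\tfrac12\}$: $x=1$ contradicts $(b_2+b_3)c_3=\tfrac34$, while $x=\tfrac12$ forces $b_3=0$ and $x=-\tfrac12$ forces $c_2=0$, both contradicting $\delta=\tfrac18$. So the parametrisation is exhaustive and no extra branches arise. The $5$-vertex conditions then hold by parity and never need to be written down.

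For the record, the nine $5$-vertex conditions do in fact force the $4$-vertex ones. Since $\Psi_h=\mathrm{id}+\mathcal{O}(h^4)$ for any order-$2$ method, conjugation makes $E_5$ a nonzero multiple of $[f,E_4]$, and $\mathrm{ad}_\bullet$ is injective on $4$-vertex trees. So your system is not wrong in content. But that is not the argument you gave, and it turns a few lines of algebra into the heavy elimination you flagged as the main obstacle.
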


Following \citet{shmelev2025explicit}, we refer to the scheme with parameter $x$ as $\mathrm{EES}(2,5;x)$. In particular, $\mathrm{EES}(2,5;1/10)$ has the Butcher tableau:

\begin{equation}
    {\renewcommand{\arraystretch}{1.2}
    \begin{array}{c|ccc}
    0 &&&\\
    1/3 & 1/3 &&\\
    5/6 & -5/48 & 15/16 &\\
    \hline
    & 1/10 & 1/2 & 2/5
    \end{array}
    }
    \label{eq:butcher_ees_25}
\end{equation}

Following \citet{shmelev2025explicit}, we will refer to $\mathrm{EES}(2,5;1/10)$ as \emph{the} $\mathrm{EES}(2,5)$ method.

\begin{proposition}\label{prop:EES_2_7}
   \cite{shmelev2025explicit} For some $x \in \mathbb{R}$, 4-stage RK schemes in $\mathrm{EES}(2,7)$ take the form:
    \begin{align*}
        b_1 &= x,\\
        b_2 &= \frac{1}{2}(2 \mp \sqrt{2}) - (1 \mp\sqrt{2})x,  & \alpha &:= \frac{(2x \pm \sqrt{2})}{(2x - 1)(-2x \mp \sqrt{2} + 1)}\\
        b_3 &= (1 \mp\sqrt{2})(x-1),  & \beta &:= \frac{1}{(2x - 1)(1 \mp \sqrt{2}-2x)(2 \mp \sqrt{2} - 2x)}\\
        b_4 &= \frac{1}{2}(2 \mp \sqrt{2}) - x,
    \end{align*}
    \begin{align*}
        a_{21} &= \frac{-2\pm \sqrt{2} (1 - 2x)}{4(x-1)}\\
        a_{31} &= \frac{(2x \pm \sqrt{2} - 2)(4x \pm \sqrt{2} - 2)}{\pm 4\sqrt{2}(x - 1)}\, \alpha\\
        a_{32} &= \frac{1}{2}(-1 \pm\sqrt{2}) \,\alpha\\
        a_{41} &= \frac{(2x \mp\sqrt{2})(-40x^4 +(80 \mp 40\sqrt{2})x^3 -(88 \mp 60\sqrt{2})x^2 +(48 \mp 34\sqrt{2})x \pm 7\sqrt{2} - 10)}{4(x-1)(2x^2-1)} \, \beta\\
       a_{42} &= (2 \mp \sqrt{2}) x (x-1) (4x \pm \sqrt{2} - 2) \, \beta\\
       a_{43} &= \frac{(2 \mp\sqrt{2})(2x \mp\sqrt{2})(2 \pm \sqrt{2} - 2x)(x-1)(2x-1)}{4(2x^2-1)(2x^2 - 4x + 1)}
    \end{align*}
\end{proposition}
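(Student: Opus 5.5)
The statement is a classification of 4-stage explicit Runge--Kutta schemes, so the natural route is through B-series and the algebra of rooted trees. The conditions split into two groups. The order conditions up to order 2 are $\sum_i b_i = 1$ and $\sum_i b_i c_i = 1/2$, with $c_i = \sum_j a_{ij}$. The anti-symmetric order conditions require that the composition $\Phi_{-h}\circ\Phi_h$ agree with the identity up to $\mathcal{O}(h^{8})$. Following \citet{shmelev2025explicit}, I would express these through the adjoint method, whose Butcher tableau is $a^*_{ij} = b_{s+1-j} - a_{s+1-i,s+1-j}$, $b^*_i = b_{s+1-i}$. Running the method forward and then backward recovers $y_0$ to order $m$ exactly when the elementary weights $\Phi(\tau)$ of the method and of its adjoint agree for all rooted trees with $|\tau|\le m$.

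An equivalent and more tractable formulation uses the composition (Butcher group) product. The B-series coefficient of $\Phi_{-h}\circ\Phi_h$ on each tree $\tau$ must vanish for $1\le |\tau|\le 7$. Odd-order trees are the relevant ones here. Since the composition with the reversed step is automatically symmetric, its error is an even function of $h$, so the conditions at order $2k$ follow from those at order $2k-1$. This reduces the work to trees of orders 3, 5 and 7.

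The computation would proceed in four steps:

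1. Parametrise the explicit tableau by its six entries $a_{21},a_{31},a_{32},a_{41},a_{42},a_{43}$ and the four weights $b_1,\dots,b_4$.
2. Impose the two order conditions, and set $b_1 = x$ as the free parameter.
3. Write the composition-defect coefficient $\delta(\tau)$ for each tree up to order 7 as a polynomial in these unknowns, using the standard recursive formula for the B-series of a composition.
4. Solve the resulting polynomial system, preferably with a Gröbner basis in a computer algebra system over $\mathbb{Q}(\sqrt2)[x]$.

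I expect the low-order conditions, the bushy tree and the tall tree at order 3, to force the weights into the stated linear-in-$x$ form involving $1\mp\sqrt2$. The $\sqrt2$ should enter from a quadratic relation among the $b_i$, of the shape $b_2^2 = 2(\dots)$, which produces the $\pm$ branches. The remaining conditions then determine $a_{21}$, then $(a_{31},a_{32})$ through the auxiliary quantity $\alpha$, and finally $(a_{41},a_{42},a_{43})$ through $\beta$, in a triangular manner.

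The main obstacle is the size of the order-5 and order-7 tree systems, since there are 9 and 48 trees respectively. One must show that these do not overdetermine the 10 unknowns and that they are consistent with a one-parameter family. The plan here is to exploit the symmetry, i.e.\ the fact that the even-order conditions are redundant. I would then check that, after substituting the candidate solution, every order-7 defect $\delta(\tau)$ vanishes identically in $x$. This is a finite symbolic verification and gives the sufficiency direction.

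For necessity, i.e.\ that every $\mathrm{EES}(2,7)$ 4-stage scheme has this form, I would eliminate variables successively and track the excluded degenerate values of $x$, such as $x=1$, $2x^2=1$ and $2x^2-4x+1=0$, where the stated denominators vanish.
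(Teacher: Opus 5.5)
The paper does not prove this proposition; it is quoted from the EES paper. A B-series computation finished with Gröbner bases is therefore a natural route. However, the parity reduction your plan rests on is false, and in fact runs the wrong way round.

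\textbf{The parity claim is backwards.} Write $S_h(y)=y+h\Phi_h(y)$ for one step and $E_h=S_{-h}\circ S_h$. This map is neither symmetric nor even in $h$. For explicit Euler, $E_h(y)=y+hf(y)-hf(y+hf(y))=y-h^2f'f-\tfrac{h^3}{2}f''(f,f)+O(h^4)$. What does hold is the conjugation identity $E_{-h}=S_h\circ E_h\circ S_h^{-1}$. Write $E_h=\mathrm{id}+h^jX_j+h^{j+1}X_{j+1}+O(h^{j+2})$ and compare coefficients. This gives $(-1)^jX_j=X_j$, and for even $j$ it gives $X_{j+1}=\tfrac12(X_j'f-f'X_j)$. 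So the leading defect sits at an even power, which is why $m$ is odd. Vanishing through order $2k$ then forces vanishing at order $2k+1$. Your claim, that order $2k$ follows from order $2k-1$, is false. The $\mathrm{EES}(2,5)$ schemes satisfy every condition through order $5$ but fail at order $6$: for $R(z)=1+z+\tfrac12z^2+\tfrac18z^3$ one gets $R(z)R(-z)=1-z^6/64$.

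\textbf{Consequences for your plan.} Your low-order predictions fail as a result.
\begin{itemize}
\item Any scheme of classical order $2$ has $E_h=\mathrm{id}+O(h^3)$, and hence $O(h^4)$ by parity. So the order-$3$ conditions (bushy and tall tree) are vacuous. They cannot fix the weights or produce the $\sqrt2$.
\item The conditions that actually carry content are the $4$ trees of order $4$ and the $20$ of order $6$. Orders $5$ and $7$ then come for free.
\item The $\sqrt2$ is already visible on tall trees. On $y'=\lambda y$, $E_h$ acts as $R(z)R(-z)$. This is the one setting where your evenness holds, and there every odd-order condition is identically zero.
\end{itemize}
For the last point, take $R(z)=1+z+\tfrac12z^2+\gamma_3z^3+\gamma_4z^4$ with $\gamma_3=b^{\top}Ac$ and $\gamma_4=b^{\top}A^2c$. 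The $z^4$ and $z^6$ coefficients give $\tfrac14+2\gamma_4-2\gamma_3=0$ and $\gamma_4=\gamma_3^2$. Hence $8\gamma_3^2-8\gamma_3+1=0$, so $\gamma_3=\tfrac14(2\mp\sqrt2)$ and $\gamma_4=\tfrac18(3\mp2\sqrt2)$. This matches Theorem~\ref{thm:ees27_ode_stability}.

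\textbf{The odd-orders-only system can be rescued, but not by your argument.} Suppose the defect vanishes through order $2k-1$. Then vanishing at order $2k+1$ says $X_{2k}'f=f'X_{2k}$. The map $X\mapsto X'f-f'X$ is injective on combinations of elementary differentials of order $\ge 2$. To see this, graft a leaf onto the root of a tree in the support that has the most leaf children at the root. The resulting tree receives no other contribution, so $X_{2k}=0$. You would need to supply this argument for both your reduction and your sufficiency check. It would also leave you with $9+48$ tree conditions instead of $4+20$. Impose orders $4$ and $6$ instead.
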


Following \citet{shmelev2025explicit}, we refer to the scheme with positive $\sqrt{2}$ and parameter $x$ as $\mathrm{EES}(2,7;x)$. In particular, $\mathrm{EES}(2,7;\frac{1}{14}(5-3\sqrt{2}))$ has the Butcher tableau:

\begin{equation}
    {\renewcommand{\arraystretch}{2}
    \begin{array}{c|cccc}
    0&&&\\
    \frac{1}{3}(2-\sqrt{2}) & \frac{1}{3}(2-\sqrt{2})&&&\\[7pt]
    \frac{1}{6}(2 + \sqrt{2}) & \frac{1}{24}(-4+\sqrt{2})  & \frac{1}{8}(4 + \sqrt{2})\displaystyle  &&\\[7pt]
    \frac{1}{6}(4 + \sqrt{2}) &\frac{1}{168}(-176+145\sqrt{2}) &\frac{3}{56}(8-5\sqrt{2})&\frac{3}{7}(3-\sqrt{2})&\\[7pt]
    \hline
    & \frac{1}{14}(5-3\sqrt{2}) & \frac{1}{14}(3 + \sqrt{2}) & \frac{3}{14}(-1+2\sqrt{2})   & \frac{1}{14}(9-4\sqrt{2})
    \end{array}
    }
    \label{eq:butcher_ees_27}
\end{equation}

Following \citet{shmelev2025explicit}, we will refer to $\mathrm{EES}(2,7;\frac{1}{14}(5-3\sqrt{2}))$ as \emph{the} $\mathrm{EES}(2,7)$ method.

\section{Stability}
\subsection{Linear Stability of EES Methods}\label{appendix:ees_stability}

\begin{theorem}{\citep{shmelev2025explicitneural}}\label{thm:ees25_ode_stability}
    Suppose that, for some $x \notin \{1, \pm \frac{1}{2}\}$, $\mathrm{EES}(2,5;x)$ is used to obtain a solution $\{y_n\}_{n \geq 0}$ to the linear test equation $dy = \lambda y\, dt$, where $\lambda \in \mathbb{C}$ and $y_0 \neq 0$. Then $y_n \to 0$ as $n \to \infty$ if and only if
    \begin{equation}
        \left\lvert 1 + z + \frac{1}{2} z^2 + \frac{1}{8} z^3 \right\rvert < 1,
    \end{equation}
    where $z = \lambda h$.
\end{theorem}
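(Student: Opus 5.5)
Applying any explicit $s$-stage Runge--Kutta method to $y'=\lambda y$ gives $y_{n+1}=R(z)\,y_n$ with $z=\lambda h$ and stability polynomial
\[
R(z)=1+z\,b^\top (I-zA)^{-1}\mathbf{1}=1+\sum_{k\ge 1} z^k\, b^\top A^{k-1}\mathbf{1}.
\]
Because $A$ is strictly lower triangular, $A^3=0$ for three stages, so $R(z)=1+(b^\top\mathbf{1})z+(b^\top A\mathbf{1})z^2+(b^\top A^2\mathbf{1})z^3$. Since $y_n=R(z)^n y_0$ with $y_0\neq 0$, we have $y_n\to 0$ if and only if $|R(z)|<1$. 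The proof therefore reduces to showing that, for every admissible $x$, the three coefficients equal $1$, $\tfrac12$ and $\tfrac18$.

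The first two coefficients come directly from the order conditions. $\mathrm{EES}(2,5;x)$ has order $2$, so $b^\top\mathbf{1}=1$ and $b^\top c=\tfrac12$, where $c=A\mathbf{1}$ (the tableau in Proposition~\ref{prop:EES_2_5} satisfies the row-sum condition). As a check, $b_1+b_2+b_3=x+\tfrac12+\tfrac12-x=1$. Likewise
\[
b_2c_2+b_3c_3=\tfrac12\cdot\frac{1+2x}{4(1-x)}+\Bigl(\tfrac12-x\Bigr)\frac{3}{4(1-x)}=\frac{1+2x+3-6x}{8(1-x)}=\tfrac12 .
\]

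The main step is the cubic coefficient, $b^\top A^2\mathbf{1}=b_3a_{32}c_2$. Using the tableau,
\[
b_3a_{32}c_2=\frac{1-2x}{2}\cdot\frac{1-x}{(1-2x)(1+2x)}\cdot\frac{1+2x}{4(1-x)}=\tfrac18,
\]
where the factors $(1-2x)$, $(1+2x)$ and $(1-x)$ cancel. This cancellation is valid precisely because $x\notin\{1,\pm\tfrac12\}$, which is the hypothesis of the theorem. The coefficient is therefore independent of $x$. As a sanity check, for $x=1/10$ the tableau \eqref{eq:butcher_ees_25} gives $\tfrac25\cdot\tfrac{15}{16}\cdot\tfrac13=\tfrac18$.

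A more conceptual explanation is that near-symmetry forces $R(z)R(-z)=1+O(z^{6})$, and a cubic $1+z+\tfrac12z^2+az^3$ satisfies this to order $z^4$ only if $a=\tfrac18$. The direct computation above is shorter, though, so I would present that in the proof. Combining the three coefficients with $y_n=R(z)^ny_0$ gives the stated criterion.
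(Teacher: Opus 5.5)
Your proposal is correct and takes the same route as the paper: write $y_n = R(z)^n y_0$ with $R(z) = 1 + z\,b^\top (I - zA)^{-1}\mathbf{1}$, then substitute the general $\mathrm{EES}(2,5;x)$ tableau to get $R(z) = 1 + z + \tfrac12 z^2 + \tfrac18 z^3$ independently of $x$. You carry out the substitution that the paper only asserts, and your coefficient computations (including $b_3 a_{32} c_2 = \tfrac18$) are correct; your side remark also holds, since $R(z)R(-z) = 1 + (\tfrac14 - 2a)z^4 - a^2 z^6$, so near-symmetry forces $a = \tfrac18$.
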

\begin{proof}
For an explicit $s$-stage Runge--Kutta method applied to $dy = \lambda y\, dt$, the numerical solution satisfies $y_{n+1} = R(z)\,y_n$ where $z = \lambda h$ and the stability polynomial is $R(z) = 1 + zb^T(I-zA)^{-1}e$.  Hence $y_n \to 0$ if and only if $|R(z)| < 1$. The result follows by substituting the general Butcher tableau from \cref{prop:EES_2_5}, giving $R(z) = 1 + z + \frac{1}{2}z^2 + \frac{1}{8}z^3$ for all choices of the $\mathrm{EES}$ parameter $x$.
\end{proof}

\begin{theorem}\label{thm:ees27_ode_stability}
    Suppose that, for some $x \notin \{ 1, \frac{1}{2}, \pm \frac{\sqrt{2}}{2}, 2 \pm \sqrt{3}, \frac{1}{2}(1-\sqrt{2}), \frac{1}{2}(2-\sqrt{2})\}$, $\mathrm{EES}(2,7;x)$ is used to obtain a solution $\{y_n\}_{n \geq 0}$ to the linear test equation $dy = \lambda y\, dt$, where $\lambda \in \mathbb{C}$ and $y_0 \neq 0$. Then $y_n \to 0$ as $n \to \infty$ if and only if
    \begin{equation}
        \left\lvert 1 + z + \frac{1}{2} z^2 + \frac{1}{4}(2 - \sqrt{2}) z^3 + \frac{1}{8}(3 - 2\sqrt{2})z^4 \right\rvert < 1,
    \end{equation}
    where $z = \lambda h$.
\end{theorem}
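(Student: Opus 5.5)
Following the proof of \cref{thm:ees25_ode_stability}, an explicit $4$-stage Runge--Kutta method applied to $dy=\lambda y\,dt$ gives $y_{n+1}=R(z)\,y_n$ with $z=\lambda h$. Because $A$ is strictly lower triangular, $(I-zA)^{-1}=\sum_{k=0}^{3}z^kA^k$, so
\[
R(z)=1+\sum_{k=1}^{4} \gamma_k z^k,\qquad \gamma_k := b^TA^{k-1}e .
\]
Then $y_n=R(z)^n y_0$ with $y_0\neq 0$, and hence $y_n\to 0$ if and only if $|R(z)|<1$. So it suffices to show that, for every admissible $x$, the tableau of \cref{prop:EES_2_7} (with the positive-$\sqrt{2}$ sign choice defining $\mathrm{EES}(2,7;x)$) gives $\gamma_1=1$, $\gamma_2=\tfrac12$, $\gamma_3=\tfrac14(2-\sqrt2)$ and $\gamma_4=\tfrac18(3-2\sqrt2)$.

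The first two coefficients come from the order conditions. Order $2$ means $\sum_i b_i=1$ and $b^Tc=\tfrac12$, where $c=Ae$ is the row-sum condition an EES tableau satisfies. These can also be checked directly: $b_1+b_2+b_3+b_4$ is $x$-independent and equals $(2-\sqrt2)+(1-\sqrt2)(-1)+\ldots=1$ after collecting terms. For $\gamma_3=b^TAc=\sum_{i>j>k}b_i a_{ij}a_{jk}$ and $\gamma_4=b_4a_{43}a_{32}a_{21}$ I will substitute the rational expressions. The product $\gamma_4$ is the most tractable. Writing $\alpha$ in terms of $x$, the factors $(2x-1)$, $(2x^2-1)$, $(x-1)$ and $(2x-\sqrt2)$ should cancel between $a_{21}$, $a_{32}$, $a_{43}$ and $b_4=\tfrac12(2-\sqrt2)-x$, leaving a constant. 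Before the symbolic work, I will sanity-check all four coefficients numerically using the concrete tableau \eqref{eq:butcher_ees_27}. For example, $b_4a_{43}a_{32}a_{21}$ with the listed entries should equal $\tfrac18(3-2\sqrt2)\approx 0.0214$.

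The main obstacle is showing that $\gamma_3$ is independent of $x$. It involves the complicated quartic numerator in $a_{41}$ only through $a_{42}$ and $a_{43}$, since $\gamma_3=b_3a_{32}c_2+b_4(a_{42}c_2+a_{43}c_3)$ and $a_{41}$ does not enter. I will first try to avoid brute force with a structural argument. The anti-symmetric order $7$ condition forces $R(z)R(-z)=1+O(z^8)$, because running forward then backward on the linear test equation multiplies by $R(z)R(-z)$. The coefficients of $z^2$ and $z^4$ in $R(z)R(-z)$ are
\[
2\gamma_2-\gamma_1^2 \quad\text{and}\quad 2\gamma_4-2\gamma_1\gamma_3+\gamma_2^2,
\]
and both must vanish. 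With $\gamma_1=1$ and $\gamma_2=\tfrac12$, this gives $\gamma_3=\gamma_4+\tfrac18$. The $z^6$ coefficient, $-2\gamma_2\gamma_4+\gamma_3^2$, must also vanish, which gives $\gamma_3^2=\gamma_4$. Hence $\gamma_3^2-\gamma_3+\tfrac18=0$, so $\gamma_3=\tfrac14(2\mp\sqrt2)$ and $\gamma_4=\tfrac18(3\mp2\sqrt2)$. The sign choice is then fixed by the $\sqrt2$ sign of the family: a single evaluation, for instance at the specific $x$ of \eqref{eq:butcher_ees_27}, selects the sign, and continuity in $x$ together with the discreteness of the two roots gives it for all $x$. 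The excluded values of $x$ are exactly those where some denominator in \cref{prop:EES_2_7} vanishes, so the tableau is well defined and the argument applies to all remaining $x$.
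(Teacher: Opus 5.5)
Your route differs from the paper's. The paper's proof is pure computation. It substitutes the general tableau into $R(z)=1+zb^T(I-zA)^{-1}e$ and simplifies, obtaining the stated quartic for every $x$, and it never uses anti-symmetry. You instead use the anti-symmetric order as a constraint on $R$. Your identification of forward-then-backward with multiplication by $R(z)R(-z)$ is right. Writing $R=E+O$ (even and odd parts) and $R(z)R(-z)=E^2-O^2$ gives exactly your conditions at $z^2$, $z^4$, $z^6$, up to an irrelevant overall sign at $z^6$.

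This buys real insight. Any consistent explicit $4$-stage method of anti-symmetric order $7$ has one of only two stability polynomials, which explains why the result is independent of $x$. The same computation with three stages gives $\gamma_3=\gamma_2^2/2=\tfrac18$, recovering the $\mathrm{EES}(2,5)$ result. The price is an external input: you need every member of the family to have anti-symmetric order $7$. The proposition, read literally, only says that $\mathrm{EES}(2,7)$ schemes have this form. The naming $\mathrm{EES}(2,7;x)$ and the cited source presuppose the converse, but the direct substitution does not need it.

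The step that fails as written is the sign selection. The admissible set is $\mathbb{R}$ minus finitely many points, i.e.\ a union of disjoint intervals. Continuity plus discreteness of $\{\tfrac14(2\pm\sqrt2)\}$ therefore only fixes $\gamma_3$ on the interval containing $x^\ast=\tfrac1{14}(5-3\sqrt2)\approx 0.054$, namely $(\tfrac12(1-\sqrt2),\,2-\sqrt3)$. What you need is that $\gamma_3$ is a rational function of $x$. Then $\gamma_3-\tfrac14(2-\sqrt2)$ vanishes on an interval and hence vanishes identically.

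Better, your own plan for $\gamma_4$ removes the issue altogether. The product does collapse: $b_4a_{43}=\frac{(2-\sqrt2)(x-1)(2x-1)}{2(2x+\sqrt2)}$ and $a_{32}a_{21}=\frac{\sqrt2(\sqrt2-1)(2x+\sqrt2)}{8(2x-1)(x-1)}$, so $\gamma_4=\tfrac18(3-2\sqrt2)$ identically. Given $\gamma_1=1$, the $z^2$ condition then yields $\gamma_2=\tfrac12$, and the $z^4$ condition yields $\gamma_3=\gamma_4+\tfrac18=\tfrac14(2-\sqrt2)$ linearly. You need neither the quadratic, nor the $z^6$ condition, nor any sign selection.

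One minor correction: the excluded values are not exactly the poles of the entries. The values $2\pm\sqrt3$ are not poles of any entry. Meanwhile the admissible value $x=1+\tfrac{\sqrt2}{2}$ makes the stated formula for $a_{43}$ read $0/0$, because the factor $2+\sqrt2-2x$ cancels a root of $2x^2-4x+1$. Treating the entries as reduced rational functions handles this uniformly.
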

\begin{proof}
As in Theorem \ref{thm:ees25_ode_stability}, the result follows by substituting the general Butcher tableau from \cref{prop:EES_2_7}, giving the stability polynomial $R(z) = 1 + z + \frac{1}{2} z^2 + \frac{1}{4}(2 - \sqrt{2}) z^3 + \frac{1}{8}(3 - 2\sqrt{2})z^4$ for all choices of the $\mathrm{EES}$ parameter $x$.
\end{proof}

The stability domains of $\mathrm{EES}(2,5)$ and $\mathrm{EES}(2,7)$ are shown in Figure \ref{fig:ees_stability}.

\begin{figure}[ht]
    \centering

    \begin{subfigure}[tbp]{0.49\linewidth}
        \centering
        \includegraphics[width = \linewidth, trim={0.1cm 0.1cm 0.1cm 0.8cm},clip]{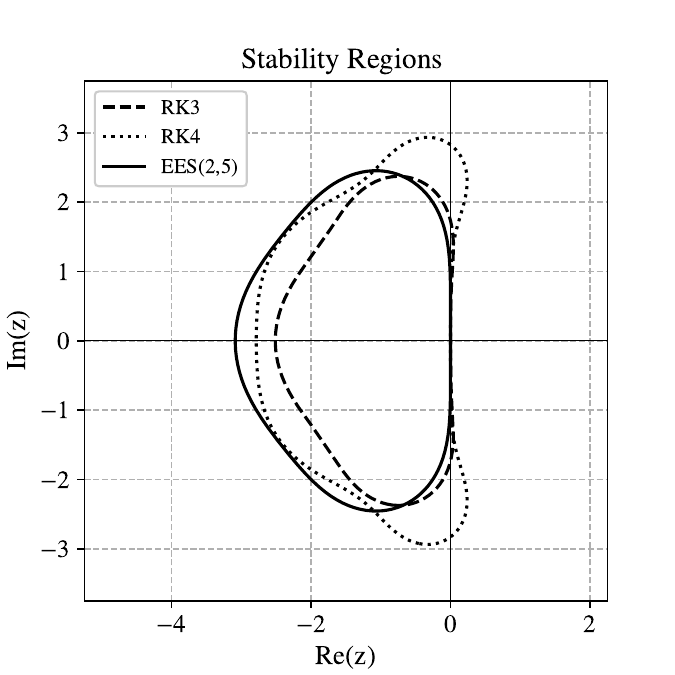}
    \end{subfigure}%
    ~ 
    \begin{subfigure}[tbp]{0.49\linewidth}
        \centering
        \includegraphics[width = \linewidth, trim={0.1cm 0.1cm 0.1cm 0.8cm},clip]{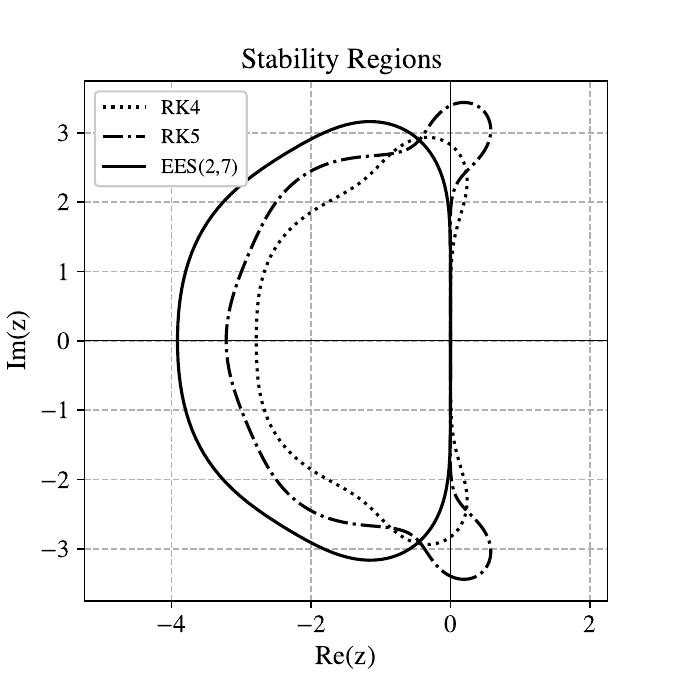}
    \end{subfigure}
    \caption{\citep[Fig. 2]{shmelev2025explicit} Stability domains for Kutta's RK3, RK4, Nystr{\"o}m's RK5, $EES(2,5;1/10)$ and $EES(2,7;\frac{1}{14}(5 - 3 \sqrt{2}))$.}
    \label{fig:ees_stability}
\end{figure}

\section{Reversible Heun}
\label{appendix:reversible_heun}
The Reversible Heun scheme introduced in \citet{kidger2021efficient} is an algebraically reversible analogue of the classical Heun Runge-Kutta scheme. The scheme is efficient, requiring only one function evaluation per step, but it suffers from particularly low linear stability, as noted in \cref{sec:stability}.

We maintain three quantities at each step, \(x_n\), \(\hat{x}_n\), and \(k_n\), where \(k_n = f(\hat{x}_n, t_n)\). We initialise \(\hat{x}_0 = x_0\) and \(k_0 = f(x_0, t_0)\). The forward update from \(t_n\) to \(t_{n+1}\) is
\[
\hat{x}_{n+1} = 2x_n - \hat{x}_n + hk_n,
\]
\[
k_{n+1} = f(\hat{x}_{n+1}, t_{n+1}),
\]
\[
x_{n+1} = x_n + \frac{h}{2}\bigl(k_n + k_{n+1}\bigr).
\]
To step backwards from \(t_{n+1}\) to \(t_n\), we use
\[
\hat{x}_n = 2x_{n+1} - \hat{x}_{n+1} - hk_{n+1},
\]
\[
k_n = f(\hat{x}_n, t_n),
\]
\[
x_n = x_{n+1} - \frac{h}{2}\bigl(k_n + k_{n+1}\bigr).
\]
Therefore, in addition to storing \(x_N\) (and the endpoint time \(t_N\)), we also need to store the endpoint values \(\hat{x}_N\) and \(k_N\).

\section{McCallum--Foster}
\label{appendix:mccallum_foster}

The McCallum--Foster scheme \citep{mccallum2024efficient} gives a general
construction for turning any explicit single-step ODE solver into an
algebraically reversible solver. The order of the
resulting method is inherited from the chosen base solver. The method also has
improved linear stability over Reversible Heun, controlled by a coupling parameter \(\zeta\).

Let \(\Psi_h(t,x)\) denote the one-step increment of the chosen base solver, so
that the corresponding non-reversible step is \(x_{n+1}=x_n+\Psi_h(t_n,x_n)\).
We maintain two quantities at each step, \(x_n\) and \(\hat{x}_n\), and initialise
\(\hat{x}_0=x_0\). For \(\zeta\in(0,1]\), the forward update from \(t_n\) to
\(t_{n+1}\) is
\[
x_{n+1}
=
\zeta x_n + (1-\zeta)\hat{x}_n + \Psi_h(t_n,\hat{x}_n),
\]
\[
\hat{x}_{n+1}
=
\hat{x}_n - \Psi_{-h}(t_{n+1},x_{n+1}).
\]
To step backwards from \(t_{n+1}\) to \(t_n\), we use
\[
\hat{x}_n
=
\hat{x}_{n+1} + \Psi_{-h}(t_{n+1},x_{n+1}),
\]
\[
x_n
=
\zeta^{-1}x_{n+1}
+
(1-\zeta^{-1})\hat{x}_n
-
\zeta^{-1}\Psi_h(t_n,\hat{x}_n).
\]
Therefore, in addition to storing \(x_N\) and the endpoint time \(t_N\), we also
need to store the endpoint value \(\hat{x}_N\).

\subsection{Linear stability}
\label{appendix:mccallum_linear_stability}
\citet[Theorem~2.3]{mccallum2024efficient} characterise the linear stability
region of the McCallum--Foster method. Consider the scalar test equation
\(\dot{y}=\lambda y\), with \(\lambda<0\), and let \(z=h\lambda\). Following
their notation, let \(R\) denote the \textit{transfer function} of the base solver
increment, so that
\[
\Psi_h(t_n,y_n)=R(z)y_n .
\]
Then the reversible scheme is linearly stable if and only if
\[
|\Gamma(z)| < 1+\zeta,
\qquad
\Gamma(z)
=
1+\zeta
-
(1-\zeta)R(-z)
-
R(-z)R(z).
\]
Thus, for any explicit Runge--Kutta base method, the stability region is obtained
by substituting the corresponding \(R\). The resulting region is smaller than
that of the base method, but remains non-trivial.

\section{Additional Experiments}
\label{appendix:additional_results}

Beyond the experiments in the main paper, in \cref{appendix:smoothing_methods} we report the full results of our image editing experiments, with and without Smooth Diffusion, and with three additional vector-field smoothing variants. In \cref{appendix:ode_formulation_ablations}, we report the results of a large ablation over the ODE parametrisation, the step discretisation schedule, and whether the solver is applied in a semilinear integrating-factor form or a black-box form. This ablation justifies the solver forms proposed in \cref{sec:ees_solvers_for_diffusion} and highlights alternative forms that may also be used.

We provide additional results for greyscale image editing in \cref{sec:greyscale}, experiments using SDXL in \cref{sec:sdxl}, results across different numbers of function evaluations in \cref{sec:nfe}, and alternative EDICT and BDIA hyperparameter settings in \cref{sec:edict_bdia_hyperparameters}. We also characterise the distribution of the inverted variable in \cref{sec:latent_terminal} and study the effect of varying the guidance scale in \cref{sec:guidance_scale}.

\subsection{Smoothing Method Ablation}
\label{appendix:smoothing_methods}

Here, we study the impact of vector-field smoothing methods previously proposed to improve DDIM inversion and DDIM-based editing, and evaluate them on reversible and near-reversible solvers. To our knowledge, this has not been previously reported. We hope this study helps position reversible solvers within the broader DDIM inversion literature, where smoothing methods are widely used but typically treated as orthogonal to solver design.

It is well known that DDIM inversion error typically increases with the classifier-free guidance scale, and several works propose inference-time tricks to mitigate this effect. Unlike Smooth Diffusion, these approaches do not modify model parameters. Instead, they change how conditioning is applied during inversion and/or sampling.

\paragraph{Vector-field smoothing and guidance tricks.}
\textbf{Negative Prompt Inversion (NPI)} \citep{miyake2025negative} sets \(c_{\texttt{null}} := c_{\texttt{src}}\), so the reverse-direction guidance cancels and the inversion becomes guidance-free:
\(
\hat{\epsilon}_\theta(x_t, t, c_{\texttt{src}})=\epsilon_\theta(x_t, t, c_{\texttt{src}}).
\)
\textbf{Proximal Guidance (Prox)} \citep{han2023improving} further reduces guidance in the forward (editing) direction by setting the guidance scale to \(g=1\) at spatial locations where the conditional predictions for the source and target prompts are already similar, that is, where
\(
\left\lVert \epsilon_\theta(x_t,t,c_{\texttt{trg}})-\epsilon_\theta(x_t,t,c_{\texttt{src}})\right\rVert_2
\)
is small. Intuitively, Prox suppresses unnecessary conditional forcing in regions that are unlikely to change, which can reduce background drift. Both NPI and Prox were originally introduced in the context of DDIM-based editing. Here, we apply them across all solvers. \textbf{Smooth Diffusion} \citep{guo2024smooth} fine-tunes Stable Diffusion to produce smoother trajectories and reports improved DDIM inversion and editing. Since it changes the underlying model dynamics, it is complementary to NPI and Prox.

Throughout this appendix, we use ``smoothing method'' as an umbrella term for both model-based smoothing (Smooth Diffusion) and inference-time guidance tricks (such as NPI and Proximal Guidance) that are designed to reduce inversion error and background drift.

\paragraph{Results and qualitative examples.}
We report PIE-Bench image-editing results for each smoothing method in \cref{tab:solver_metrics_PIEBench_all}. In \cref{fig:metrics_tricks}, we visualise the corresponding metric shifts for each solver under smoothing. We also report CLIP scores for the large prompt-deviation editing task (\cref{sec:large_edits_and_stability}) in \cref{tab:clip_whole_by_setting}.

\begin{table*}[tbp]
\caption{PIE-Bench image editing metrics for the \textit{small edits} task for each solver under different smoothing methods. We report background similarity using PSNR and LPIPS, and edit quality using CLIP similarity on the whole image and on the masked edited region. Within each block, we highlight the \textbf{best} and \underline{second-best} results. See \cref{fig:metrics_tricks} for the corresponding per-solver shifts caused by adding a smoothing method. EES(2,7) generally achieves strongest edit quality scores, while McCallum--Foster (Euler) achieves the strongest background similarity scores, illustrating the trade-off between edit quality and background preservation discussed in \cref{sec:experiments_editing}.}
\label{tab:solver_metrics_PIEBench_all}
\vskip 0.10in
\centering
\scriptsize
\renewcommand{\arraystretch}{1.15}
\setlength{\tabcolsep}{6.0pt}
\resizebox{\textwidth}{!}{%
\begin{tabular}{l|c|c|c|c|c}
\hline\hline
\multicolumn{1}{c|}{\textbf{Solver}} &
\begin{tabular}[c]{@{}c@{}}\textbf{Stable Diffusion 1.5}\end{tabular} &
\begin{tabular}[c]{@{}c@{}}\textbf{Stable Diffusion 1.5}\\ \textbf{+ Smooth Diffusion}\end{tabular} &
\begin{tabular}[c]{@{}c@{}}\textbf{Stable Diffusion 1.5}\\ \textbf{+ Negative Prompt Inversion}\end{tabular} &
\begin{tabular}[c]{@{}c@{}}\textbf{Stable Diffusion 1.5}\\ \textbf{+ Proximal Guidance}\end{tabular} &
\begin{tabular}[c]{@{}c@{}}\textbf{Stable Diffusion 1.5}\\ \textbf{+ Smooth Diffusion}\\ \textbf{+ Proximal Guidance}\end{tabular} \\
\hline\hline

\noalign{\vskip 0.35ex}
\multicolumn{6}{c}{\textbf{Edit Quality}} \\
\noalign{\vskip 0.35ex}
\hline\hline

\multicolumn{6}{c}{\textbf{CLIP Similarity: Whole}$\,\uparrow$} \\
\hline
EDICT                         & 24.15 & 24.17 & \underline{24.51} & 24.48 & 24.19 \\
Reversible Heun               & 24.27 & 23.89 & 24.28 & 24.19 & 23.86 \\
BDIA                          & \underline{24.33} & 24.07 & 24.34 & 24.25 & 24.02 \\
BELM                          & 23.91 & 24.21 & 24.32 & 24.28 & 24.25 \\
DDIM                          & 23.85 & \underline{24.53} & 24.49 & 24.42 & \underline{24.38} \\
Rex (Euler)                     & 22.67 & 23.57 & 23.93 & 23.99 & 23.72 \\
Rex (Midpoint)                  & 22.93 & \textbf{24.54} & 24.50 & \underline{24.51} & \textbf{24.52} \\
McCallum--Foster (Euler)        & 23.24 & 23.18 & 23.20 & 23.14 & 23.08 \\
McCallum--Foster (Midpoint)     & 23.49 & 23.59 & 23.68 & 23.65 & 23.63 \\
EES(2,5)                      & 24.21 & 24.37 & 24.47 & 24.38 & 24.25 \\
EES(2,7)                      & \textbf{24.36} & 24.47 & \textbf{24.60} & \textbf{24.52} & 24.35 \\
\hline

\multicolumn{6}{c}{\textbf{CLIP Similarity: Edited}$\,\uparrow$} \\
\hline
EDICT                         & 21.45 & 21.26 & 21.73 & 21.66 & 21.26 \\
Reversible Heun               & 21.41 & 21.05 & 21.43 & 21.33 & 20.99 \\
BDIA                          & 21.41 & 21.11 & 21.38 & 21.33 & 21.08 \\
BELM                          & 21.02 & 21.28 & 21.56 & 21.51 & 21.41 \\
DDIM                          & 21.33 & 21.59 & 21.78 & 21.71 & 21.41 \\
Rex (Euler)                     & 20.16 & 20.64 & 20.78 & 20.85 & 20.87 \\
Rex (Midpoint)                  & 20.56 & \underline{21.62} & \underline{21.86} & \underline{21.72} & \textbf{21.62} \\
McCallum--Foster (Euler)        & 20.50 & 20.42 & 20.44 & 20.39 & 20.31 \\
McCallum--Foster (Midpoint)     & 20.70 & 20.68 & 20.85 & 20.85 & 20.80 \\
EES(2,5)                      & \underline{21.57} & 21.54 & 21.77 & 21.65 & 21.44 \\
EES(2,7)                      & \textbf{21.80} & \textbf{21.64} & \textbf{21.89} & \textbf{21.83} & \underline{21.51} \\
\hline\hline

\noalign{\vskip 0.8ex}
\multicolumn{6}{c}{\textbf{Background Similarity}} \\
\noalign{\vskip 0.35ex}
\hline\hline

\multicolumn{6}{c}{\textbf{PSNR}$\,\uparrow$} \\
\hline
EDICT                         & 25.18 & 28.12 & 26.18 & 26.54 & 28.45 \\
Reversible Heun               & 27.59 & 28.89 & 28.22 & 28.52 & \underline{29.29} \\
BDIA                          & \underline{27.99} & \underline{28.91} & \underline{28.33} & \underline{28.59} & 29.25 \\
BELM                          & 22.55 & 25.90 & 24.17 & 24.55 & 27.01 \\
DDIM                          & 22.36 & 26.20 & 24.27 & 24.61 & 27.06 \\
Rex (Euler)                     & 21.74 & 26.37 & 23.97 & 23.78 & 26.69 \\
Rex (Midpoint)                  & 21.79 & 26.35 & 23.90 & 24.02 & 27.17 \\
McCallum--Foster (Euler)        & \textbf{29.54} & \textbf{29.80} & \textbf{29.75} & \textbf{29.87} & \textbf{30.02} \\
McCallum--Foster (Midpoint)     & 27.74 & 28.81 & 28.32 & 28.45 & 29.00 \\
EES(2,5)                      & 23.38 & 27.26 & 25.25 & 25.58 & 27.97 \\
EES(2,7)                      & 22.75 & 26.89 & 24.64 & 24.98 & 27.70 \\
\hline

\multicolumn{6}{c}{\textbf{LPIPS}$\times 10^3\,\downarrow$} \\
\hline
EDICT                         & 79.91 & 45.21 & 66.55 & 61.71 & 43.05 \\
Reversible Heun               & 52.45 & \underline{39.50} & 46.44 & \underline{43.63} & \underline{37.11} \\
BDIA                          & \underline{50.52} & 40.86 & 47.76 & 44.82 & 38.58 \\
BELM                          & 131.14 & 75.24 & 104.48 & 97.65 & 61.93 \\
DDIM                          & 148.43 & 81.43 & 116.97 & 109.76 & 70.60 \\
Rex (Euler)                     & 136.31 & 60.31 & 96.05 & 110.52 & 65.41 \\
Rex (Midpoint)                  & 143.03 & 67.88 & 105.77 & 105.92 & 59.80 \\
McCallum--Foster (Euler)        & \textbf{36.80} & \textbf{34.50} & \textbf{35.55} & \textbf{34.85} & \textbf{33.62} \\
McCallum--Foster (Midpoint)     & 52.33 & 41.37 & \underline{46.38} & 46.51 & 41.46 \\
EES(2,5)                      & 117.03 & 59.30 & 87.63 & 81.32 & 50.83 \\
EES(2,7)                      & 126.89 & 62.52 & 96.16 & 89.09 & 53.10 \\
\hline\hline
\end{tabular}%
}
\vskip -0.10in
\end{table*}

\begin{figure}[tbp]
  \centering

  \begin{subfigure}[b]{0.49\linewidth}
    \centering
    \includegraphics[width=\linewidth]{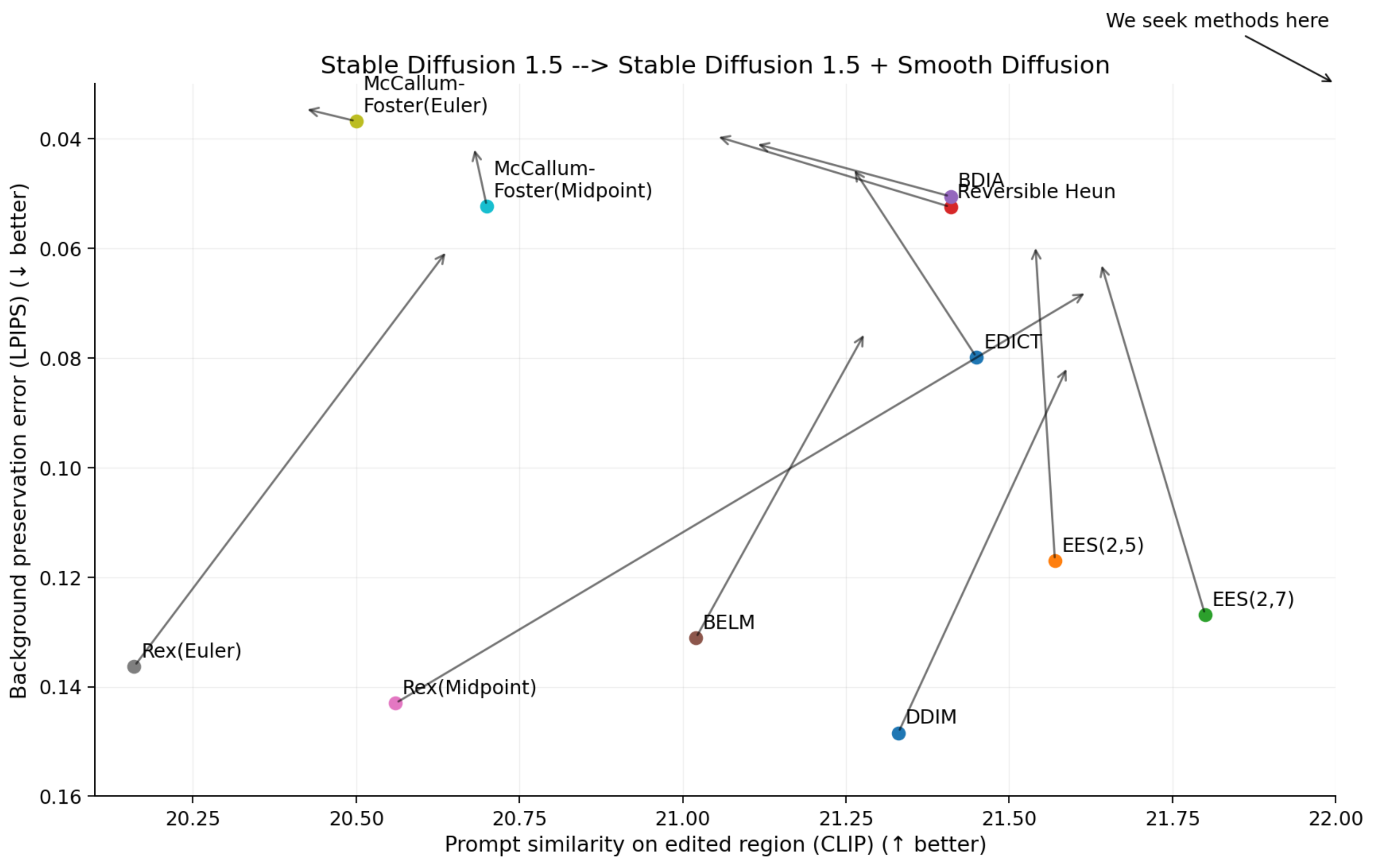}
  \end{subfigure}\hfill
  \begin{subfigure}[b]{0.49\linewidth}
    \centering
    \includegraphics[width=\linewidth]{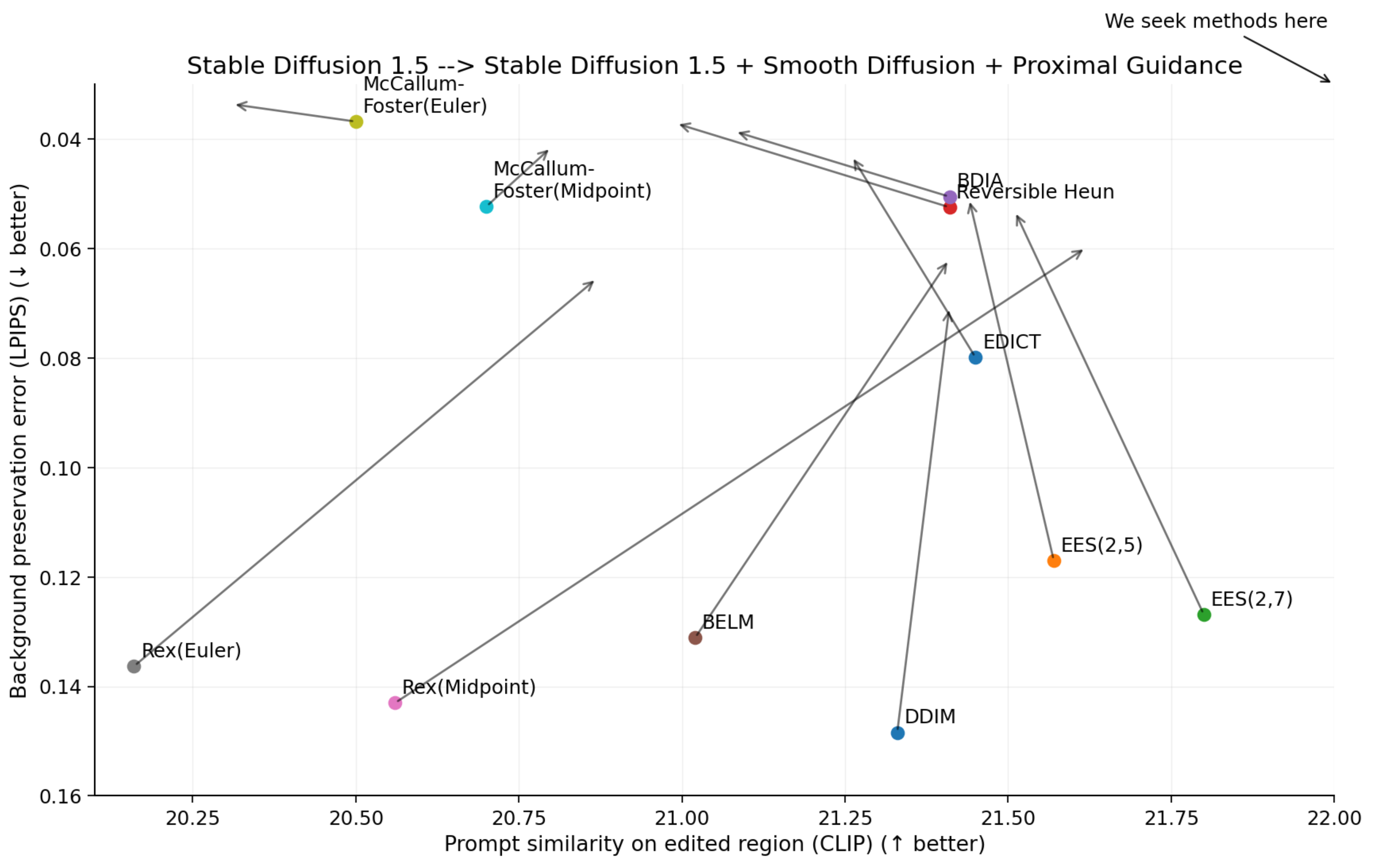}
  \end{subfigure}

  \medskip

  \begin{subfigure}[b]{0.49\linewidth}
    \centering
    \includegraphics[width=\linewidth]{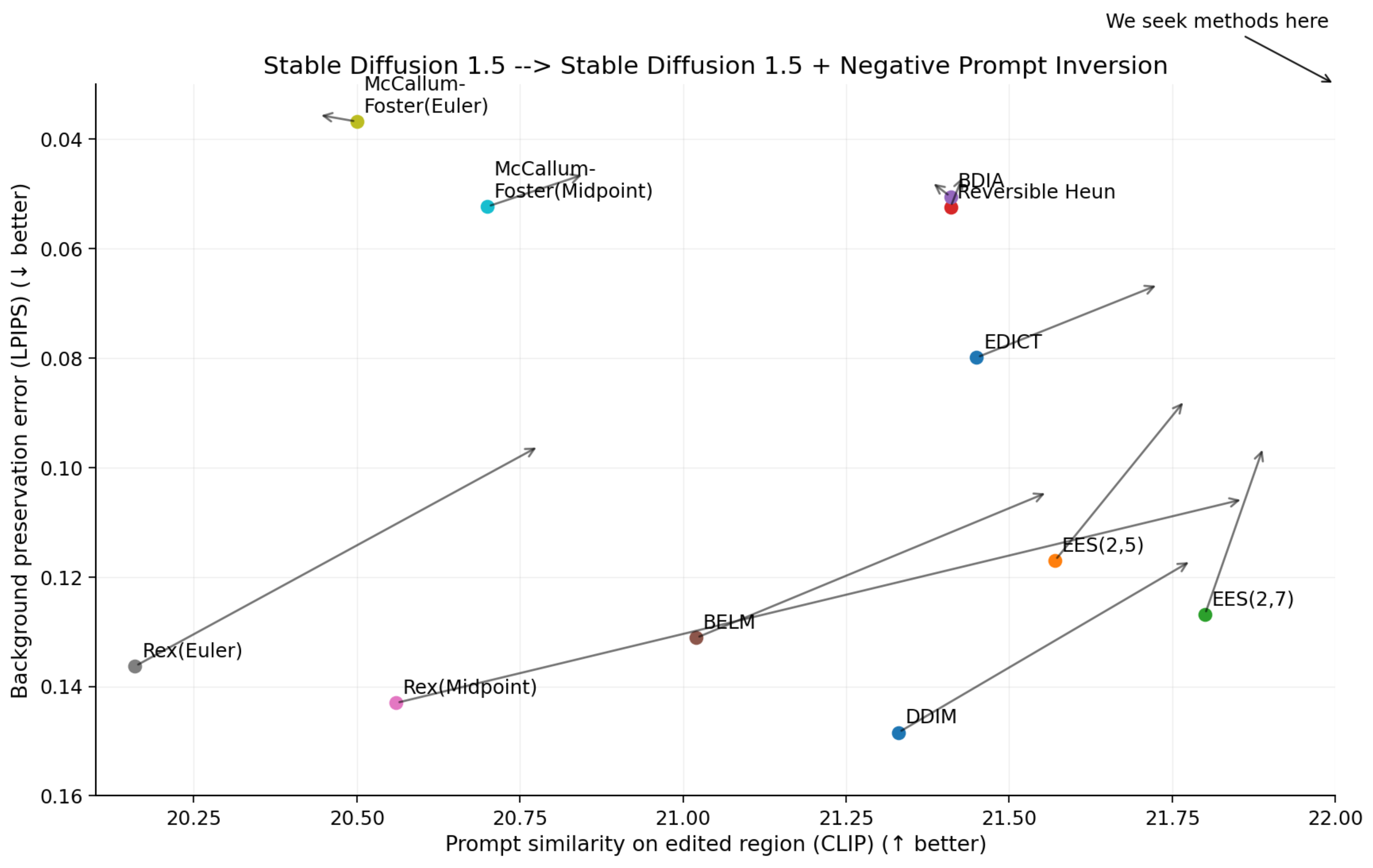}
  \end{subfigure}\hfill
  \begin{subfigure}[b]{0.49\linewidth}
    \centering
    \includegraphics[width=\linewidth]{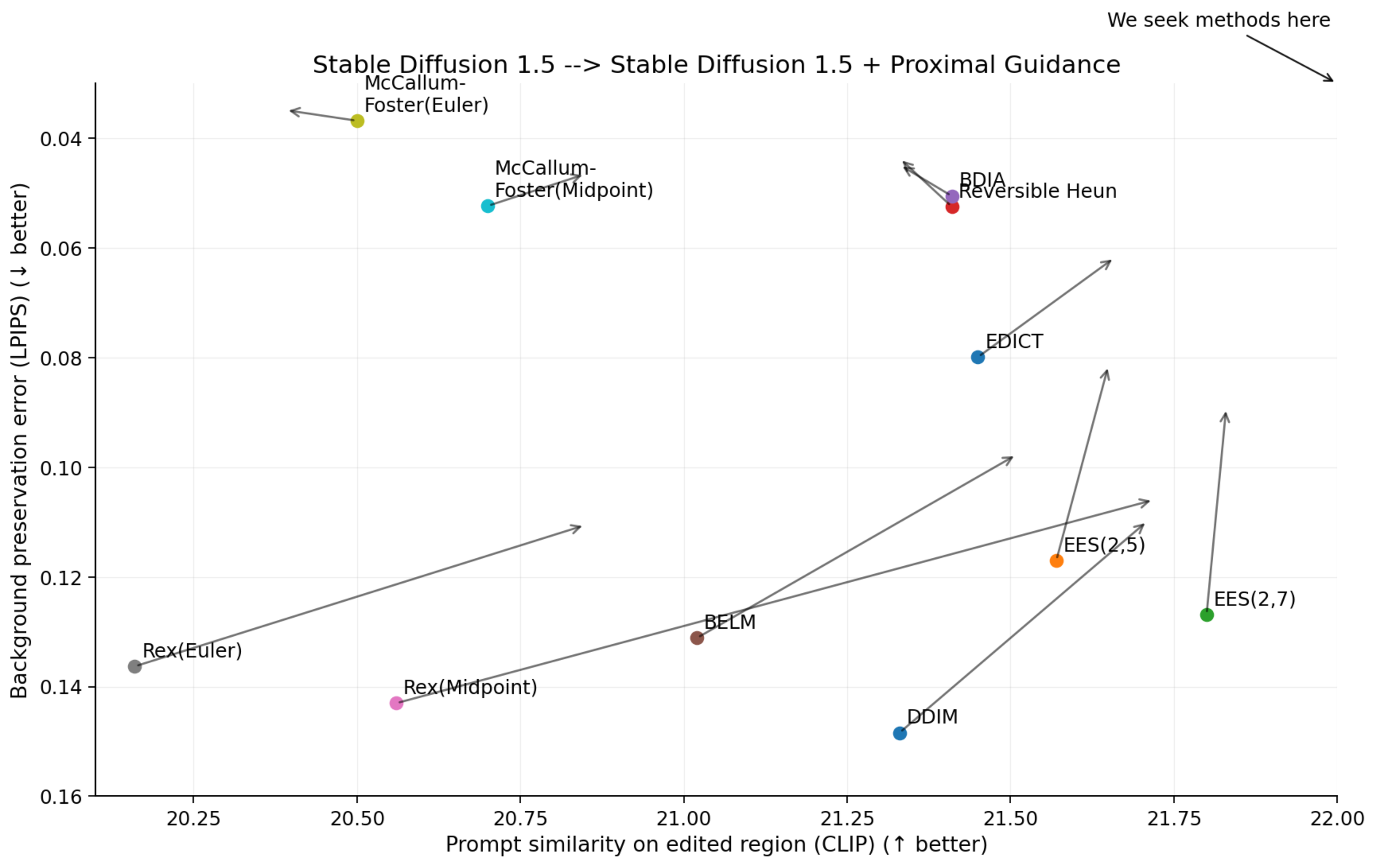}
  \end{subfigure}

  \caption{Effect of each smoothing method on PIE-Bench metrics for each solver (arrows indicate the shift from no smoothing to the corresponding smoothing method). From top-left to bottom-right: Smooth Diffusion, Smooth Diffusion + Proximal Guidance, NPI, and Proximal Guidance. We are seeking methods in the top right corner.}
  \label{fig:metrics_tricks}
\end{figure}

\begin{table}[tbp]
\caption{Image editing metrics for the \textit{large edits} task (\cref{sec:large_edits_and_stability}) under different smoothing methods. Since the original edit masks are no longer meaningful, we use CLIP similarity on the full image as our main metric. We report DDIM for reference, but the comparison is among reversible and near-reversible solvers, where instability under large edits is most common. We highlight the \textbf{best} and \underline{second-best} (near) reversible solvers.}
\label{tab:clip_whole_by_setting}
\vskip 0.10in
\centering
\scriptsize
\renewcommand{\arraystretch}{1.15}
\setlength{\tabcolsep}{3.0pt}
\resizebox{\linewidth}{!}{%
\begin{tabular}{l|c|c|c|c|c}
\hline\hline
\textbf{Solver} &
\multicolumn{5}{c}{\textbf{CLIP Similarity (Whole)} $\uparrow$} \\
\cline{2-6}
&
\begin{tabular}[c]{@{}c@{}}\textbf{Stable Diffusion 1.5}\\ \end{tabular} &
\begin{tabular}[c]{@{}c@{}}\textbf{Stable Diffusion 1.5}\\ \textbf{+ Smooth Diffusion}\end{tabular} &
\begin{tabular}[c]{@{}c@{}}\textbf{Stable Diffusion 1.5}\\ \textbf{+ Negative Prompt Inversion}\end{tabular} &
\begin{tabular}[c]{@{}c@{}}\textbf{Stable Diffusion 1.5}\\ \textbf{+ Proximal Guidance}\end{tabular} &
\begin{tabular}[c]{@{}c@{}}\textbf{Stable Diffusion 1.5}\\ \textbf{+ Smooth Diffusion}\\ \textbf{+ Proximal Guidance}\end{tabular} \\
\hline

EDICT                    & 23.11 & 21.88 & 23.85 & 23.57 & 22.11\\
Reversible Heun          & 21.74 & 20.12 & 21.58 & 21.03 & 19.51\\
BDIA                     & 21.09 & 20.77 & 21.11 & 20.55 & 20.39\\
BELM                     & 20.18 & 22.19 & 22.98 & 22.71 & 22.28\\
Rex (Euler)                & 15.39 & 15.17 & 15.29 & 15.54 & 15.22\\
Rex (Midpoint)             & 18.45 & 22.77 & 22.42 & 20.22 & 20.95\\
McCallum--Foster (Euler)   & 12.44 & 11.49 & 11.85 & 11.42 & 10.95\\
McCallum--Foster (Midpoint)& 16.02 & 16.05 & 15.02 & 14.65 & 15.12\\
EES(2,5)                 & \underline{24.12} & \underline{23.63} & \textbf{25.15} & \underline{24.77} & \underline{23.31}\\
EES(2,7)                 & \textbf{24.40} & \textbf{23.66} & \underline{25.05} & \textbf{24.83} & \textbf{23.39}\\
\hline

\multicolumn{6}{c}{\rule{0pt}{3.8mm}}\\[-1.2mm]
\hline
DDIM                     & 23.97 & 23.90 & 25.30 & 25.12 & 23.78\\

\hline\hline
\end{tabular}%
}
\vskip -0.10in
\end{table}

\subsection{ODE Formulation Ablations}
\label{appendix:ode_formulation_ablations}
In this section, we summarise the design choices underlying our proposed diffusion samplers—namely the ODE parametrisation (\cref{appendix:ode_parametrisation}), the step discretisation schedule (\cref{appendix:discretisation_schedule}), and whether the solver is applied in a semilinear integrating-factor form or a black-box form (\cref{appendix:semilinear_vs_black_box}). We then report, in \cref{sec:ablation_results}, a systematic evaluation of these variants for each proposed solver.

\subsubsection{ODE Parametrisation}
\label{appendix:ode_parametrisation}

For an unknown data distribution \(q_0(x_0)\) of $d$-dimensional images \(x_0 \in \mathbb{R}^d\), we consider a variance-preserving (VP) diffusion model whose forward perturbation marginals at time \(t \in [0, T]\), \(T>0\) satisfy
\[
q_{t}(x_t\mid x_0)=\mathcal N(x_t\mid \alpha_t x_0,\sigma_t^2 I),
\qquad \alpha_t^2+\sigma_t^2=1,
\qquad \alpha_t, \sigma_t>0
\]
so that \(x_t=\alpha_t x_0+\sigma_t\epsilon\) with \(\epsilon\sim\mathcal N(0,I)\).
The corresponding probability-flow ODE can be written in the standard form
\[
\frac{dx_t}{dt}=f(t)\,x_t+\frac{g(t)^2}{2\sigma_t}\,\epsilon_\theta(x_t,t),
\qquad
f(t)=\frac{d\log \alpha_t}{dt},
\quad
g(t)^2=\frac{d\sigma_t^2}{dt}-2f(t)\sigma_t^2.
\]

From \(\alpha_t^2+\sigma_t^2=1\), we then have
\[
\frac{d\sigma_t^2}{dt}=-\frac{d\alpha_t^2}{dt}=-2\alpha_t\frac{d\alpha_t}{dt}=-2\alpha_t^2 f(t),
\quad\Longrightarrow\quad
g(t)^2=-2f(t),
\]
and therefore
\begin{equation}
\frac{dx_t}{dt}
=
f(t)\Bigl(x_t-\epsilon_\theta(x_t,t)/\sigma_t\Bigr)
=
\frac{d}{dt}\bigl[\log \alpha_t\bigr]\,
\Bigl(x_t-\epsilon_\theta(x_t,t)/\sigma_t\Bigr).
\label{eq:ode_t_original}
\end{equation}

\paragraph{Discrete-time schedules.}
The above ODE form does not prevent us from supporting discrete-time diffusion models, for which the noise schedule is defined at a discrete set of times \(\alpha_0, \alpha_1\ldots,\alpha_T\). Following the implementation in DPM \citep{lu2022dpm}, we construct a continuous \(\alpha_t\) by linearly interpolating
\(\log\alpha_t\) between neighbouring discrete timesteps.
We then set \(\sigma_t=\sqrt{1-\alpha_t^2}\).
While \eqref{eq:ode_t_original} is convenient conceptually, it requires approximating the gradient \(d(\log\alpha_t)/dt\), as part of the vector field. We therefore consider the following equivalent reparametrisations.

\paragraph{Half-logSNR parametrisation (\(\epsilon\)-prediction).}
Define the variable \(\lambda_t\) as
\[
\lambda_t:=\log\frac{\alpha_t}{\sigma_t}=\log\alpha_t-\log\sigma_t,
\]
which is strictly decreasing in \(t\) and hence invertible.
For VP schedules one can show \(\frac{d\log \alpha_\lambda}{d\lambda}=\sigma_\lambda^2\), and by change-of-variables the ODE becomes
\begin{equation}
\frac{dx_\lambda}{d\lambda}
=
\sigma_\lambda^2\,x_\lambda-\sigma_\lambda\,\epsilon_\theta(x_\lambda,\lambda).
\label{eq:ode_lambda_eps}
\end{equation}
This is the \(\lambda\)-domain diffusion ODE used by DPM-Solver \citep{lu2022dpm}.

\paragraph{Half-logSNR parametrisation (\(x_0\)-prediction).}
DPM-Solver++ \citep{lu2025dpm} instead solves the equivalent ODE expressed in terms of the data predictor
\[
x_\theta(x_t,t):=\frac{x_t-\sigma_t\epsilon_\theta(x_t,t)}{\alpha_t}.
\]
Substituting \(\sigma_\lambda\epsilon_\theta=x_\lambda-\alpha_\lambda x_\theta\) into \eqref{eq:ode_lambda_eps} gives
\begin{equation}
\frac{dx_\lambda}{d\lambda}
=
-\alpha_\lambda^2\,x_\lambda+\alpha_\lambda\,x_\theta(x_\lambda,\lambda).
\label{eq:ode_lambda_x0}
\end{equation}

\paragraph{Scaled-noise parametrisation.}
One can also define the integration variable
\[
u_t:=\frac{\sigma_t}{\alpha_t}=e^{-\lambda_t},
\qquad
z_t:=\frac{x_t}{\alpha_t}.
\]
Then \(z_t=x_0+u_t\epsilon\), and using \(x_\theta=z_t-u_t\epsilon_\theta\) we obtain the DDIM-style ODE used in BELM \citep{wang2024belm}
\begin{equation}
\frac{dz}{du}
=
\epsilon_\theta(x_t,t)
=
\frac{z-x_\theta(z,u)}{u},
\qquad x_t=\alpha_t z.
\label{eq:ode_u_ddim}
\end{equation}
Notably, this parametrisation removes the linear drift term entirely, so it can be treated as a purely non-linear ODE.

\subsubsection{Semilinear vs.\ black-box}
\label{appendix:semilinear_vs_black_box}
A semilinear ODE in a variable \(\lambda\) has the form
\[
\frac{dx_\lambda}{d\lambda}=a_\lambda x_\lambda+b(x_\lambda,\lambda),
\]
where \(b\) is non-linear (and in our setting contains a neural network).
Black-box solvers (e.g.\ standard Runge--Kutta) discretise the full right-hand side, introducing
discretisation error in both the linear and non-linear terms.
In contrast, one can also solve such ODE using the integrating factor method, which solves the linear part exactly and leaves only a weighted integral of the non-linear part.

\paragraph{The integrating factor method.}
Define
\[
I_\lambda=\exp\!\Bigl(-\int^{\lambda} a_u\,du\Bigr),
\qquad
y_\lambda=I_\lambda x_\lambda.
\]
Then \(y_\lambda\) satisfies
\[
\frac{dy_\lambda}{d\lambda}
=
I_\lambda\,b\!\Bigl(\frac{y_\lambda}{I_\lambda},\lambda\Bigr),
\qquad
x_\lambda=\frac{y_\lambda}{I_\lambda}.
\]

Since equations \cref{eq:ode_lambda_eps} and \cref{eq:ode_lambda_x0} are semilinear and the integrating factors simplify to \(\frac{1}{\alpha_\lambda}\) and \(\frac{1}{\sigma_\lambda}\), this method is of great convenience. We now present the solutions of the respective equations using the integrating factor method.

\paragraph{The solution to \(\epsilon\)-prediction in \(\lambda\).}
For \eqref{eq:ode_lambda_eps}, we have \(a_\lambda=\sigma_\lambda^2\) and \(b(x,\lambda)=-\sigma_\lambda\epsilon_\theta(x,\lambda)\).
Using \(d\log\alpha_\lambda/d\lambda=\sigma_\lambda^2\), the integrating factor is \(I_\lambda =  1/\alpha_\lambda\), so taking
\[
y_\lambda=\frac{x_\lambda}{\alpha_\lambda}
\]
gives
\begin{equation}
\frac{dy_\lambda}{d\lambda}
=
-\frac{\sigma_\lambda}{\alpha_\lambda}\,\epsilon_\theta(x_\lambda,\lambda)
=
-\,e^{-\lambda}\,\epsilon_\theta(x_\lambda,\lambda),
\qquad x_\lambda=\alpha_\lambda y_\lambda.
\label{eq:ode_lambda_eps_semilinear}
\end{equation}
Integrating over \([\lambda_s,\lambda_t]\) yields
\[
y_{\lambda_t}=y_{\lambda_s}-\int_{\lambda_s}^{\lambda_t} e^{-u}\,\epsilon_\theta(x_u,u)\,du,
\]
and mapping back to \(x_\lambda\) recovers the DPM-Solver exact-solution form
\begin{equation}
x_{\lambda_t}
=
\frac{\alpha_{\lambda_t}}{\alpha_{\lambda_s}}\,x_{\lambda_s}
-\alpha_{\lambda_t}\int_{\lambda_s}^{\lambda_t} e^{-u}\,\epsilon_\theta(x_u,u)\,du.
\label{eq:ode_lambda_eps_exact}
\end{equation}

\paragraph{The solution to \(x_0\)-prediction in \(\lambda\).}
For \eqref{eq:ode_lambda_x0}, we have \(a_\lambda=-\alpha_\lambda^2\) and \(b(x,\lambda)=\alpha_\lambda x_\theta(x,\lambda)\).
Since \(d\log\sigma_\lambda/d\lambda=-\alpha_\lambda^2\), we get \(I_\lambda= 1/\sigma_\lambda\), and with
\[
y_\lambda=\frac{x_\lambda}{\sigma_\lambda}
\]
we obtain
\begin{equation}
\frac{dy_\lambda}{d\lambda}
=
\frac{\alpha_\lambda}{\sigma_\lambda}\,x_\theta(x_\lambda,\lambda)
=
e^{\lambda}\,x_\theta(x_\lambda,\lambda).
\label{eq:ode_lambda_x0_semilinear}
\end{equation}
Hence, the corresponding exact-solution form is
\begin{equation}
x_{\lambda_t}
=
\frac{\sigma_{\lambda_t}}{\sigma_{\lambda_s}}\,x_{\lambda_s}
+\sigma_{\lambda_t}\int_{\lambda_s}^{\lambda_t} e^{u}\,x_\theta(x_u,u)\,du,
\label{eq:ode_lambda_x0_exact}
\end{equation}
which is precisely the DPM-Solver++ reformulation.

\paragraph{``Semilinear'' solvers.}
When we report the metrics for a \textit{semilinear} solver in \cref{sec:ablation_results}, we apply the numerical method to the transformed ODE in \(y\)
(i.e. \eqref{eq:ode_lambda_eps_semilinear}, \eqref{eq:ode_lambda_x0_semilinear}) and then map back to \(x\).
When we report a black-box solver, we apply the same numerical method directly to the original ODE in \(x\)
(i.e.\ \eqref{eq:ode_lambda_eps}, \eqref{eq:ode_lambda_x0}).
Finally, the scaled-noise parametrisation \eqref{eq:ode_u_ddim} has no linear drift term, so it does not require an integrating factor.

\subsubsection{Step discretisation schedule}
\label{appendix:discretisation_schedule}
For each ODE parametrisation, we consider two choices of step schedule.

\paragraph{Uniform steps in the integration variable.}
Given an integration variable \(\tau\) (such as \(\lambda, u\)), we take \(\tau\)-uniform steps by linearly spacing the grid
\(\{\tau_0,\dots,\tau_N\}\) between the endpoints and applying the solver with constant step size
\(\Delta\tau=(\tau_N-\tau_0)/N\).

\paragraph{Steps induced by a uniform \(t\)-grid.}
Alternatively, we start from a uniform grid in the original diffusion time \(t\), i.e.\ \(t_i=i/N\), and map it to the
integration variable \(\tau\) via \(\tau_i:=\tau(t_i)\).
This yields a non-uniform step size \(\Delta\tau_i=\tau_{i+1}-\tau_i\).

\subsubsection{Results}
\label{sec:ablation_results}
To evaluate a given ODE configuration, we run the 140 editing tasks in the \textit{random images} category of PIE-Bench and report CLIP and LPIPS. \cref{fig:ablations} shows a systematic evaluation over the three design dimensions introduced above and motivates the configurations used in \cref{sec:ees_solvers_for_diffusion}. For the EES schemes, edit fidelity is already strong across settings, so we select the variant that most consistently improves background preservation. This corresponds to the semilinear formulation with $\lambda$-uniform discretisation and the half-logSNR $x$-parameterisation for both EES(2,5) and EES(2,7). Across solvers, $\lambda$-uniform discretisation tends to yield better reversibility than $t$-uniform discretisation, while the remaining design dimensions have a smaller effect.

For Reversible Heun, which is already strong in reversibility relative to other solvers, we choose the configuration with the best overall balance of edit alignment and background preservation. This is the black-box formulation with $t$-uniform discretisation and the half-logSNR $x$-parameterisation. The difference between half-logSNR $x$- and $\epsilon$-parameterisations is minor in this setting. We also observe that, for Reversible Heun, the black-box formulation generally outperforms the semilinear one. We use the same choices for our implementation of McCallum-Foster (Euler) and McCallum-Foster (Midpoint). 

We have observed slow convergence for the semilinear formulation when using the half-logSNR in \(x\) parametrisation with a $t$-uniform discretisation, and the scaled-noise parametrisation with a $u$-uniform discretisation. They do not produce satisfactory images in our setting of $48$ model evaluations. We therefore omit these combinations from further study.

\begin{figure}[tbp]
  \centering
  \begin{subfigure}[tbp]{0.6\linewidth}
    \centering
    \includegraphics[width=\linewidth]{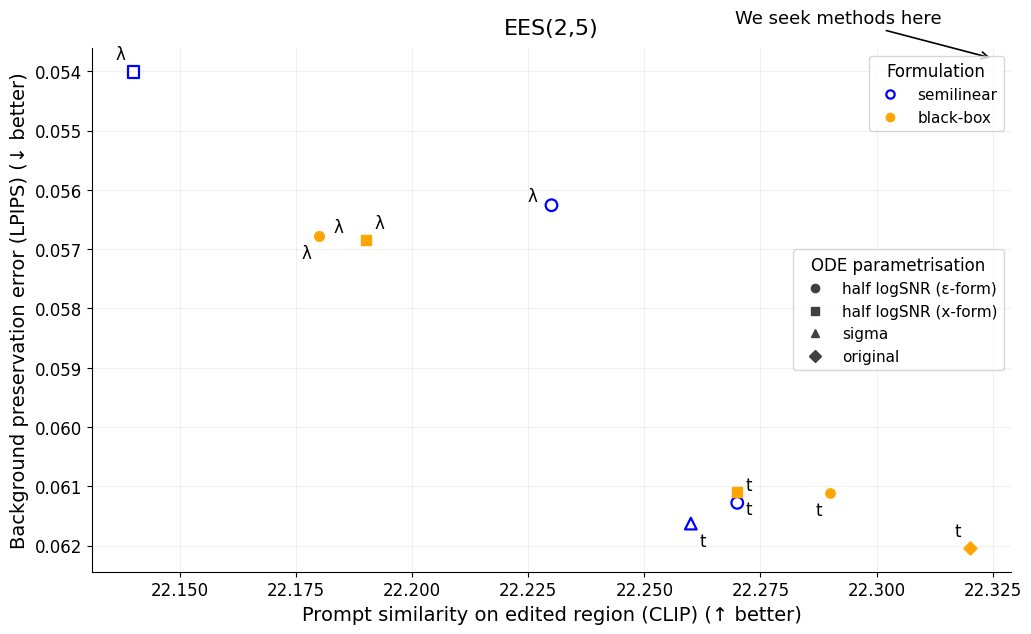}
  \end{subfigure}\hfill
  \begin{subfigure}[tbp]{0.6\linewidth}
    \centering
    \includegraphics[width=\linewidth]{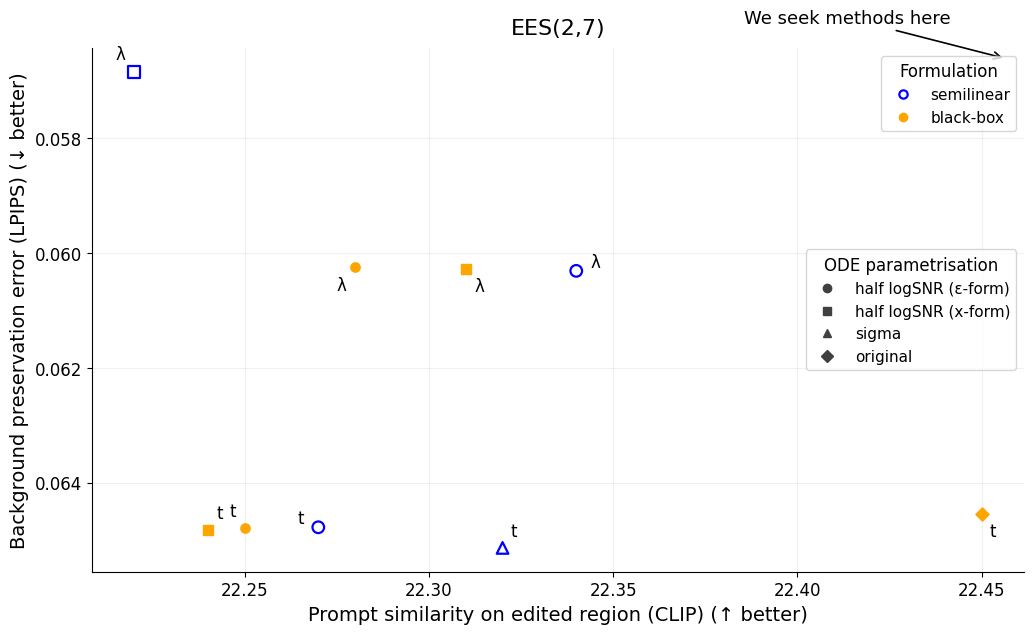}
  \end{subfigure}

  \vspace{0.8ex}

  \begin{subfigure}[tbp]{0.6\linewidth}
    \centering
    \includegraphics[width=\linewidth]{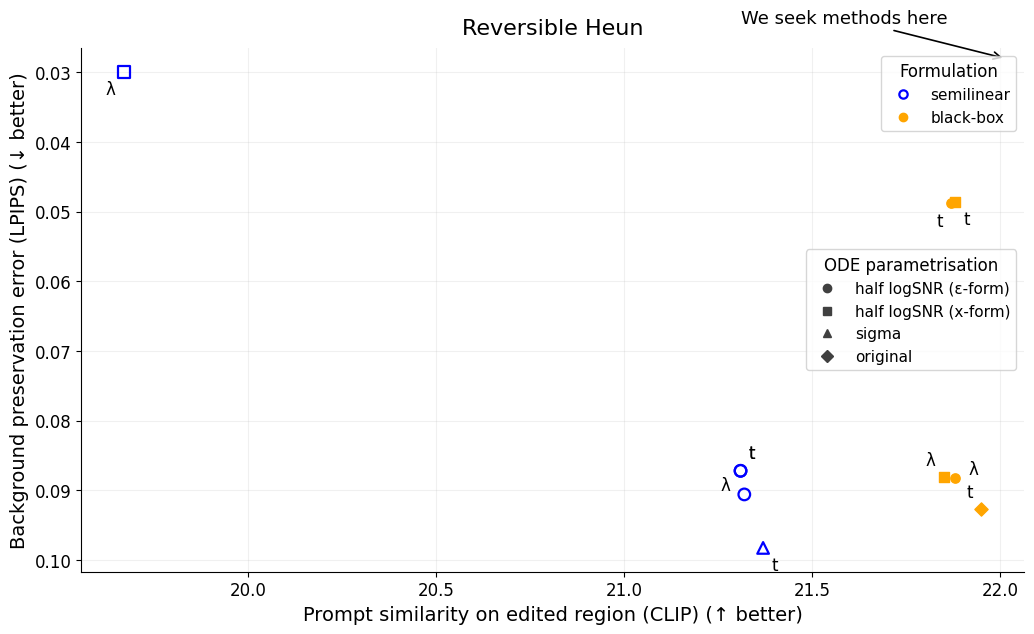}
  \end{subfigure}

  \caption{Ablations over solver configuration on the \textit{random images} category of PIE-Bench. Each panel plots edit prompt alignment against background preservation for EES(2,5), EES(2,7), and Reversible Heun. We vary three separate dimensions: ODE parametrisation (marker shape: original \eqref{eq:ode_t_original}, half logSNR in $\epsilon$ \eqref{eq:ode_lambda_eps} and $x$ \eqref{eq:ode_lambda_x0}, sigma \eqref{eq:ode_u_ddim}), discretisation variable (text label $t/\lambda$ indicates uniform steps in that variable), and formulation (colour indicates semilinear vs black-box). We report this using Smooth Diffusion for EES and Stable Diffusion 1.5 for Reversible Heun, since \cref{appendix:smoothing_methods} shows that smoothing methods do not improve Reversible Heun.}
  \label{fig:ablations}
\end{figure}

\subsection{Greyscale}
\label{sec:greyscale}

We additionally observe failures of non-EES reversible solvers even when the edit prompt is semantically close to the source prompt but requires a large structured visual change in the image (e.g., removing colour). In the greyscale conversion task, these solvers often retain residual colour or introduce artefacts, whereas EES produces a more consistent greyscale edit. \Cref{fig:example_figure} shows a representative example, and an extended grid with all solvers and additional examples is provided in \cref{fig:black_and_white_extended}.

\begin{figure}[tbp]
    \centering
    \includegraphics[width=\linewidth]{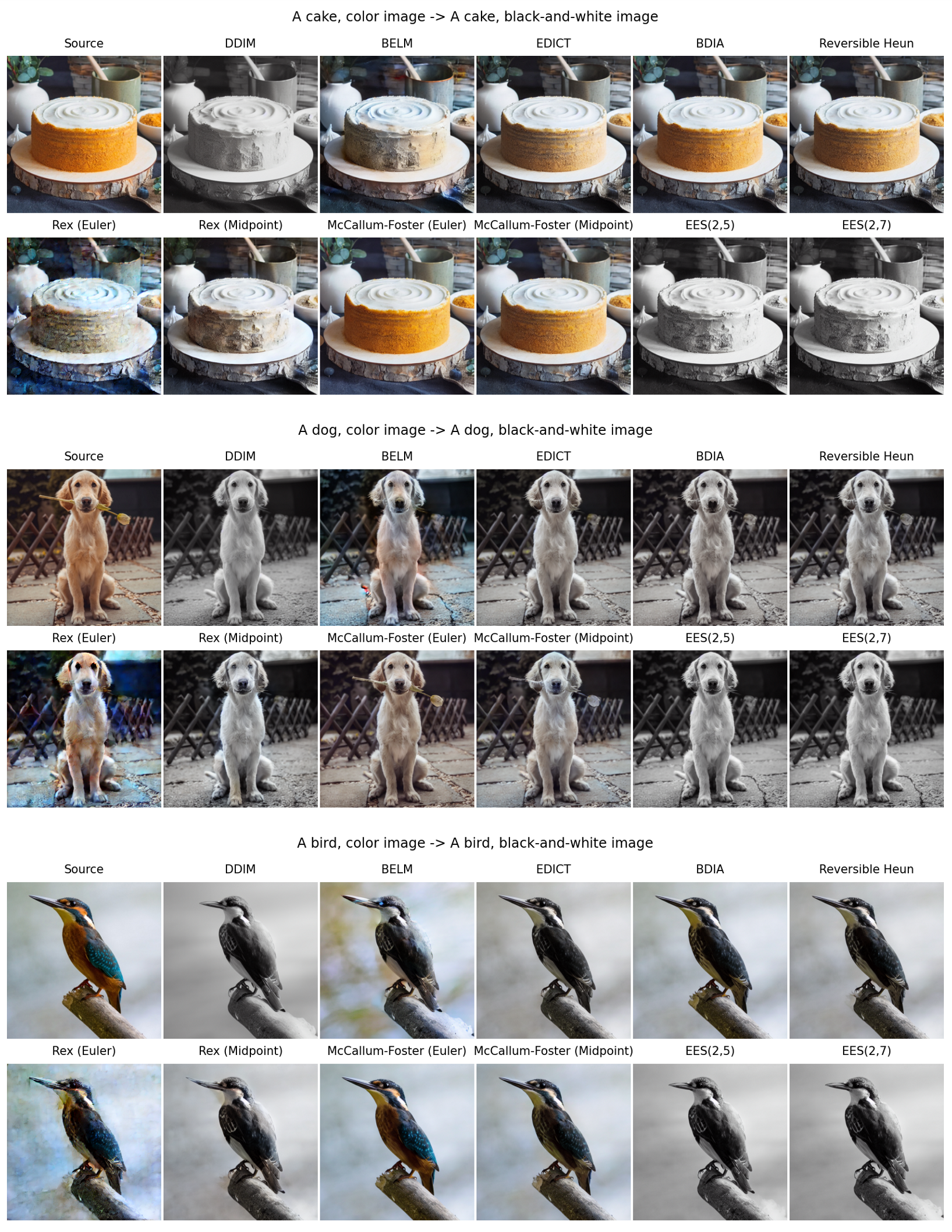}
    \caption{Extended grid for examples in \cref{fig:example_figure}. Greyscale conversion examples under Smooth Diffusion.}
    \label{fig:black_and_white_extended}
\end{figure}

\subsection{Stable Diffusion XL}
\label{sec:sdxl}
To show consistency across models, we reproduce the main results in \cref{fig:metrics_trade_off}, \cref{tab:solver_metrics}, and \cref{tab:clip_whole_sd_vs_smooth} using the Stable Diffusion XL (SDXL) \citep{podell2023sdxl} model instead of Stable Diffusion 1.5. In \cref{fig:sdxl}, we show the background preservation and edit quality trade-off on the \textit{small edits} task. Since there is no Smooth Diffusion checkpoint for SDXL, we instead use NPI as our smoothing method of choice, which we have shown in \cref{appendix:smoothing_methods} to have a similar effect. As in \cref{sec:experiments_editing}, we use the smoothing method only for the solvers for which it is beneficial. In \cref{tab:main_solver_metrics_sdxl_npi}, we show the equivalent of \cref{tab:solver_metrics} and \cref{tab:clip_whole_sd_vs_smooth} with all metrics for SDXL. For large edits, we use NPI with all solvers. We run all the experiments in this section on Category 0 of PIE-Bench.

\begin{figure}[tbp]
    \centering
    \includegraphics[width=\linewidth]{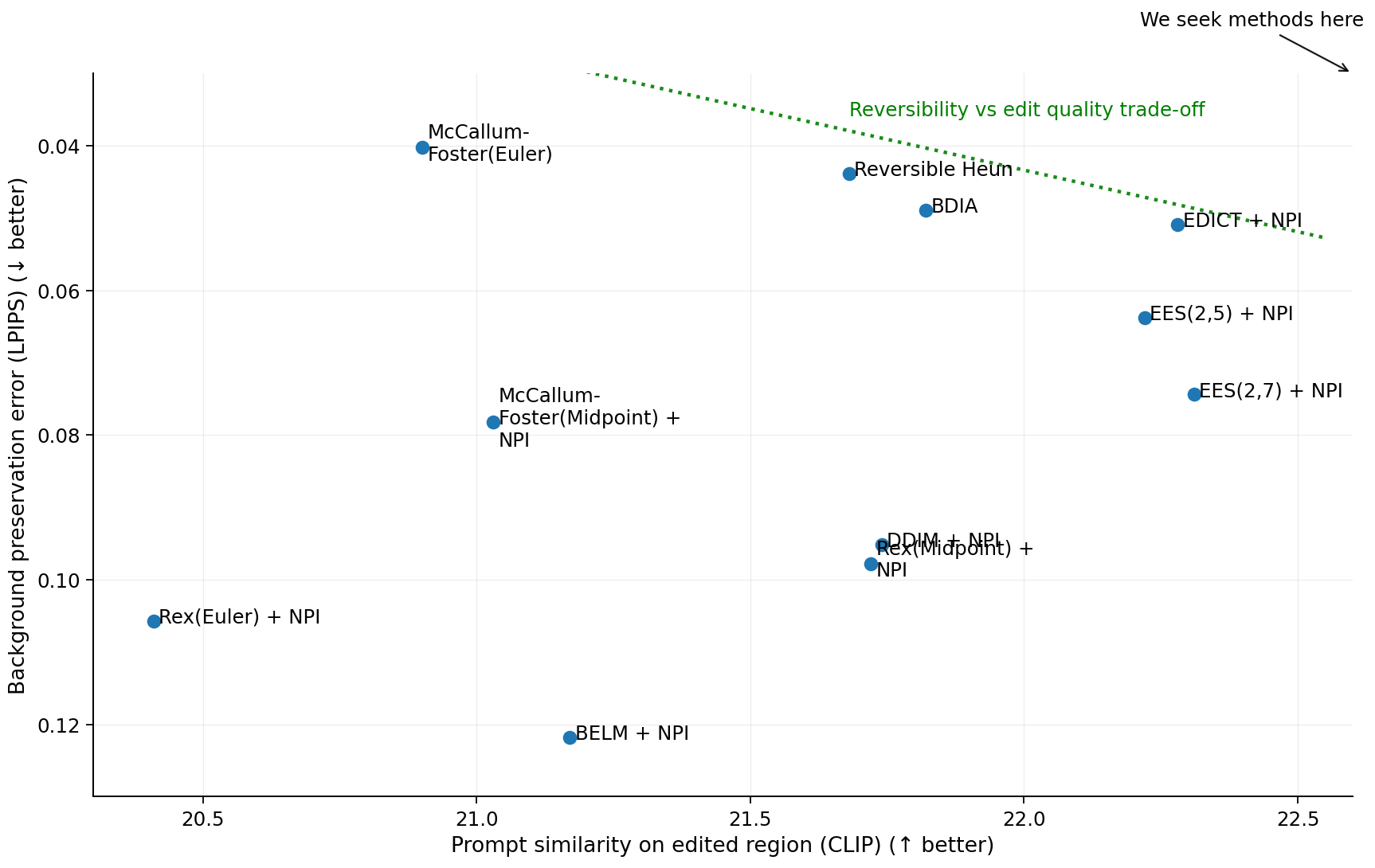}
    \caption{Equivalent of \cref{fig:metrics_trade_off} with Stable Diffusion 1.5 swapped for Stable Diffusion XL. Evaluated on Category 0 of PIE-Bench.}
    \label{fig:sdxl}
\end{figure}

\begin{table}[tbp]
\caption{
Equivalent of \cref{tab:solver_metrics} and \cref{tab:clip_whole_sd_vs_smooth} with Stable Diffusion 1.5 swapped for Stable Diffusion XL and NPI replacing Smooth Diffusion.
(\textbf{best}, \underline{second-best}, $^{\star}$ third-best among (near) reversible solvers.)
}
\label{tab:main_solver_metrics_sdxl_npi}
\vskip 0.10in
\centering
\scriptsize
\renewcommand{\arraystretch}{1.08}
\setlength{\tabcolsep}{2.6pt}
\resizebox{\linewidth}{!}{%
\begin{tabular}{@{}lccccc|ccc@{}}
\toprule
\textbf{(Near) Reversible Solver} &
\multicolumn{5}{c|}{\textbf{Small Edits}} &
\multicolumn{3}{c}{\textbf{Large Edits}} \\
\cmidrule(lr){2-6}
\cmidrule(lr){7-9}
&
\multicolumn{2}{c}{\textbf{Background Preservation}} &
\multicolumn{3}{c|}{\textbf{Edit Quality}} &
\multicolumn{3}{c}{\textbf{Edit Quality}} \\
\cmidrule(lr){2-3}
\cmidrule(lr){4-6}
\cmidrule(lr){7-9}
&
\textbf{PSNR}$\,\uparrow$ &
\textbf{LPIPS}$\times 10^3\,\downarrow$ &
\textbf{CLIP (Edited)}$\,\uparrow$ &
\textbf{PickScore}$\,\uparrow$ &
\textbf{ImageReward}$\,\uparrow$ &
\textbf{CLIP (Whole)}$\,\uparrow$ &
\textbf{PickScore}$\,\uparrow$ &
\textbf{ImageReward}$\,\uparrow$ \\
\midrule

EDICT$^{\dagger}$
& 30.17 & 50.89 & \underline{22.28} & 21.09$^{\star}$ & \underline{0.03}
& 23.59$^{\star}$ & 20.01$^{\star}$ & -0.49$^{\star}$ \\

Reversible Heun
& 31.66$^{\star}$ & \underline{43.88} & 21.68 & 20.97 & -0.06
& 21.06 & 19.00 & -1.35 \\

BDIA
& \underline{32.01} & 48.90$^{\star}$ & 21.82 & 21.00 & -0.15
& 19.93 & 18.73 & -1.58 \\

BELM$^{\dagger}$
& 27.03 & 121.78 & 21.17 & 20.22 & -0.34
& 18.37 & 18.09 & -1.67 \\

McCallum-Foster (Euler)
& \textbf{33.25} & \textbf{40.22} & 20.90 & 20.78 & -0.37
& 13.13 & 17.42 & -2.15 \\

McCallum-Foster (Midpoint)$^{\dagger}$
& 30.77 & 78.19 & 21.03 & 20.57 & -0.38
& 13.96 & 17.36 & -2.16 \\

Rex (Midpoint)$^{\dagger}$
& 27.26 & 97.75 & 21.72 & 20.52 & -0.16
& 20.30 & 18.36 & -1.57 \\

Rex (Euler)$^{\dagger}$
& 26.74 & 105.70 & 20.41 & 20.08 & -0.52
& 14.39 & 17.14 & -2.16 \\

\textbf{EES(2,5)}$^{\dagger}$
& 29.55 & 63.72 & 22.22$^{\star}$ & \underline{21.21} & 0.01$^{\star}$
& \underline{25.01} & \underline{21.07} & \underline{0.12} \\

\textbf{EES(2,7)}$^{\dagger}$
& 28.48 & 74.39 & \textbf{22.31} & \textbf{21.26} & \textbf{0.08}
& \textbf{25.11} & \textbf{21.09} & \textbf{0.18} \\

\bottomrule
\end{tabular}%
}

\vspace{0.03in}
\begin{minipage}{\linewidth}
\scriptsize
Solvers with $^{\dagger}$ are reported using NPI for \textit{small edits}. For \textit{large edits}, we report all solvers with NPI.
\end{minipage}

\vskip -0.10in
\end{table}

\subsection{Varying the Number of Function Evaluations}
\label{sec:nfe}
In \cref{fig:nfe}, we report how the trade-off described in \cref{fig:metrics_trade_off} behaves under varied model evaluations budgets.

\begin{figure}[tbp]
    \centering
    \includegraphics[width=\linewidth]{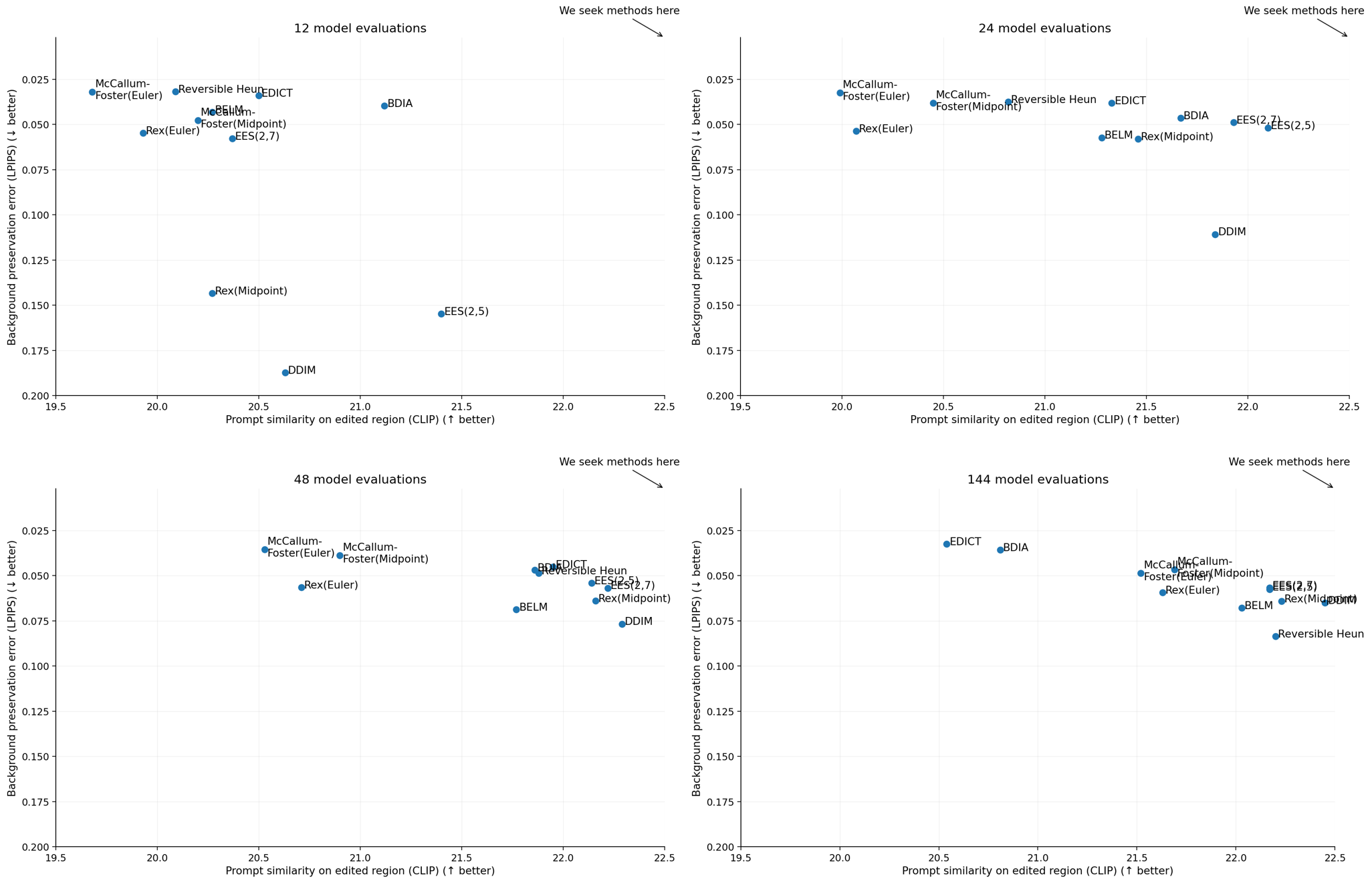}
    \caption{Equivalent of \cref{fig:metrics_trade_off} for 12, 24, 48 (original), and 144 model evaluations. Evaluated on Category 0 of PIE-Bench. As in \cref{sec:experiments_editing}, we use Smooth Diffusion on all solvers except Reversible Heun, BDIA, and McCallum-Foster(Euler).}
    \label{fig:nfe}
\end{figure}

\subsection{EDICT and BDIA hyperparameters}
\label{sec:edict_bdia_hyperparameters}

EDICT and BDIA include a hyperparameter controlling how strongly the solver follows the edited prompt, with recommended values in \([0.9,1.0]\). One might expect that tuning these values could recover stability under large edits. However, we find that varying \(\gamma\) (BDIA) and \(p\) (EDICT) does not resolve the failures above. \cref{fig:EDICT_BDIA_hyperparams_large_edits} illustrates this on an example from the task in \cref{sec:large_edits_and_stability}.

\begin{figure}[tbp]
    \centering
    \includegraphics[width=\linewidth]{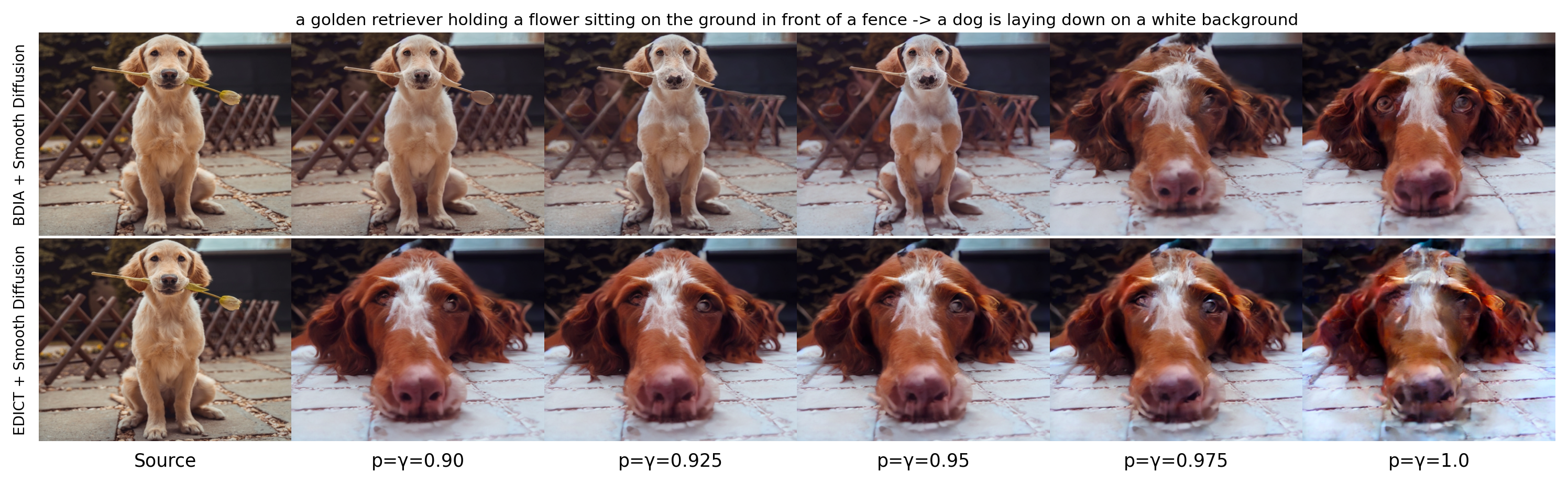}
    \caption{The impact of the BDIA hyperparameter \(\gamma\) and the EDICT hyperparameter \(p\) on the large-prompt-deviations task in \cref{sec:large_edits_and_stability}. We use Smooth Diffusion for all samples.}
    \label{fig:EDICT_BDIA_hyperparams_large_edits}
\end{figure}

\subsection{Terminal Latent Distribution under Inversion}
\label{sec:latent_terminal}

Motivated by recent observations that diffusion inversion can suffer from numerical singularities, causing the inverted terminal latent \(x_T\) to deviate substantially from the intended isotropic Gaussian prior \(\mathcal{N}(0,I)\) \cite{blasingame2025rex, staniszewski2024there}, we visualise the empirical distribution of the terminal latents \(x_T\) for the image shown in \cref{fig:grid_retriever}. \cref{fig:histograms} compares all solvers across three NFE settings against the standard normal density. While most methods including EES remain well aligned with \(\mathcal{N}(0,1)\), EDICT and BDIA become increasingly unstable as the NFE increases, with the variance of the inverted latent growing by several orders of magnitude. This provides a possible explanation for their degradation in CLIP score at higher NFE, as shown in \cref{sec:nfe}. See \citet[Appendix~J]{blasingame2025rex} for further discussion about this phenomenon.

\begin{figure}[tbp]
    \centering
    \includegraphics[
        width=\linewidth,
        height=0.82\textheight,
        keepaspectratio
    ]{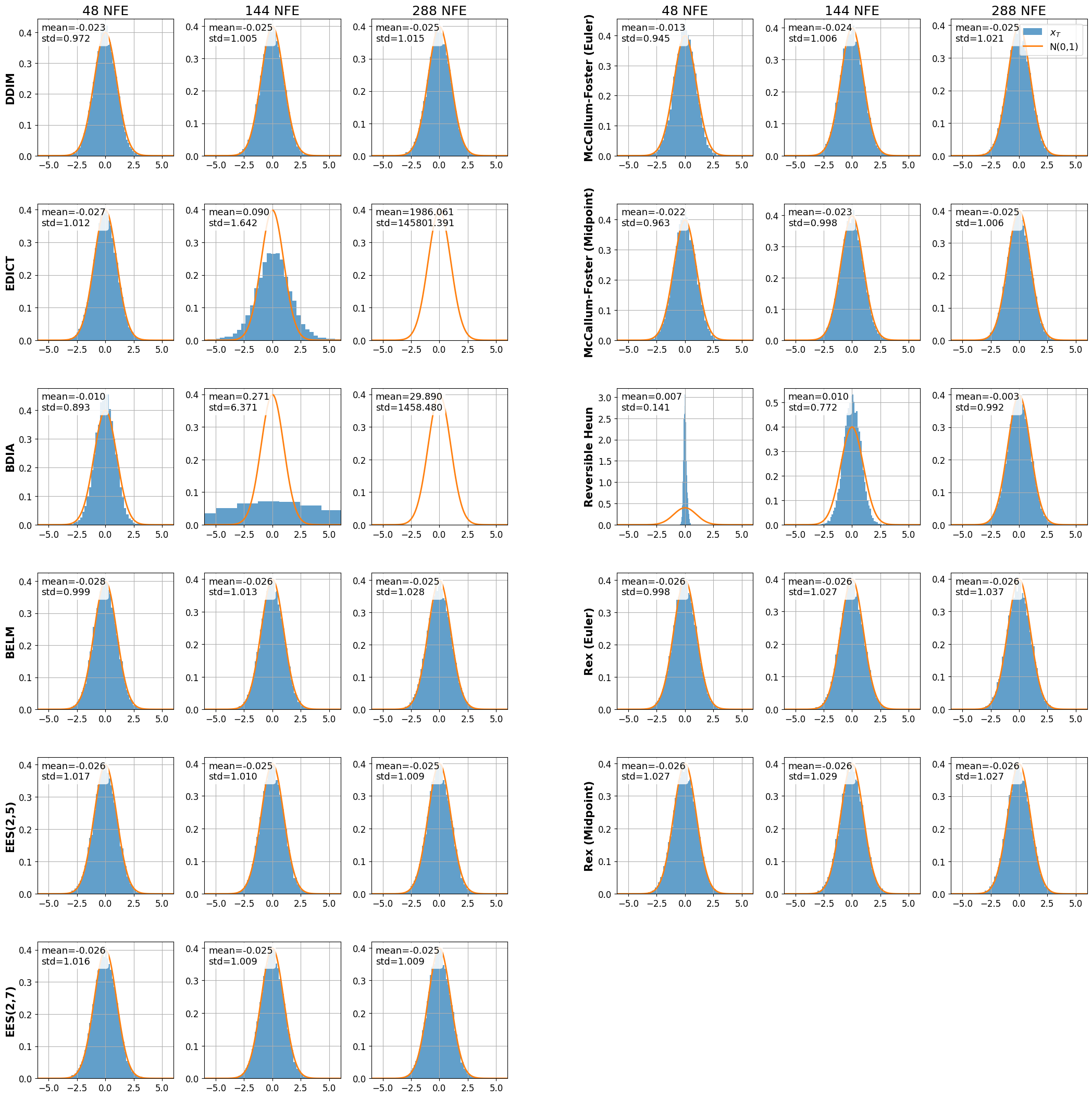}
    \caption{
    Empirical distribution of the inverted terminal latents \(x_T\) for the image in \cref{fig:grid_retriever}. Columns correspond to different NFE settings, and rows correspond to different solvers. Most solvers remain close to the Gaussian prior, whereas EDICT and BDIA exhibit extreme variance at larger NFE.
    }
    \label{fig:histograms}
\end{figure}

\subsection{Varying the Guidance Scale}
\label{sec:guidance_scale}

In \cref{fig:guidance_scale}, we study how the edit quality and background preservation trade-off changes as the guidance scale is varied. We evaluate guidance scales \(1.0\), \(2.0\), and \(3.0\) on Category~0 of PIE-Bench, using Smooth Diffusion for all solvers. As expected, increasing the guidance scale generally improves prompt alignment, but worsens background preservation.

\begin{figure}[tbp]
    \centering
    \includegraphics[width=\linewidth]{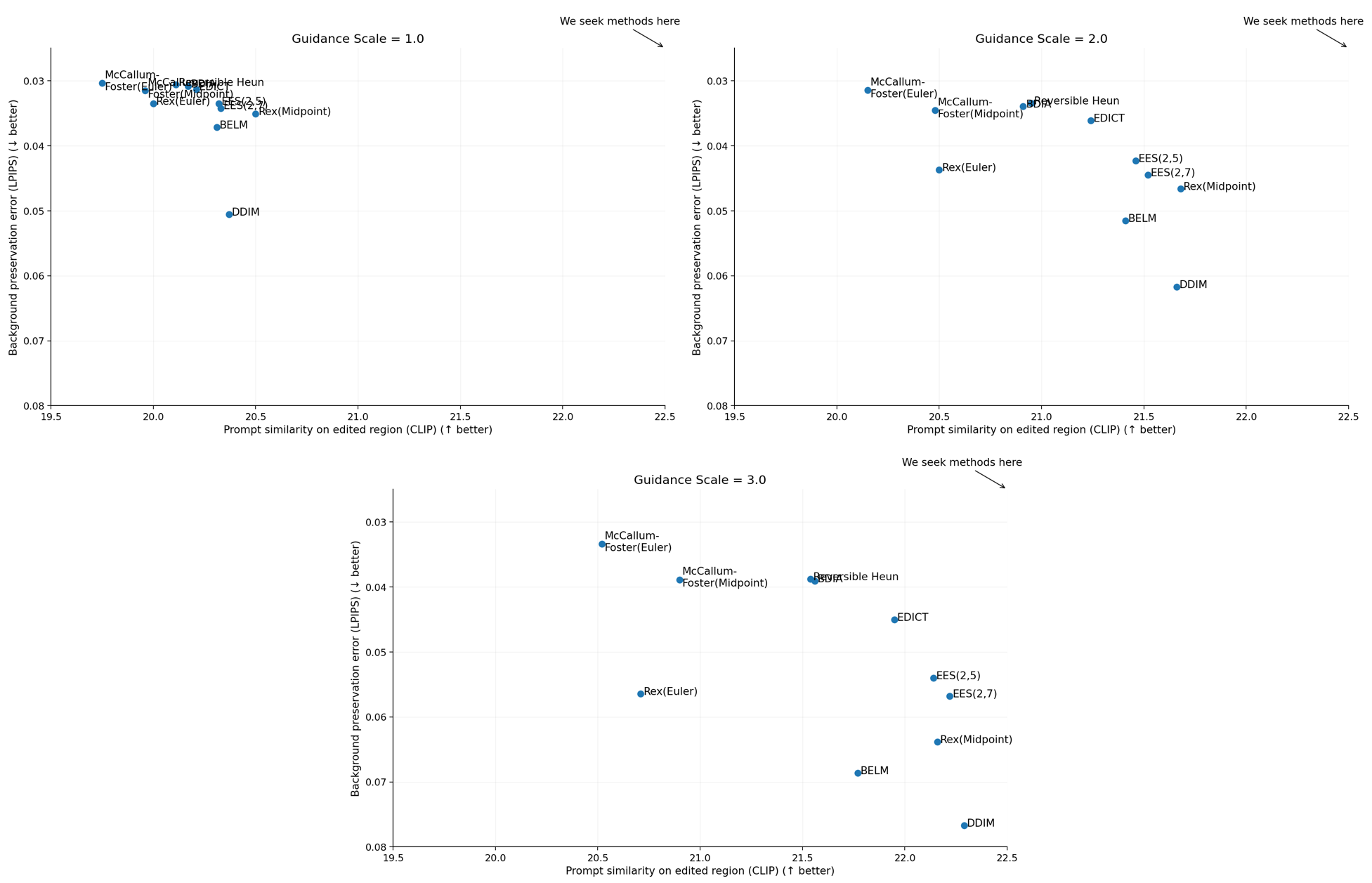}
    \caption{
    Equivalent of \cref{fig:metrics_trade_off} with the guidance scale varied from \(1.0\) to \(3.0\). Evaluated on Category~0 of PIE-Bench, using Smooth Diffusion for all solvers. Increasing the guidance scale improves prompt alignment but generally worsens background preservation.
    }
    \label{fig:guidance_scale}
\end{figure}

\section{Experiments Supplement}
\label{appendix:experiments}
\subsection{Pretrained Models}
\label{appendix:pretrained_models}
We use the following pretrained models in all experiments:

\textbf{Stable Diffusion 1.5}: 

\url{https://huggingface.co/stable-diffusion-v1-5/stable-diffusion-v1-5}

\textbf{Smooth Diffusion}: 

\url{https://huggingface.co/shi-labs/smooth-diffusion-lora}

\textbf{Stable Diffusion XL}: 

\url{https://huggingface.co/stabilityai/stable-diffusion-xl-base-1.0}

\subsection{Code Implementations}
\label{appendix:code_implementations}
We implement BELM, EDICT, DDIM, and BDIA using the public BELM codebase: \url{https://github.com/zituitui/BELM/tree/main}. We implement Rex using the code provided in \citet[Appendix H]{blasingame2025rex}. We implement McCallum–Foster directly from the update equations in \cref{appendix:mccallum_foster} using Euler and Midpoint base solvers.

We note that the BELM repository contains a bug in the BDIA implementation, which has been reported in an open issue (\#13). We correct this error in our code.

\subsection{Wall-Clock Time Comparison}

As stated in the image editing setup in \cref{sec:experiments}, we align all methods by the number of function evaluations (NFE), so that higher-order solvers do not receive an unfair compute advantage. Wall-clock time is dominated by model evaluations, while the numerical overhead of each solver is comparatively small. Using the Smooth Diffusion SD 1.5 checkpoint introduces negligible overhead. We report wall-clock timings for the standard 48-evaluation editing setup in \cref{tab:wall_clock}, measured on a single NVIDIA GeForce RTX 4070 Ti GPU.

\begin{table}[tbp]
\centering
\caption{
Wall-clock time for the standard 48-NFE editing setup. All methods use the same total number of model evaluations. Timings are reported for SD v1.5 with and without Smooth Diffusion.
}
\label{tab:wall_clock}
\resizebox{\linewidth}{!}{
\begin{tabular}{lcccc}
\toprule
Solver & Steps & Model evals/step & SD v1.5 time (s) & SD v1.5 + Smooth Diffusion time (s) \\
\midrule
DDIM & 48 & 1 & 5.35 & 5.62 \\
EDICT & 24 & 2 & 5.26 & 5.53 \\
BDIA & 48 & 1 & 5.28 & 5.54 \\
Rev. Heun & 24 & 2 & 5.39 & 5.67 \\
\midrule
EES(2,5) & 16 & 3 & 5.35 & 5.61 \\
EES(2,7) & 12 & 4 & 5.33 & 5.60 \\
\bottomrule
\end{tabular}
}
\end{table}

\subsection{Supplemental Figures}
We present the following supplemental figures.

\begin{itemize}
    \item \cref{fig:edit_grid_retriever_extended}: Extended small-edit example (retriever).
    \item \cref{fig:edit_grid_storm_trooper_extended}: Extended small-edit example (storm trooper).
    \item \cref{fig:edit_grid_owl_extended}: Extended small-edit example (owl).
    \item \cref{fig:large_edits_retriever_extended}: Extended large-edit example (retriever).
\end{itemize}

\begin{figure}[tbp]
    \centering
    \includegraphics[width=\linewidth]{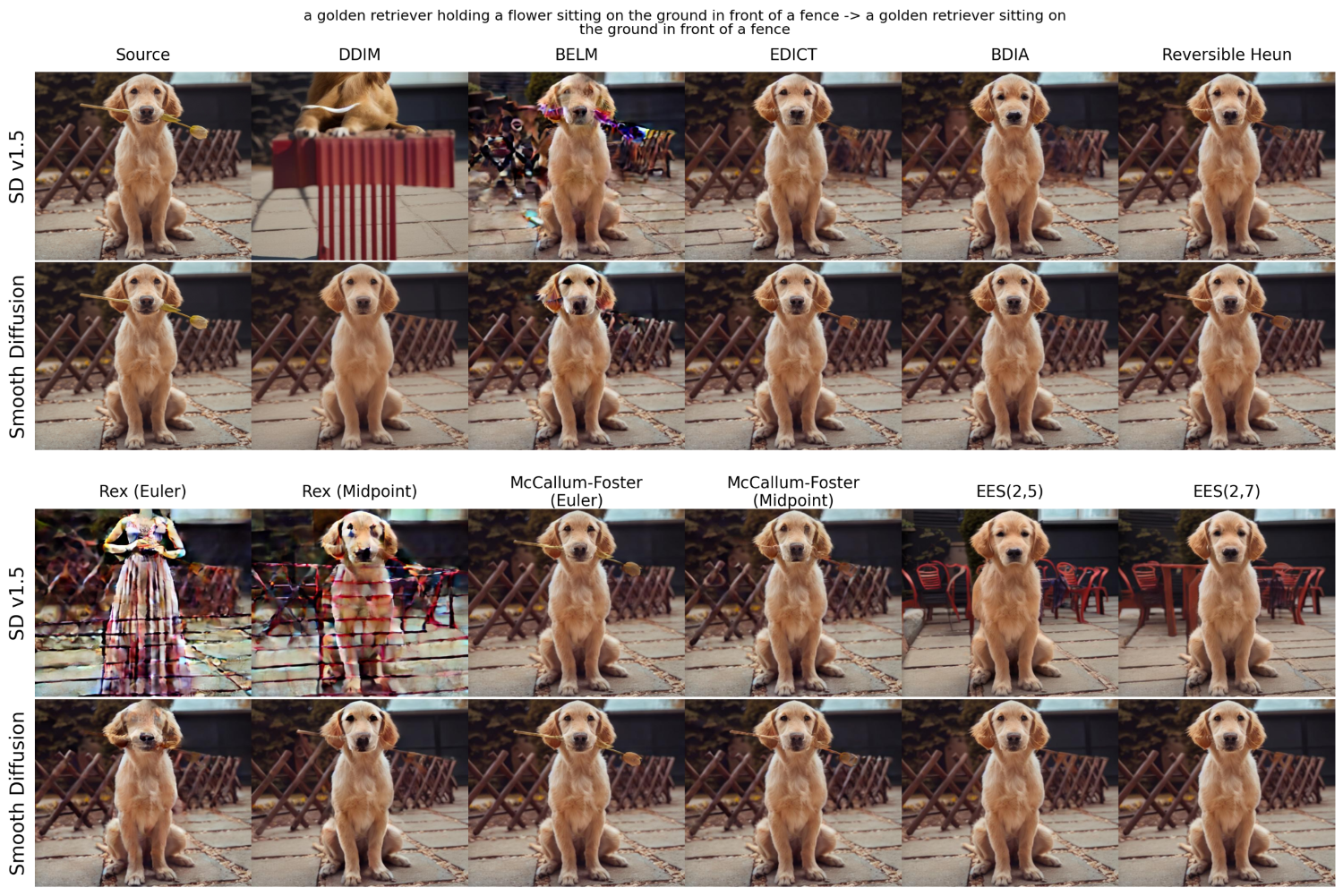}
    \caption{Extended grid for examples in \cref{fig:example_figure}. Qualitative editing comparison for all solvers under Stable Diffusion 1.5 (top row) and Smooth Diffusion (bottom row). Smooth Diffusion improves background preservation (e.g., the fence) while maintaining the intended edit.}
    \label{fig:edit_grid_retriever_extended}
\end{figure}
\begin{figure}[tbp]
    \centering
    \includegraphics[width=\linewidth]{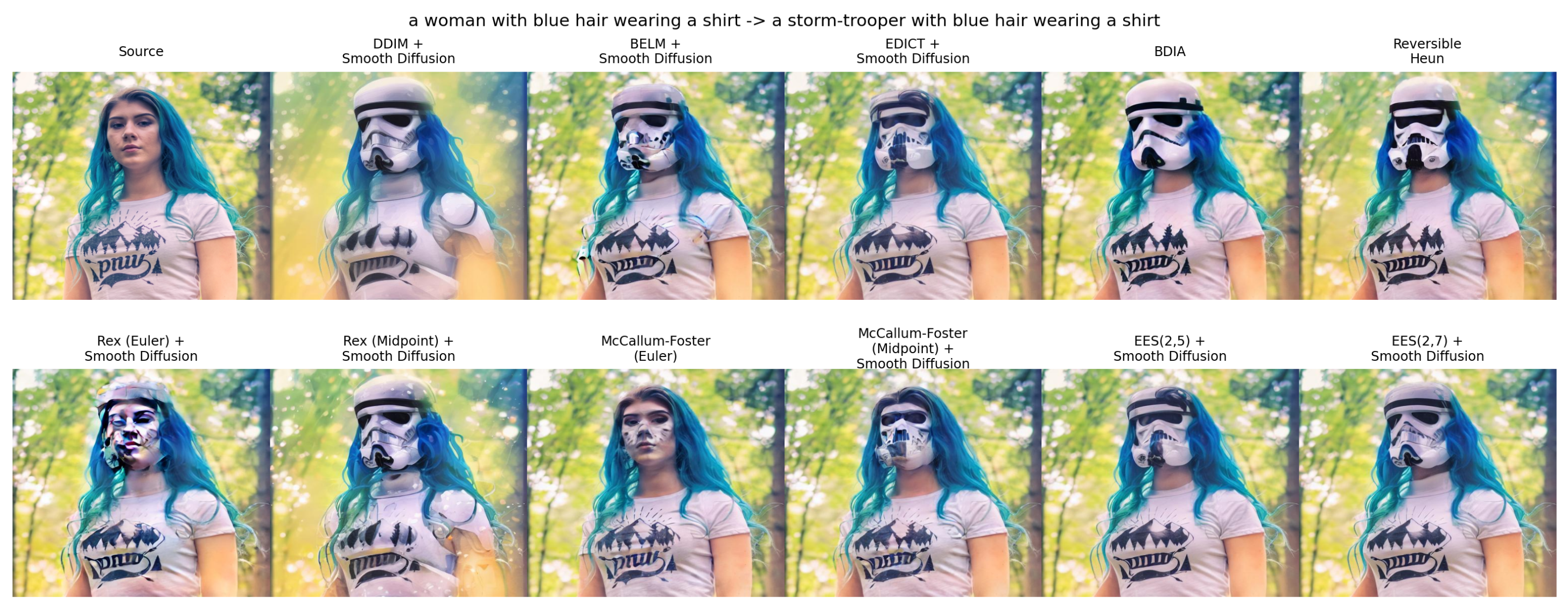}
    \caption{Extended grid for \cref{fig:edit_grid_storm_trooper}. We include all solvers for comparison.}
    \label{fig:edit_grid_storm_trooper_extended}
\end{figure}
\begin{figure}[tbp]
    \centering
    \includegraphics[width=\linewidth]{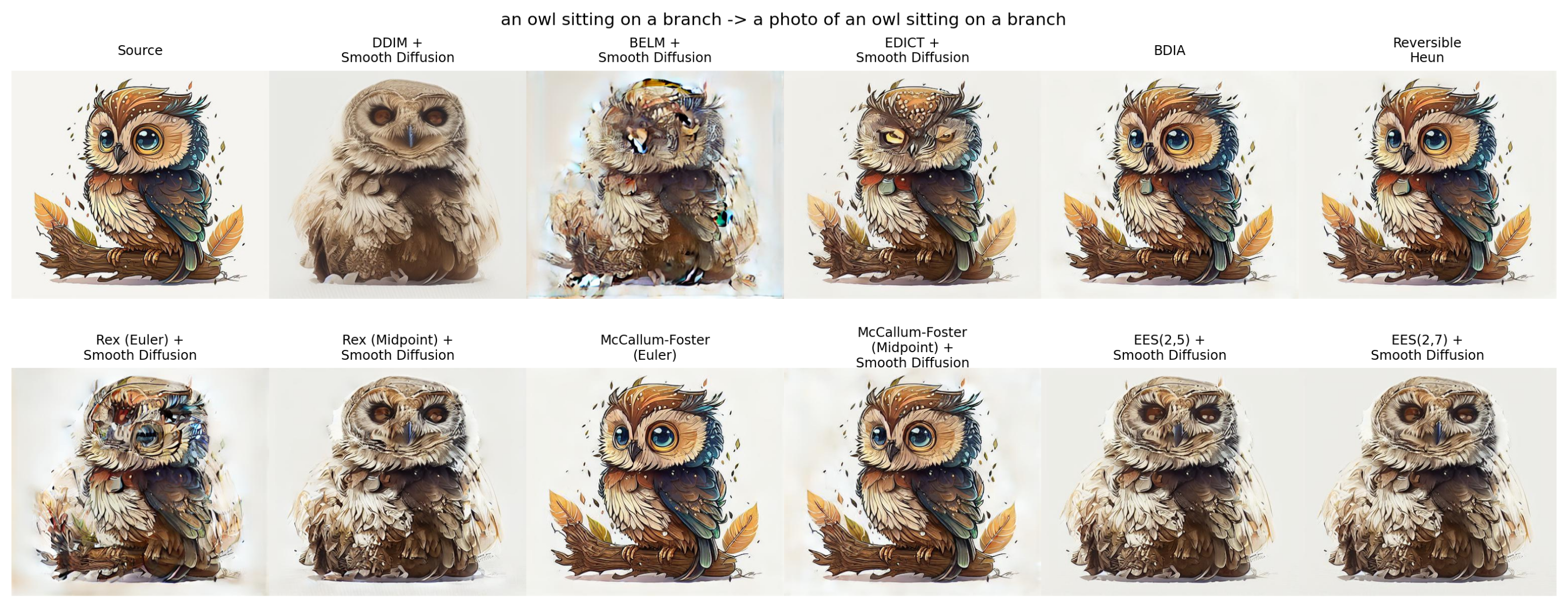}
    \caption{Extended grid for examples in \cref{fig:example_figure}. We include all solvers for comparison.}
    \label{fig:edit_grid_owl_extended}
\end{figure}
\begin{figure}[tbp]
    \centering
    \includegraphics[width=\linewidth]{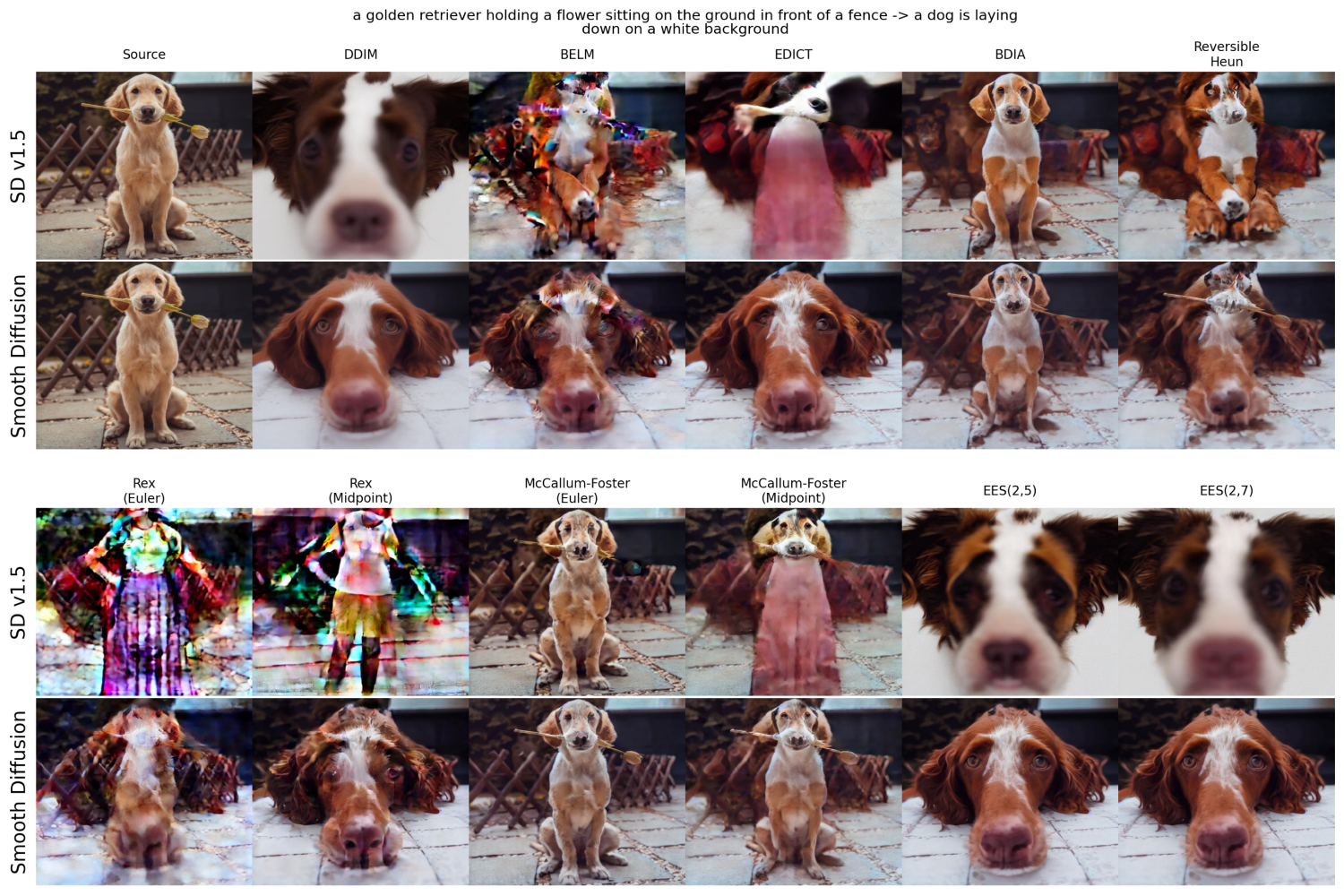}
    \caption{Extended grid for examples in  \cref{fig:example_figure}. Non-EES reversible solvers often fail when the edit requires a large trajectory deviation. We show results under Stable Diffusion v1.5 (top rows) and Smooth Diffusion (bottom rows).}
    \label{fig:large_edits_retriever_extended}
\end{figure}

\subsection{Dataset}
We use PIE-Bench \citep{ju2023direct} for image editing evaluation. Its annotations are licensed under CC-BY-NC-SA 4.0. The qualitative examples in this paper are shown for non-commercial academic use only.

\end{document}